\documentclass{article}

\usepackage{microtype}
\usepackage{graphicx}
\usepackage{subcaption}
\usepackage{booktabs} % for professional tables

\usepackage{hyperref}

\usepackage[accepted]{icml2026}

\usepackage{fontawesome5}

\usepackage{amsmath}
\usepackage{amssymb}
\usepackage{mathtools}
\usepackage{amsthm}

\usepackage{color}
\usepackage[normalem]{ulem}

\usepackage[capitalize,noabbrev]{cleveref}

\theoremstyle{plain}
\newtheorem{theorem}{Theorem}[section]
\newtheorem{proposition}[theorem]{Proposition}
\newtheorem{lemma}[theorem]{Lemma}

\theoremstyle{definition}

\newtheorem{assumption}[theorem]{Assumption}
\newtheorem{problem}[theorem]{Problem}
\theoremstyle{remark}
\newtheorem{remark}[theorem]{Remark}

\usepackage[textsize=tiny]{todonotes}

\icmltitlerunning{Generative Neural Operators through Diffusion Last Layer}

\begin{document}

\twocolumn[
  \icmltitle{Generative Neural Operators through Diffusion Last Layer}

  % It is OKAY to include author information, even for blind submissions: the
  % style file will automatically remove it for you unless you've provided
  % the [accepted] option to the icml2026 package.

  % List of affiliations: The first argument should be a (short) identifier you
  % will use later to specify author affiliations Academic affiliations
  % should list Department, University, City, Region, Country Industry
  % affiliations should list Company, City, Region, Country

  % You can specify symbols, otherwise they are numbered in order. Ideally, you
  % should not use this facility. Affiliations will be numbered in order of
  % appearance and this is the preferred way.
  \icmlsetsymbol{cor}{*}

  \begin{icmlauthorlist}
    \icmlauthor{Sungwon Park}{kor,cmu}
    \icmlauthor{Anthony Zhou}{cmu}
    \icmlauthor{Hongjoong Kim}{kor}
    \icmlauthor{Amir Barati Farimani}{cmu}
    % \icmlauthor{Firstname5 Lastname5}{yyy}
    % \icmlauthor{Firstname6 Lastname6}{sch,yyy,comp}
    % \icmlauthor{Firstname7 Lastname7}{comp}
    % %\icmlauthor{}{sch}
    % \icmlauthor{Firstname8 Lastname8}{sch}
    % \icmlauthor{Firstname8 Lastname8}{yyy,comp}
    %\icmlauthor{}{sch}
    %\icmlauthor{}{sch}
  \end{icmlauthorlist}

  \icmlaffiliation{kor}{Korea University, Seoul, South Korea}
  \icmlaffiliation{cmu}{Carnegie Mellon University, Pittsburgh, USA}
  % \icmlaffiliation{sch}{School of ZZZ, Institute of WWW, Location, Country}

  \icmlcorrespondingauthor{Hongjoong Kim}{hongjoong@korea.ac.kr}
  \icmlcorrespondingauthor{Amir Barati Farimani}{barati@cmu.edu}

  % You may provide any keywords that you find helpful for describing your
  % paper; these are used to populate the "keywords" metadata in the PDF but
  % will not be shown in the document
  \icmlkeywords{Neural Operators, Generative Models, Uncertainty Quantification}

  \vskip 0.3in
]

% this must go after the closing bracket ] following \twocolumn[ ...

% This command actually creates the footnote in the first column listing the
% affiliations and the copyright notice. The command takes one argument, which
% is text to display at the start of the footnote. The \icmlEqualContribution
% command is standard text for equal contribution. Remove it (just {}) if you
% do not need this facility.

% Use ONE of the following lines. DO NOT remove the command.
% If you have no special notice, KEEP empty braces:
\printAffiliationsAndNotice{}  % no special notice (required even if empty)
% Or, if applicable, use the standard equal contribution text:
% \printAffiliationsAndNotice{\icmlEqualContribution}

\begin{abstract}
    Neural operators provide a powerful framework for learning discretization invariant mappings between function spaces, but standard deterministic models do not capture predictive uncertainty. We introduce \emph{diffusion last layer} (DLL), a modular probabilistic output head for neural operator backbones. DLL represents target fields through an input dependent low rank expansion inspired by the Karhunen--Lo\`eve expansion and learns a conditional diffusion model over the corresponding coefficient space. This design enables efficient distributional modeling while preserving the structural advantages of operator learning. On stochastic PDE benchmarks with random forcing, DLL achieves strong distributional fidelity and performs competitively with pixel space and conventional latent diffusion baselines. In deterministic long horizon rollout tasks, DLL improves rollout stability over the underlying backbone and provides useful estimates of predictive uncertainty under compounding autoregressive errors. These results suggest that diffusion modeling in learned coefficient spaces offers a practical route to uncertainty aware neural operators. Code is available at \href{https://github.com/sungwpark/dll-no}{\faGithub\, github.com/sungwpark/dll-no}.
\end{abstract}

\section{Introduction}

\begin{figure}[t]
    \centering
    \includegraphics[width=0.75\linewidth]{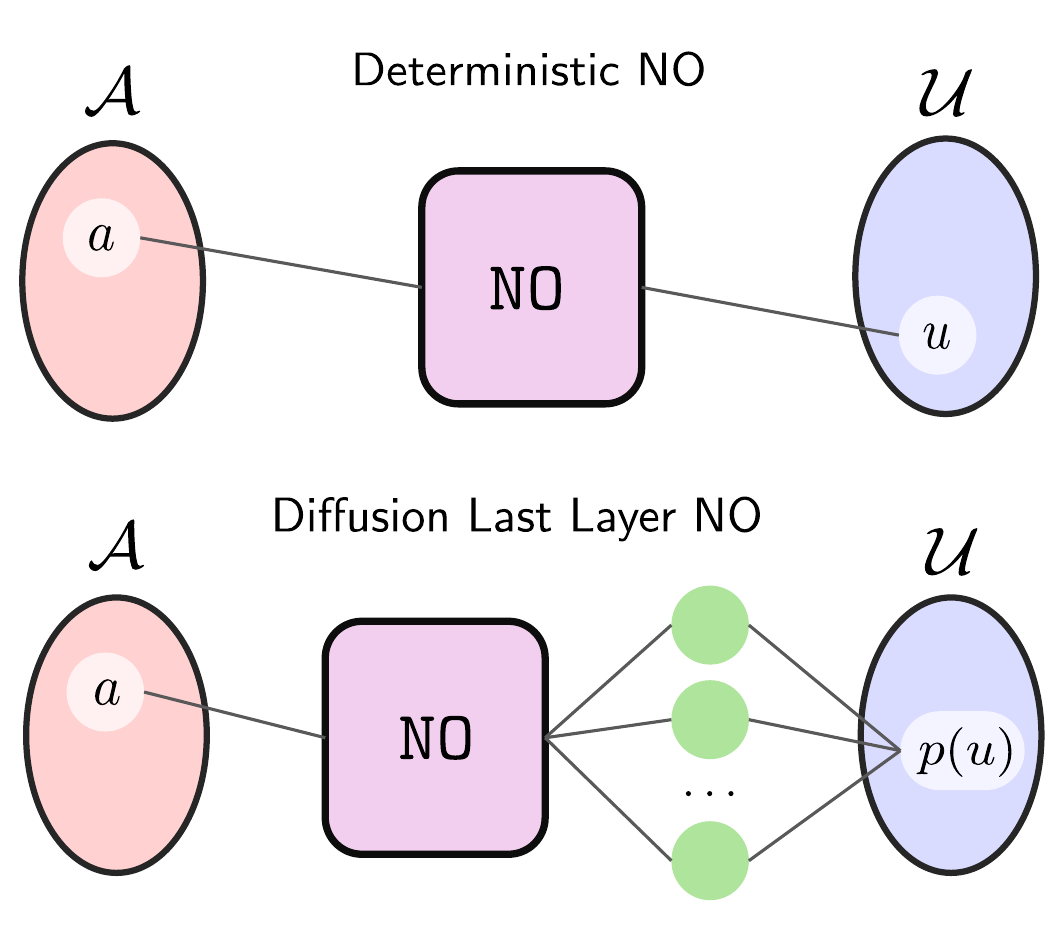}
    \caption{Deterministic vs.\ generative neural operators. A standard neural operator (top) maps an input field $a\in\mathcal A$ to a single prediction $u\in\mathcal U$. Our \emph{Diffusion Last Layer} (bottom) turns the same backbone into a conditional generator by attaching a lightweight diffusion head, producing a full predictive distribution $p(u\,|\,a)$ over functions rather than a point estimate.}
    \label{fig:dll_fno}
\end{figure}

To approximate function-to-function mappings with neural surrogates, neural operators have emerged as a general paradigm and have been studied extensively across diverse architectures \cite{Kovachki23, azizzadenesheli24}. Representative approaches include DeepONet, which parameterizes operators via a branch trunk decomposition \cite{Lu21}, and the Fourier neural operator (FNO), which employs global spectral convolutions to enable discretization-robust learning \cite{Li20}. Subsequent work has expanded this line of research through improved scalability \cite{Tran23}, extensions to geometric and irregular domains \cite{Li23c, Li23a}, spherical and multiscale encoder–decoder operator designs \cite{Bonev23, Rahman23}, and attention-based operator models that better capture long-range dependencies in complex PDE systems \cite{Li23b, Hao23, Hao24, Herde24}.

In many scientific applications, the target field exhibits intrinsic randomness arising from stochastic forcing, unresolved subgrid scale effects, uncertain coefficients, or chaotic sensitivity, making accurate uncertainty quantification (UQ) essential for reliable surrogate modeling \cite{Xiu10}. Motivated by this need, recent advances in scientific machine learning have developed probabilistic operator learning frameworks that model conditional distributions over functions, rather than returning deterministic point predictions \cite{Psaros23}. Representative approaches include Bayesian neural operators \cite{Lin23, Weber24, Magnani25a, Magnani25b}, which aim to capture epistemic uncertainty via posterior inference over model parameters. Complementary lines of work instead construct generative operator surrogates that directly sample diverse realizations from the conditional solution distribution \cite{Bulte25}, or train function space generative models using neural operator and related function space parameterizations \cite{Rahman22, Du24, Hu25, Shi25}.

Meanwhile, recent progress in diffusion models \cite{Sohl15, Ho20, Song21} and flow matching \cite{Liu23, Lipman23, Albergo25} for high-dimensional fields has provided a new route to probabilistic PDE surrogates. These generative frameworks have been actively explored in scientific forecasting and data assimilation, particularly under sparse or partial observations \cite{Lippe23, Shysheya24, Huang24}. To improve scalability, latent generative formulations compress the solution state space while largely preserving predictive fidelity \cite{Rozet25, Zhou25b}. Related encoder based designs further extend these models to irregular domains and geometric settings, enabling conditional generation beyond regular grids \cite{Zhou25a, Wang25}.

In this work, we propose the diffusion last layer (DLL), a lightweight probabilistic module for neural operator backbones, as illustrated in Figure~\ref{fig:dll_fno}. DLL preserves the backbone's discretization invariance and geometry awareness while extending deterministic operators to conditional generative modeling. This is achieved by training the diffusion head in a low-dimensional coefficient space associated with input dependent basis functions produced by an operator encoder, rather than directly in pixel space. This enables efficient high resolution sampling while retaining the structural advantages of neural operators. Empirically, DLL captures stochastic PDE solution distributions under random forcing with strong distributional fidelity, and improves long horizon rollout stability with useful uncertainty estimates in deterministic settings.

\section{Background}

\subsection{Operator Learning}

Many scientific relationships, especially in physical simulation, can be formulated as \emph{operators}:
\begin{align*}
    \mathcal{G}^\dagger : \mathcal{A} \to \mathcal{U},
\end{align*}
where $\mathcal{A}$ and $\mathcal{U}$ are function spaces and the output is determined by a deterministic rule \cite{Kovachki23, azizzadenesheli24}.

For stochastic systems, it is natural to generalize deterministic operators to \emph{stochastic operators}:
\begin{align*}
    \mathcal{G}^\ddagger : \mathcal{A} \to \mathcal{P}(\mathcal{U}),
\end{align*}
where $\mathcal{P}(\mathcal{U})$ denotes the space of probability measures on $\mathcal{U}$. Equivalently, $\mathcal{G}^\ddagger(a)$ specifies a conditional distribution over outputs $u \mid a$, so that a single input function induces a distribution of possible solutions rather than a unique realization \cite{Bulte25}.

Deterministic operators are recovered as a special case: if $\mathcal{G}^\ddagger(a) = \delta_{\mathcal{G}^\dagger(a)}$, then the conditional law collapses to a Dirac measure concentrated at $\mathcal{G}^\dagger(a)$.

We adopt a unified probabilistic formulation of operator learning. Given a dataset
\begin{align} \label{eq:dataset}
    \mathcal{D} = \{(a^{(i)}, u^{(i)})\}_{i=1}^{N},
    \qquad u^{(i)} \sim \mathcal{G}^\ddagger(a^{(i)}),
\end{align}
the goal is to learn a parameterized operator $\mathcal{G}_\theta$ that approximates the ground truth mapping. Depending on whether the conditional distribution $\mathcal{G}^\ddagger(a)$ is degenerate or genuinely stochastic, we distinguish two regimes of operator learning.

\begin{problem}[Stochastic Problem] \label{pr:stochastic}
    Given the dataset $\mathcal{D}$ in \eqref{eq:dataset}, the goal is to learn an approximation of the ground truth stochastic operator $\mathcal{G}_\theta \stackrel{d}{\approx} \mathcal{G}^\ddagger$.
\end{problem}

This setting arises in stochastic dynamical systems with random forcing, unresolved microscale physics, and other regimes where output variability is intrinsic. Such stochasticity naturally appears in SPDE learning \cite{Salvi22, Chen24, Shi26} and in weather forecasting, where probabilistic prediction and ensemble uncertainty quantification are essential \cite{Pathak22,Price25}.

\begin{problem}[Deterministic Problem] \label{pr:deterministic}
    Given the same dataset $\mathcal{D}$ in \eqref{eq:dataset}, there is a ground truth deterministic operator $\mathcal{G}^\dagger$, and the data are collected without randomness:
    \begin{align*}
        u^{(i)} = \mathcal{G}^\dagger(a^{(i)}) \qquad \forall i .
    \end{align*}
    The goal is to learn an approximation $\mathcal{G}_\theta \approx \mathcal{G}^\dagger$.
\end{problem}

This setting arises in surrogate modeling for deterministic physical systems, where one learns a fast approximation of a high-fidelity PDE solver mapping inputs to solution fields. Such surrogates enable rapid parameter sweeps, inverse problems, and design optimization \cite{Kovachki23,azizzadenesheli24}. Even in this deterministic setting, uncertainty quantification is often desired by modeling a distribution over outputs to capture epistemic uncertainty from limited data or model misspecification.

Finally, although these problem classes are often studied using different model families, we adopt a unified perspective in which conditional diffusion models offer a flexible framework for treating deterministic and intrinsically stochastic operators within a common probabilistic formulation, while also providing a natural route to uncertainty quantification when needed.

\subsection{Conditional Diffusion Models}

In this subsection, we review conditional diffusion models from the perspective of operator learning, with the goal of modeling distributions over solution fields in stochastic operator learning. While several recent works formulate diffusion processes directly in infinite-dimensional function spaces \cite{Lim23, Kerrigan23, Kerrigan24, Lim25, Shi25}, our DLL framework instead operates on a compact finite-dimensional coefficient representation.

We slightly abuse notation and write the dataset as condition target pairs
\begin{align} \label{eq:dataset_dm}
    \mathcal{D} = \{ (c^{(i)}, x^{(i)}) \}_{i=1}^N
\end{align}
interchangeably with \eqref{eq:dataset} for notational convenience. In \eqref{eq:dataset_dm}, we assume the target output admits a finite-dimensional representation $x \in \mathbb{R}^{d_x}$ (e.g., latent variables or low-rank coefficient vectors).

We begin by recalling that diffusion models and flow matching models can be viewed under a unified generative framework \cite{Lai25, Gao25}. Throughout this paper, we refer to this unified family simply as \emph{diffusion models}. The objective is the standard conditional generative modeling problem of learning the conditional law $p(x \mid c)$.

To this end, diffusion models construct a denoising mechanism that transports a simple noise distribution $p_{\mathrm{noise}}$ to the target conditional distribution $p(\cdot \mid c)$. We introduce the forward noising process
\begin{align}\label{eq:noising_process}
    x_t = a_t x + b_t \epsilon, \qquad 0 \le t \le 1,
\end{align}
where $\epsilon \sim p_{\mathrm{noise}}$ and $a_t,b_t$ define the noise schedule. Throughout this paper, we use the linear schedule $a_t=1-t$ and $b_t=t$. For notational convenience, we write $T=1$ for the terminal diffusion time.

Given \eqref{eq:noising_process}, the goal is to learn a reverse time dynamics that enables sampling from $p(x \mid c)$. Among several equivalent parameterizations, we focus on the \emph{flow based} formulation via \emph{velocity prediction} for clarity \cite{Lipman23, Liu23}. In this framework, there exists a velocity field $v(\cdot,t,c)$ such that the conditional densities $\rho_t(\cdot \mid c)=p(x_t \mid c)$ satisfy the continuity equation
\begin{align*}
    \partial_t \rho_t + \nabla \cdot (\rho_t v) = 0,
    \qquad \rho_T(\cdot \mid c)=p_{\mathrm{noise}}(\cdot).
\end{align*}
Sampling is then performed by integrating the probability flow ODE backward in time:
\begin{align*}
    \mathrm{d}x_t = v(x_t,t,c)\,\mathrm{d}t, 
    \qquad x_T \sim p_{\mathrm{noise}} .
\end{align*}

The remaining task is to learn the velocity field from data. For the linear noising process \eqref{eq:noising_process}, the \emph{oracle} velocity along the coupling $(x,\epsilon)$ is given by $v^\star(x_t,t,c) = \dot a_t\, x + \dot b_t\, \epsilon$. Accordingly, we train a neural velocity field $v_\phi$ by minimizing the conditional velocity matching objective
\begin{align}\label{eq:Lv}
    \mathcal{L}_{\mathrm{V}}(c)
    = \mathbb{E}_{x,\epsilon,t}
      \Bigl[
        \bigl\| v_\phi(x_t,t,c) - (\dot a_t x + \dot b_t \epsilon) \bigr\|_2^2
      \Bigr].
\end{align}

To compare stochastic and deterministic conditional targets within a common distributional metric, we use the $2$-Wasserstein distance between the learned distribution $\rho_0(\cdot\mid c)$ and the target distribution $p(\cdot\mid c)$. This choice is consistent with Wasserstein interpretations of diffusion training \cite{Kwon22} and is naturally aligned with velocity based transport stability, where errors in the learned velocity field translate into endpoint distributional errors \cite{Benton24}.

\begin{proposition} \label{prop:wasserstein_stability}
Fix $c$. Under suitable regularity conditions, there exists a constant $C>0$ such that
\begin{align*}
    \mathcal{W}_2\!\left(p(\cdot\mid c), \rho_0(\cdot\mid c)\right)
    \;\le\;
    C \, \sqrt{\mathcal{L}_{\mathrm V}(c)} .
\end{align*}
\end{proposition}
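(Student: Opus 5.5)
We prove this with a synchronous coupling of the true and learned probability flow ODEs, followed by a Grönwall estimate in the style of \cite{Benton24}. The standing regularity assumption is that $v_\phi(\cdot,t,c)$ is Lipschitz in $x$ with constant $L_t$ and $\int_0^1 L_t\,\mathrm dt<\infty$. This makes its flow well posed and stable.

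\textbf{Step 1: pass from the regression loss to the marginal velocity.} The loss \eqref{eq:Lv} is a regression onto the conditional target $\dot a_t x+\dot b_t\epsilon$. By the usual bias--variance decomposition, it equals $\mathbb{E}_t\mathbb{E}_{x_t}\|v_\phi(x_t,t,c)-v(x_t,t,c)\|^2$ plus a constant. Here $v(x_t,t,c)=\mathbb{E}[\dot a_t x+\dot b_t\epsilon\mid x_t,c]$ is exactly the marginal velocity field solving the continuity equation for $\rho_t$. Since the constant is nonnegative, the $L^2(\rho_t)$ velocity error
\[
\textstyle\int_0^1\mathbb{E}_{\rho_t}\|v_\phi-v\|^2\,\mathrm dt
\]
is bounded by $\mathcal L_{\mathrm V}(c)$. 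This is the quantity the argument actually controls. (Strictly, the stated bound needs the constant interpreted as absorbed, or the loss read as its excess over the minimum. I will state the result with this excess loss.)

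\textbf{Step 2: coupling.} Draw $y_1=\hat y_1\sim p_{\mathrm{noise}}$. Let $y_t$ follow the true flow $v$ and $\hat y_t$ follow the learned flow $v_\phi$, both integrated backward from $t=1$ to $t=0$. By the continuity equation, $y_t\sim\rho_t$, so in particular $y_0\sim p(\cdot\mid c)$, while $\hat y_0\sim\rho_0^\phi(\cdot\mid c)$, the generated law the proposition calls $\rho_0(\cdot\mid c)$. Set $e_t=y_t-\hat y_t$. Splitting
\[
v(y_t)-v_\phi(\hat y_t)=[v(y_t)-v_\phi(y_t)]+[v_\phi(y_t)-v_\phi(\hat y_t)]
\]
gives
\[
\Bigl|\tfrac{\mathrm d}{\mathrm dt}\|e_t\|\Bigr|\le \|v-v_\phi\|(y_t,t)+L_t\|e_t\|.
\]
Grönwall from $e_1=0$ then yields
\[
\|e_0\|\le\int_0^1 e^{\int_0^t L_s\,\mathrm ds}\,\|v-v_\phi\|(y_t,t)\,\mathrm dt .
\]

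\textbf{Step 3: take expectations.} Square the bound from Step 2 and apply Cauchy--Schwarz in $t$. Then take expectations, using $y_t\sim\rho_t$:
\[
\mathbb{E}\|e_0\|^2\le e^{2\int_0^1 L_s\,\mathrm ds}\int_0^1\mathbb{E}_{\rho_t}\|v-v_\phi\|^2\,\mathrm dt\le C^2\mathcal L_{\mathrm V}(c).
\]
Because $(y_0,\hat y_0)$ is a coupling of $p(\cdot\mid c)$ and $\rho_0^\phi(\cdot\mid c)$, we have $\mathcal W_2^2\le\mathbb E\|e_0\|^2$, which gives the claim with $C=\exp(\int_0^1L_s\,\mathrm ds)$.

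\textbf{Main obstacle.} The delicate point is which field is assumed Lipschitz. Only the learned field $v_\phi$ may be assumed Lipschitz, because near $t=0$ the true field $v$ can blow up when $p(\cdot\mid c)$ is degenerate. In the deterministic (Dirac) case this blow-up is certain. The split in Step 2 was chosen so that only $v_\phi$'s Lipschitz constant enters, which avoids this issue. The other point to handle carefully is the irreducible constant in the loss noted in Step 1.
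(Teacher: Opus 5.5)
Your proposal is correct and follows essentially the same route as the paper. The projection (bias--variance) property of the conditional expectation bounds the marginal velocity error by $\mathcal{L}_{\mathrm V}(c)$. A Gr\"onwall stability estimate is then run backward from the shared noise law, with only $v_\phi$ assumed Lipschitz and the target path as reference, and Cauchy--Schwarz in time gives the bound. Your synchronous coupling just makes explicit the ``standard stability estimate'' the paper cites, and the paper's constant $\bigl(\int_0^1 e^{2\Lambda(t)}\,dt\bigr)^{1/2}$, with $\Lambda(t)=\int_0^t L_s\,ds$, is slightly sharper than your $e^{\Lambda(1)}$.

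Two small remarks. First, Step 1 needs no reinterpretation as an excess loss: dropping the nonnegative irreducible term already gives the stated inequality with the full $\mathcal{L}_{\mathrm V}(c)$. Second, when $v$ is not Lipschitz, the existence of true-flow trajectories with $y_t\sim\rho_t$ comes from the superposition principle under the finite-kinetic-energy assumption, not from the continuity equation alone.
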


The proposition follows from stability estimates for the continuity equation under perturbations of the velocity field. Since the Wasserstein distance is defined on probability measures with finite second moment, it provides a common metric for both absolutely continuous target laws and singular measures such as Dirac deltas. Proposition~\ref{prop:wasserstein_stability} therefore supports using the same velocity matching objective $\mathcal{L}_{\mathrm V}$ for both stochastic and deterministic conditional targets, corresponding to Problems~\ref{pr:stochastic} and \ref{pr:deterministic}. A proof and further discussion are provided in Appendix~\ref{app:theory_diff}.

It is also worth noting that \eqref{eq:Lv} is minimized over the empirical dataset \eqref{eq:dataset_dm}, rather than the full population distribution. Nevertheless, recent theoretical and empirical studies suggest that diffusion models can generalize beyond the training samples despite not driving the training loss to zero \cite{Kadkhodaie24, Bonnaire25, Song25}. In practice, diffusion models often exhibit \emph{underfitting} of \eqref{eq:Lv}, which can improve generalization.

\section{Uncertainty Quantification and Probabilistic Surrogates}

We examine how \emph{conditional diffusion models} can serve as probabilistic surrogates for the stochastic and deterministic operator learning tasks in Problems~\ref{pr:stochastic} and~\ref{pr:deterministic}. Table~\ref{tab:model-comparison} summarizes the uncertainty representations considered in this work.

\begin{table}[t]
\centering
\small
\caption{Model comparison across uncertainty representations. Deterministic surrogates use point estimates, Bayesian models infer parameter uncertainty, and diffusion based surrogates learn conditional generative models for the output distribution $\mathcal{G}^\ddagger(a)$.}
\label{tab:model-comparison}
\begin{tabular}{l|ccc}
\toprule
\text{Methods}
  & \text{$\theta$}
  & \text{Estimates}
  & \text{$\mathcal{G}_\theta(a)$} \\
\midrule
\texttt{NN}
  & NN Param
  & $\theta \mid \mathcal{D}$
  & $\approx \mathcal{G}^\dagger(a)$ \\
\texttt{BNN}
  & NN Param
  & $p(\theta \mid \mathcal{D})$
  & $\stackrel{p}{\approx} \mathcal{G}^\dagger(a)$ \\
\texttt{DM}
  & Data Vector
  & $p(\theta \mid \mathcal{D}, a)$
  & $\stackrel{d}{\approx} \mathcal{G}^\ddagger(a)$ \\
\texttt{LDM}
  & Latent Vector
  & $p(\theta \mid \mathcal{D}, a)$
  & $\stackrel{d}{\approx} \mathcal{G}^\ddagger(a)$ \\
\texttt{DLL}
  & Coeff. Vector
  & $p(\theta \mid \mathcal{D}, a)$
  & $\stackrel{d}{\approx} \mathcal{G}^\ddagger(a)$ \\
\bottomrule
\end{tabular}
\end{table}

\subsection{Uncertainty Quantification}

We consider a probabilistic surrogate $\mathcal{G}_\theta$ for the ground truth stochastic operator $\mathcal{G}^\ddagger$. Here $\theta$ is an abstract uncertainty representation whose meaning depends on the model class. For deterministic and Bayesian neural network surrogates, $\theta$ denotes network parameters or their posterior distribution. For diffusion based surrogates, $\theta$ denotes the random output representation being modeled, such as a data vector, latent vector, or coefficient vector. Thus, depending on the formulation, $\theta$ may represent a point estimate, a parameter posterior, or a distribution over output representations.

For a fixed input $a$, the total distributional error can be decomposed as
\begin{align*}
    \underbrace{
        \mathcal{W}_2\!\left( \mathcal{G}^\ddagger(a),\, \mathcal{G}_{\theta}(a) \right)
    }_{\text{total error}}
    \leq &
    \underbrace{
        \mathcal{W}_2\!\left( \mathcal{G}^\ddagger(a),\, \mathcal{G}_{\theta^\star}(a) \right)
    }_{\text{model misspecification}} \\
    &+
    \underbrace{
        \mathcal{W}_2\!\left( \mathcal{G}_{\theta^\star}(a),\, \mathcal{G}_{\theta}(a) \right)
    }_{\text{epistemic uncertainty}},
\end{align*}
where $\theta^\star$ is an oracle estimator associated with the hypothesis class and optimization procedure. The first term reflects the intrinsic limitation of the chosen surrogate family, while the second term arises from finite data, optimization error, and model uncertainty. In addition, $\mathcal{G}^\ddagger$ may exhibit \emph{aleatoric uncertainty}, namely irreducible randomness in the conditional output distribution. The goal of UQ is therefore to capture both aleatoric variability and epistemic uncertainty in a predictive distribution over functions.

\subsection{Classical Probabilistic Surrogates}

Classical probabilistic surrogates often adopt Bayesian or approximate Bayesian perspectives, where uncertainty is represented through a posterior distribution over parameters, $p(\theta \mid \mathcal{D})$. Under a specified prior and likelihood, prediction is performed through the posterior predictive distribution
\begin{align*}
p(u \mid a, \mathcal{D}) = \int p(u \mid a, \theta)\, p(\theta \mid \mathcal{D})\, d\theta .
\end{align*}
Exact Bayesian inference is generally intractable for modern neural networks, so scalable approximations such as Monte Carlo dropout \cite{Gal16} and deep ensembles \cite{Lakshminarayanan17} are commonly used.

These approaches primarily represent epistemic uncertainty through parameter uncertainty, while aleatoric uncertainty is typically mediated by the assumed likelihood model. As a result, the form of intrinsic variability is often prescribed rather than learned directly as a rich conditional output distribution. Moreover, the resulting uncertainty estimates may require post hoc calibration \cite{Guo17, Kuleshov18}. These limitations are especially relevant in stochastic PDE settings, where the conditional solution distribution can be spatially structured and strongly non Gaussian.

\begin{figure*}[t]
    \centering
    \includegraphics[width=0.98\linewidth]{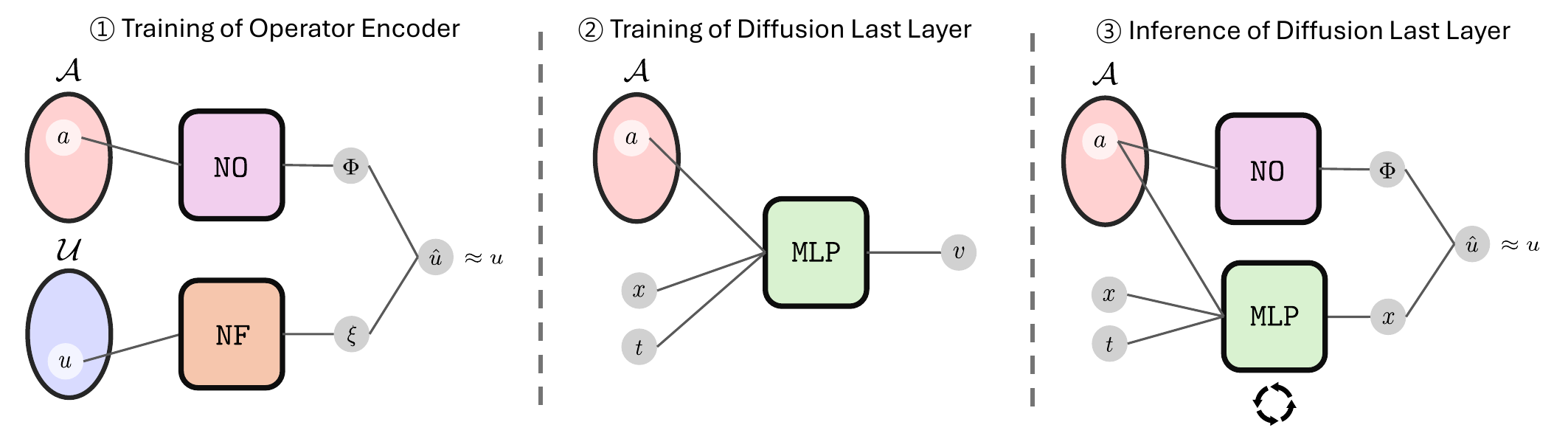}
    \caption{\textbf{Training and inference pipeline for DLL.} \textbf{(1) Operator encoder.} A \texttt{NO} backbone produces basis functions $\Phi(a)$, and a \texttt{NF} maps targets to coefficients $\xi=\mathtt{NF}(u)$, yielding $\hat u=\xi^\top\Phi(a)$. \textbf{(2) DLL training.} With the encoder frozen, we train a conditional diffusion model in coefficient space using an MLP velocity model conditioned on features of $a$. \textbf{(3) DLL inference.} For a new input $a$, we sample coefficients $\theta\sim p(\theta\mid\mathcal{D},a)$ by integrating the learned probability flow ODE and decode $\hat u=\theta^\top\Phi(a)$.}
    \label{fig:training}
\end{figure*}

\subsection{Conditional Diffusion Models as Probabilistic Surrogates}

In this work, conditional diffusion models directly learn a predictive distribution over output representations conditioned on the input, denoted by $p(\theta \mid \mathcal{D}, a)$. This contrasts with Bayesian neural operator approaches, which infer a posterior over model parameters and obtain predictive uncertainty through the resulting posterior predictive distribution. Conditional generative models instead learn a family of output distributions indexed by $a$, making them naturally suited to Problem~\ref{pr:stochastic}.

In deterministic or nearly deterministic settings, diffusion underfitting may also provide a heuristic indication of epistemic uncertainty. If the underlying solution operator is deterministic, the conditional output law should ideally concentrate around a single solution for each condition $c$. Residual variability in the learned distribution is then more naturally attributed to limited data, model approximation error, or incomplete optimization rather than irreducible aleatoric randomness. In this sense, a nonzero $\mathcal{L}_{\mathrm V}(c)$ can be viewed as a qualitative indicator of residual uncertainty associated with limited information in $\mathcal{D}$.

Finally, the learned conditional generator satisfies a useful stability property. If the learned velocity field $v_\phi$ is Lipschitz continuous in the condition $c$, then the generated conditional distributions vary continuously with $c$ in $\mathcal{W}_2$. This shows that the learned output law changes in a controlled manner under small perturbations of the input condition. We emphasize, however, that this regularity property is not a calibrated epistemic uncertainty guarantee. A precise statement is provided in Appendix~\ref{app:theory_diff}.

\section{Diffusion Last Layer Neural Operators}

In this section, we introduce the proposed DLL and summarize its training and inference pipeline, as illustrated in Fig.~\ref{fig:training}. We first describe the operator encoder and then present the conditional diffusion model defined in coefficient space.

\subsection{Operator Encoder}

Given the dataset $\mathcal{D}$ in \eqref{eq:dataset}, we learn an \emph{operator encoder} consisting of two components: an input dependent basis generator and an output coefficient encoder. The basis generator is a neural operator $\mathtt{NO}_\psi$ that maps the input field $a$ to basis functions $\Phi_a$, while the coefficient encoder $\mathtt{NF}_\varphi$ maps the target field $u$ to coefficients $\xi$. Concretely, we represent $u$ by a rank-$r$ expansion
\begin{align*}
    u \approx \hat u \;:=\; \sum_{k=1}^r \xi_k \, \phi_k(a) \;=\; \xi^\top \Phi_a,
\end{align*}
where $\Phi_a=(\phi_1(a),\dots,\phi_r(a))\in\mathcal{U}^r$ denotes an input dependent collection of basis functions produced by the neural operator backbone, and $\xi=(\xi_1,\dots,\xi_r)\in\mathbb{R}^r$ is an instance specific coefficient vector.

In our implementation, the coefficient vector is produced by a neural functional encoder $\mathtt{NF}_\varphi$, namely $\xi=\mathtt{NF}_\varphi(u)$, while the input dependent basis is generated by a neural operator $\mathtt{NO}_\psi$, namely $\Phi_a=\mathtt{NO}_\psi(a)$. With the convention that the product between a coefficient vector and a vector of functions denotes the corresponding linear combination, the reconstruction is given by
\begin{align*}
    \hat u = \mathtt{NF}_\varphi(u)^\top \mathtt{NO}_\psi(a).
\end{align*}
This construction provides a compact conditional representation of the target field through the coefficient vector $\xi$, which is used as the latent variable for diffusion modeling in the next subsection.

We train the operator encoder by minimizing the reconstruction loss
\begin{align} \label{eq:L_OE}
    \mathcal{L}_{\mathrm{OE}}
    \;=\;
    \mathbb{E}_{(a,u)\sim\mathcal{D}}
    \bigl\| u - \mathtt{NF}_\varphi(u)^\top \mathtt{NO}_\psi(a) \bigr\|_2^2,
\end{align}
which encourages the learned coefficients and input dependent basis functions to provide an accurate rank-$r$ approximation of $u$. Intuitively, $\mathtt{NO}_\psi(a)$ produces a basis adapted to the input condition $a$, while $\mathtt{NF}_\varphi(u)$ extracts the corresponding instance specific coefficients, so that their combination reconstructs the target field with small mean squared error.

Under idealized approximation assumptions, minimizing \eqref{eq:L_OE} is related to the optimal rank-$r$ reconstruction of the conditional output field.

\begin{proposition}[Optimal rank-$r$ reconstruction]\label{prop:low_rank}
    Fix $r\in\mathbb{N}$ and $a\in\mathcal{A}$. Under assumptions stated in Appendix~\ref{app:theory_dll}, minimizing $\mathcal{L}_{\mathrm{OE}}$ in \eqref{eq:L_OE} recovers the optimal rank-$r$ reconstruction of $u$ conditioned on $a$, in the sense that
    \begin{align*}
    \inf\,\mathcal{L}_{\mathrm{OE}}(a)
    =
    \inf_{\dim(S)=r}\ \mathbb{E}\!\left[\|u-P_S u\|^2 \mid a\right].
    \end{align*}
    The minimizing subspace is spanned by the leading $r$ eigenfunctions of the conditional second moment operator. Equivalently, the learned basis can be interpreted as an input dependent uncentered Karhunen--Lo\`eve subspace.
\end{proposition}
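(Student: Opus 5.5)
The plan is to reduce the statement to the classical Eckart--Young / Karhunen--Lo\`eve optimality in a Hilbert space $\mathcal{U}$ (e.g.\ $L^2$). We write $\mathcal{L}_{\mathrm{OE}}(a)=\mathbb{E}[\|u-\mathtt{NF}_\varphi(u)^\top \mathtt{NO}_\psi(a)\|^2\mid a]$ and work at fixed $a$. The assumptions we take from Appendix~\ref{app:theory_dll} are: $\mathbb{E}[\|u\|^2\mid a]<\infty$; the families $\mathtt{NO}_\psi$ and $\mathtt{NF}_\varphi$ are expressive enough that, at this $a$, $\mathtt{NO}_\psi(a)$ can realize any $r$-tuple in $\mathcal{U}^r$ and $\mathtt{NF}_\varphi$ can realize any bounded linear map $\mathcal{U}\to\mathbb{R}^r$; and the infimum is taken over these realizable pairs. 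If the appendix allows $\mathtt{NF}$ to depend on $a$ through shared parameters, the same argument applies pointwise in $a$.

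\emph{Lower bound.} For any choice, $\hat u=\xi^\top\Phi_a$ lies in $S_a:=\mathrm{span}\{\phi_1(a),\dots,\phi_r(a)\}$, which has dimension at most $r$. Hence $\|u-\hat u\|\ge\|u-P_{S}u\|$ for any $r$-dimensional $S\supseteq S_a$. Taking conditional expectations gives $\mathcal{L}_{\mathrm{OE}}(a)\ge\inf_{\dim S=r}\mathbb{E}[\|u-P_Su\|^2\mid a]$. This step uses no expressivity assumption.

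\emph{Upper bound.} Given $S$ with orthonormal basis $e_1,\dots,e_r$, we choose $\Phi_a=(e_1,\dots,e_r)$ and $\xi_k=\langle u,e_k\rangle$, which is linear and therefore realizable by assumption. Then $\hat u=P_Su$ exactly. If only approximate realizability holds, we insert an $\varepsilon$-approximation and pass to the limit using the second-moment bound and dominated convergence. The two bounds together give equality of the infima.

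\emph{Identifying the minimizer.} Define the conditional second moment operator $K_a=\mathbb{E}[u\otimes u\mid a]$. It is self-adjoint, positive, and trace class, since $\operatorname{tr}K_a=\mathbb{E}[\|u\|^2\mid a]$. We have $\mathbb{E}[\|u-P_Su\|^2\mid a]=\operatorname{tr}K_a-\operatorname{tr}(P_SK_a)=\operatorname{tr}K_a-\sum_k\langle K_ae_k,e_k\rangle$. By the spectral theorem for compact operators and Ky Fan's maximum principle, $\sup_{\dim S=r}\sum_k\langle K_ae_k,e_k\rangle=\sum_{k\le r}\lambda_k$. This supremum is attained by the span of the leading $r$ eigenfunctions, which is exactly the uncentered KL subspace; it is unique when $\lambda_r>\lambda_{r+1}$.

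\emph{Main obstacle.} The delicate point is the passage from the population objective to the conditional statement at a fixed $a$. $\mathcal{L}_{\mathrm{OE}}$ averages over $a$, so minimizing it globally yields the pointwise optimum only if the basis generator can be chosen independently for each $a$. This requires a measurable selection of eigenspaces $a\mapsto S_a$ together with a universal-approximation assumption for $\mathtt{NO}_\psi$. I would handle it by invoking the appendix's assumption that the hypothesis class is rich enough, via universal approximation of neural operators, to represent any measurable assignment $a\mapsto\Phi_a$. I would then note that the pointwise minimizer is measurable because spectral projections depend continuously on $K_a$ whenever there is an eigengap.
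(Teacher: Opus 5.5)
Your argument is correct and follows the paper's proof: a projection lower bound (the paper's Pythagoras lemma), attainment via the expressivity assumptions at the fixed $a$, and the variational (Ky Fan) characterization of the conditional second-moment operator; your embedding of a possibly rank-deficient $\mathrm{span}\,\Phi_a$ into an $r$-dimensional $S$ makes explicit a monotonicity step the paper leaves implicit. The ``main obstacle'' does not arise for the displayed claim, because the paper defines $\mathcal{L}_{\mathrm{OE}}(a)$ as the conditional loss and imposes its expressivity assumption only at that fixed $a$, so $\varphi$ may depend on $a$; for a genuinely global version your suggested fix would not suffice anyway, since $\mathtt{NF}_\varphi$ sees only $u$ and in general cannot output $a$-dependent projection coefficients when the conditional laws for different inputs overlap.
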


A precise statement and proof are provided in Appendix~\ref{app:theory_dll}. As a result, the operator encoder compresses the high-dimensional output field into a compact coefficient vector $\xi\in\mathbb{R}^r$, in contrast to conventional latent diffusion models that rely on discrete autoencoder latents.

Compared with a standard autoencoder, the operator encoder can improve reconstruction efficiency at a fixed latent dimension by using an input dependent basis $\Phi_a=\mathtt{NO}_\psi(a)$. Since the reconstruction subspace adapts to the condition $a$, more of the output structure can be represented through $\Phi_a$, reducing the burden on the low-dimensional coefficient vector $\xi$ and mitigating reconstruction error caused by the bottleneck.

\subsection{Diffusion Last Layer}

Given the trained operator encoder, we construct the latent dataset in \eqref{eq:dataset_dm} by setting $c^{(i)}=a^{(i)}$ and $x^{(i)}=\mathtt{NF}_\varphi(u^{(i)})$. This reduces conditional generation to modeling a distribution over the finite-dimensional coefficient space $\mathbb{R}^r$, rather than directly over the output function space $\mathcal{U}$.

We train the diffusion last layer by minimizing the velocity loss $\mathcal{L}_{\mathrm{V}}$ in \eqref{eq:Lv}. In contrast to pixel space diffusion models, DLL performs both training and inference in the low-dimensional coefficient space, which substantially reduces computational cost and improves sampling efficiency. Since the latent variable $x\in\mathbb{R}^r$ is finite-dimensional, we parameterize the conditional velocity field using an MLP, a standard architecture for generative modeling in vector spaces \cite{Kotelnikov23, Li24}.

The preceding theoretical results offer a useful interpretation of this construction. Proposition~\ref{prop:wasserstein_stability} indicates that, under the stated assumptions, minimizing $\mathcal{L}_{\mathrm{V}}$ controls the discrepancy between the learned conditional latent distribution and the target conditional law of $x$ in Wasserstein distance. Proposition~\ref{prop:low_rank} suggests that the operator encoder can be viewed as learning an input-adaptive rank-$r$ uncentered Karhunen--Lo\`eve-type representation of the output field. In this sense, DLL models the conditional output distribution by learning a diffusion model over the corresponding coefficient space and decoding sampled coefficients through the input dependent basis $\mathtt{NO}_\psi(a)$.

% \begin{algorithm}[tb]
%   \caption{\texttt{PDM}}
%   \label{alg:pdm}
%   \begin{algorithmic}
%     \STATE {\bfseries Input:}
%   \end{algorithmic}
% \end{algorithm}

\section{Experiments}

We evaluate DLL on both stochastic and deterministic benchmarks. Unless otherwise stated, all main experiments use FNO~\cite{Li20} as the backbone neural operator $\mathtt{NO}$, latent dimension $r=64$, and $\mathrm{NFE}=10$ sampling steps. The neural functional encoder $\mathtt{NF}$ uses the same FNO backbone architecture, followed by global average pooling, to obtain the latent coefficients. Further implementation details are provided in Appendix~\ref{app:experimental_details}. Appendix~\ref{app:ablation} reports ablation studies on the latent dimension, the number of sampling NFEs, the training dataset size, and backbone compatibility with DeepONet.

\subsection{Choice of Baselines}

We compare \texttt{DLL} against five baselines covering deterministic prediction, approximate Bayesian uncertainty estimation, and conditional generative modeling. As operator-based references, we include \texttt{FNO}~\cite{Li20}, \texttt{FNO-d}, which applies Monte Carlo dropout at inference time, and \texttt{PNO}~\cite{Bulte25}, implemented as the reparameterized variant of the probabilistic neural operator. In our implementation, \texttt{PNO} outputs pointwise Gaussian predictive distributions through mean and standard-deviation heads and draws samples using the reparameterization trick. We further evaluate two grid-based generative baselines: pixel space diffusion (\texttt{DM}) and latent diffusion (\texttt{LDM}). Since all benchmarks are posed on regular grids, these discretized diffusion baselines allow direct comparison in the same data representation. Additional architectural details are provided in Appendix~\ref{app:archi}.

\subsection{Stochastic Operator Learning}

We evaluate whether probabilistic surrogates can capture the aleatoric uncertainty arising from stochastic PDE data, where each input corresponds to a conditional distribution over output fields. For each stochastic system, we generate $10{,}000$ training pairs and evaluate on $32$ test inputs with $64$ output realizations per input. The one-dimensional Burgers fields are represented as $u\in\mathbb{R}^{256}$, while the two-dimensional Darcy fields are represented as $u\in\mathbb{R}^{128\times128}$. Full dataset generation details are provided in Appendix~\ref{app:data}.

\paragraph{Stochastic Burgers' Equation.}
We consider the one-dimensional viscous stochastic Burgers' equation on the periodic domain $x\in[0,2\pi]$:
\begin{align*}
\mathrm{d}u
&=
\left(
-\frac{1}{2}\partial_x(u^2)
+\nu\,\partial_{xx}u
\right)\mathrm{d}t
+
\sum_{j\in\mathcal{J}}w_j\cos(jx)\,\mathrm{d}W_t^{j},
\end{align*}
where $\nu>0$ is the viscosity, $\sigma>0$ is the overall noise scale, $\mathcal{J}$ is a finite set of forcing modes, $w_j$ are mode weights, and $\{W_t^{j}\}_{j\in\mathcal{J}}$ are independent standard Brownian motions. Given an initial condition $u_0=u(\cdot,0)$, randomness in the Brownian forcing induces a conditional distribution over terminal time solutions $u(\cdot,T)$.

\begin{table}[t]
\caption{\textbf{Stochastic Burgers' equation.} Results on distributional and moment metrics. Lower is better. Best and second-best results are highlighted in \textbf{bold} and \underline{underline}, respectively.}
\label{tab:burgers_stochastic}
\centering
\small
\resizebox{\linewidth}{!}{%
\begin{tabular}{lcccc}
\toprule
Method & ED $\downarrow$ & SWD $\downarrow$ & $\mathrm{NRMSE}_\mathrm{m}\downarrow$ & $\mathrm{NRMSE}_\mathrm{s}\downarrow$ \\
\midrule
\texttt{FNO}   & 6.491 & 0.426 & \textbf{0.146} & 1.000 \\
\texttt{FNO-d} & 6.075 & 0.387 & 0.527 & 0.755 \\
\texttt{PNO}   & 1.766 & 0.253 & \underline{0.215} & 0.457 \\
\texttt{DM}    & \underline{1.355} & \underline{0.239} & 0.258 & 0.323 \\
\texttt{LDM}   & 1.373 & 0.249 & 0.280 & \underline{0.297} \\
\texttt{DLL}   & \textbf{1.285} & \textbf{0.213} & 0.252 & \textbf{0.289} \\
\bottomrule
\end{tabular}%
}
\end{table}

\paragraph{Stochastic Darcy Flow.}
We consider Darcy flow on $\Omega=(0,1)^2$ with zero Dirichlet boundary conditions:
\begin{align*}
    -\nabla\cdot\bigl(a(x)\nabla u(x)\bigr) &= f(x),
    \qquad x\in\Omega, \\
    u(x) &= 0,
    \qquad x\in\partial\Omega,
\end{align*}
where $a(x)$ is the input permeability field and $u(x)$ is the pressure field. Aleatoric uncertainty arises from the random source
\begin{align*}
    f(x)
    =
    \sigma_{\ln}\exp\bigl(G_{\ln}(x)\bigr)
    +
    \sigma_{\mathrm{gp}}G_{\mathrm{gp}}(x),
\end{align*}
where $\sigma_{\ln},\sigma_{\mathrm{gp}}>0$, and $G_{\ln}$ and $G_{\mathrm{gp}}$ are independent mean-zero Gaussian random fields. Given a permeability field $a$, the random source induces a conditional distribution over pressure fields $u$.

\begin{table}[t]
\caption{\textbf{Stochastic Darcy flow.} Results on distributional and moment metrics. Lower is better. Best and second best results are highlighted in \textbf{bold} and \underline{underline}, respectively.}
\label{tab:darcy_stochastic}
\centering
\small
\resizebox{\linewidth}{!}{%
\begin{tabular}{lcccc}
\toprule
Method & ED $\downarrow$ & SWD $\downarrow$ & $\mathrm{NRMSE}_\mathrm{m}\downarrow$ & $\mathrm{NRMSE}_\mathrm{s}\downarrow$ \\
\midrule
\texttt{FNO}         & 1.463 & 0.015 & \textbf{0.253} & 1.000 \\
\texttt{FNO-d} & 1.320 & 0.014 & \underline{0.289} & 0.962 \\
\texttt{PNO}         & 0.305 & 0.007 & 0.388 & \underline{0.285} \\
\texttt{DM}          & \underline{0.269} & 0.007 & 0.353 & 0.360 \\
\texttt{LDM}         & 0.368 & {0.007} & 0.610 & \textbf{0.268} \\
\texttt{DLL}     & \textbf{0.227} & {0.007} & 0.355 & 0.357 \\
\bottomrule
\end{tabular}%
}
\end{table}

\paragraph{Results.}
Across both stochastic benchmarks (Tables~\ref{tab:burgers_stochastic} and~\ref{tab:darcy_stochastic}), \texttt{FNO} achieves the lowest mean error but does not capture output variability, leading to large ED, SWD, and $\mathrm{NRMSE}_{\mathrm{s}}$. \texttt{FNO-d} improves uncertainty estimates only modestly, while \texttt{PNO} provides a stronger probabilistic baseline but remains less competitive on distributional metrics. The generative baselines substantially reduce ED and SWD. Among them, \texttt{DLL} achieves the lowest ED on both benchmarks and the lowest SWD on stochastic Burgers, while remaining competitive on stochastic Darcy. Metric definitions and qualitative results are provided in Appendices~\ref{app:metrics} and~\ref{app:fig_stochastic}, respectively.

\subsection{Autoregressive Rollout Stability}

We next evaluate long horizon autoregressive stability on deterministic chaotic dynamics. Following APEBench~\cite{Koehler24}, models are trained for one step prediction and evaluated by autoregressive rollouts, where predictions are recursively fed back as inputs. This setting probes error accumulation and compounding drift over time. For both benchmarks, models are trained on trajectory segments of length $50$ and evaluated on rollouts of length $100$. Dataset generation details are provided in Appendix~\ref{app:data}.

\paragraph{Kuramoto--Sivashinsky Equation.}
We consider the one-dimensional Kuramoto--Sivashinsky equation on the periodic domain $x\in[0,L]$:
\begin{align*}
    \partial_t u
    +
    u\,\partial_x u
    +
    \partial_{xx}u
    +
    \partial_{xxxx}u
    =
    0.
\end{align*}
The state is represented as $u\in\mathbb{R}^{256}$ on a uniform grid. The autoregressive task is to learn the discrete time flow map from $u(\cdot,t)$ to $u(\cdot,t+\Delta t)$ with $\Delta t=1$.

\begin{table}[t]
\caption{\textbf{Kuramoto--Sivashinsky equation.} Autoregressive rollout performance. Lower is better for NRMSE and CRPS, and SSR is optimal when close to $1$. Best and second best results are highlighted in \textbf{bold} and \underline{underline}, respectively.}
\label{tab:ks_rollout}
\centering
\small
% \resizebox{\linewidth}{!}{%
\begin{tabular}{lccc}
\toprule
Method & $\mathrm{NRMSE}\downarrow$ & $\mathrm{CRPS}\downarrow$ & $\mathrm{SSR}\to 1$ \\
\midrule
\texttt{FNO}         & 0.404 & --    & --    \\
\texttt{FNO-d} & 0.384 & 0.523 & \textbf{0.975} \\
\texttt{PNO}         & \underline{0.354} & \underline{0.514} & 0.550 \\
\texttt{DM}          & 0.395 & 0.545 & \underline{0.961} \\
\texttt{LDM}         & 0.576 & 0.878 & 0.802 \\
\texttt{DLL}     & \textbf{0.343} & \textbf{0.470} & 0.949 \\
\bottomrule
\end{tabular}%
% }
\end{table}

\paragraph{Kolmogorov Flow.}
We consider two-dimensional Kolmogorov flow on the periodic domain $\Omega=(0,2\pi)^2$, modeled by the incompressible Navier--Stokes equations in vorticity form:
\begin{align*}
    \partial_t \omega + \mathbf{u}\cdot\nabla \omega
    &=
    \nu \Delta \omega - \alpha \omega + F(x), \\
    \Delta \psi
    &=
    \omega, \\
    \mathbf{u}
    &=
    \nabla^\perp \psi,
\end{align*}
where $\omega$ is the vorticity, $\psi$ is the stream function, $\nu>0$ is the viscosity, $\alpha\ge 0$ is the linear drag, and $F$ is a single mode Kolmogorov forcing. The state $\omega(\cdot,t)$ is represented on a uniform grid as $\omega\in\mathbb{R}^{128\times128}$. The autoregressive task is to learn the discrete time flow map from $\omega(\cdot,t)$ to $\omega(\cdot,t+\Delta t)$ with $\Delta t=0.25$.

\begin{table}[t]
\caption{\textbf{Kolmogorov flow.} Autoregressive rollout performance. Lower is better for NRMSE and CRPS, and SSR is optimal when close to $1$. Best and second best results are highlighted in \textbf{bold} and \underline{underline}, respectively.}
\label{tab:kolmogorov_rollout}
\centering
\small
% \resizebox{\linewidth}{!}{%
\begin{tabular}{lccc}
\toprule
Method & $\mathrm{NRMSE}\downarrow$ & $\mathrm{CRPS}\downarrow$ & $\mathrm{SSR}\to 1$ \\
\midrule
\texttt{FNO}         & 0.528 & --    & --    \\
\texttt{FNO-d} & 0.463 & 0.912 & 0.546 \\
\texttt{PNO}         & 0.492 & 1.119 & 0.167 \\
\texttt{DM}          & \textbf{0.369} & \textbf{0.692} & \underline{0.601} \\
\texttt{LDM}         & 0.615 & 1.232 & 0.548 \\
\texttt{DLL}     & \underline{0.426} & \underline{0.822} & \textbf{0.620} \\
\bottomrule
\end{tabular}%
% }
\end{table}

\paragraph{Results.}
Tables~\ref{tab:ks_rollout} and~\ref{tab:kolmogorov_rollout} summarize autoregressive rollout performance. On KS, \texttt{DLL} achieves the best NRMSE and CRPS while maintaining SSR close to one, indicating improved rollout accuracy with reasonable uncertainty estimates. On Kolmogorov flow, pixel space diffusion (\texttt{DM}) gives the best NRMSE and CRPS, while \texttt{DLL} improves over the deterministic \texttt{FNO} backbone and attains the best SSR. We attribute this gap partly to the stronger spatial inductive bias of the U-Net used by \texttt{DM} and partly to the fact that \texttt{DLL} operates through a low-dimensional coefficient space attached to the FNO backbone; thus, its performance ceiling can depend on the quality of the underlying backbone. Overall, the results suggest that \texttt{DLL} can improve autoregressive stability of neural operator backbones, although pixel space diffusion may remain advantageous for highly complex dynamics. Metric definitions and qualitative results are provided in Appendices~\ref{app:metrics} and~\ref{app:fig_rollouts}, respectively.

\subsection{Reconstruction Property}

\begin{table}[t]
\caption{\textbf{Reconstruction property (1D).} Autoencoder versus operator encoder. Lower NRMSE indicates better reconstruction.}
\label{tab:recon_1d}
\centering
\small
\resizebox{\linewidth}{!}{%
\begin{tabular}{lccc}
\toprule
Encoder & Comp. $\uparrow$ & Burgers NRMSE $\downarrow$ & KS NRMSE $\downarrow$ \\
\midrule
\texttt{AE}  & $\times 2$  & $\mathbf{1.98\times 10^{-3}}$  & $7.75\times 10^{-4}$ \\
\texttt{OE}  & $\mathbf{\times 4}$  & $4.13\times 10^{-2}$  & $\mathbf{2.45\times 10^{-4}}$ \\
\bottomrule
\end{tabular}%
}
\end{table}

\begin{table}[t]
\caption{\textbf{Reconstruction property (2D).} Autoencoder versus operator encoder. Lower NRMSE indicates better reconstruction.}
\label{tab:recon_2d}
\centering
\small
\resizebox{\linewidth}{!}{%
\begin{tabular}{lccc}
\toprule
Encoder & Comp. $\uparrow$ & Darcy NRMSE $\downarrow$ & Kolmogorov NRMSE $\downarrow$ \\
\midrule
\texttt{AE}  & $\times 16$  & $\mathbf{1.05\times 10^{-2}}$  & $3.35\times 10^{-3}$ \\
\texttt{OE}  & $\mathbf{\times 256}$  & $4.11\times 10^{-2}$  & $\mathbf{3.30\times 10^{-3}}$ \\
\bottomrule
\end{tabular}%
}
\end{table}

We finally examine whether the proposed operator encoder provides an effective compressed representation of the output field, since this representation is the latent space in which DLL performs diffusion modeling. Tables~\ref{tab:recon_1d} and~\ref{tab:recon_2d} show that operator encoder (\texttt{OE}) reconstructs deterministic benchmarks (KS and Kolmogorov flow) more accurately than autoencoder (\texttt{AE}), while \texttt{AE} performs better on stochastic benchmarks (Burgers and Darcy). Notably, \texttt{OE} achieves lower error in the deterministic setting despite substantially higher compression ratios, suggesting that operator conditioned features capture the dominant low-dimensional structure of the solution manifold.

\section{Related Work and Discussion}

\paragraph{UQ in Operator Learning.}
UQ for operator learning has been studied from several perspectives. One line targets \emph{epistemic} uncertainty through Bayesian neural operators and related approximate Bayesian formulations over neural operator components \cite{Lin23, Weber24, Magnani25a, Magnani25b}, as well as input perturbation or ensemble prediction mechanisms \cite{Pathak22}. Recent diffusion inspired neural operator parametrizations have also been explored for efficient Bayesian uncertainty estimation in Fourier neural operators \cite{Matveev25}. Another line models uncertainty directly in output function space, such as probabilistic neural operators trained with proper scoring rules \cite{Bulte25}. In contrast, DLL learns a flexible conditional generative model in a low-dimensional coefficient space defined by a neural operator backbone, capturing aleatoric variability in stochastic problems and providing useful predictive spreads in deterministic rollouts. Separately, conformal methods provide distribution-free calibrated uncertainty sets for neural operators \cite{Ma24, Millard25}; extending such guarantees to DLL is an important direction for future work.

\paragraph{Generative Models for Physics.}
The use of generative models in the physical sciences has grown rapidly in recent years \cite{Shu23, Shysheya24, Kohl24, Zhou25a, Zhou25b, Li25, Rozet25}. Most existing approaches adapt latent space or pixel space generative models to discretized simulation data, often using U-Nets, diffusion transformers, or neural fields. For example, neural field latent diffusion has been used to model spatiotemporal turbulence \cite{Du24}, while wavelet based diffusion architectures have been developed for generative PDE simulation and control \cite{Hu25}. DLL instead takes an operator learning perspective by adding a conditional generative head to a neural operator backbone and sampling in the final coefficient space. Another important direction is to extend DLL to Bayesian inverse problems, where conditional generative operator models could help represent learned posterior surrogates when combined with suitable backbones \cite{Rozet23, Huang24}. Combining DLL with emerging pretrained scientific models \cite{Hao24, Herde24} is a promising future direction, but would require appropriate training and adaptation strategies.

% The use of generative models in the physical sciences has grown rapidly in recent years \cite{Shu23, Shysheya24, Kohl24, Zhou25a, Zhou25b, Li25, Rozet25}. Most existing approaches adapt latent-space or pixel space generative models, often based on U-Nets or diffusion transformers originally developed for vision, to discretized simulation data. In contrast, our work is tailored to operator learning: DLL attaches a conditional generative head directly to a neural-operator backbone, preserving discretization-robust function-to-function prediction while enabling efficient sampling. Moreover, when paired with neural-field or transformer-based operator backbones, DLL naturally extends to Bayesian inverse problems by providing a flexible conditional prior and likelihood model \cite{Rozet23, Huang24}. Finally, DLL is compatible with emerging large-scale pretrained models for scientific data \cite{Hao24, Herde24}, suggesting a promising route to combine foundation-model representations with principled uncertainty-aware operator emulation.

\paragraph{Generative Models in Function Spaces.}
A closely related line of work formulates diffusion and flow matching directly over \emph{function-valued} random variables to obtain discretization-robust generative models, using Gaussian-process, Hilbert-space, or function-space score-matching constructions \cite{Lim23, Kerrigan23, Kerrigan24, Lim25, Shi25}. Spectral diffusion processes are particularly related to DLL because they model functional data through coefficients in a fixed spectral or KL-type basis \cite{Phillips22}. DLL shares the coefficient space viewpoint, but learns an input dependent basis through a neural operator backbone and performs conditional flow matching over the resulting coefficients. Thus, DLL can be viewed as a conditional function space generative model with a coefficient parameterization learned through a neural operator, rather than a fully infinite-dimensional diffusion model.

\paragraph{Neural Processes.}
Neural processes provide a probabilistic framework for learning distributions over functions from context observations and predicting at target locations \cite{Garnelo18a, Garnelo18b}. Subsequent variants improve predictive correlations, equivariance, scalability, and spatiotemporal modeling through Gaussian, convolutional, diffusion-based, transformer-based, and spectral constructions \cite{Bruinsma21, Gordon20, Dutordoir23, Ashman25, Mohseni2025}. This family of methods is closely related to DLL in its emphasis on probabilistic function modeling and uncertainty-aware prediction. DLL brings a similar perspective to operator learning by using a coefficient representation and learning a conditional generative model over output functions. While neural processes are typically formulated around context-to-target prediction, DLL focuses on operator learning, where the conditioning variable is an input function and the target is the corresponding output field distribution.

\paragraph{Weight Space Uncertainty.} A complementary route to uncertainty quantification models randomness in parameter space. Bayesian neural networks infer a posterior over weights, often via variational methods, to capture epistemic uncertainty \cite{Blundell15}. For scalability, Bayesian last-layer models restrict posterior inference to the final layer while learning a deterministic feature extractor \cite{Kristiadi20, Watson21, Harrison24}. More recently, diffusion models have been used to learn expressive distributions over network weights and enable sampling in weight space \cite{Erkocc23, Xie24}. In contrast, DLL keeps the backbone deterministic and models uncertainty in a low-dimensional output representation, targeting aleatoric variability while remaining complementary to weight-space approaches.

\section{Conclusion}
We introduced DLL, a modular probabilistic output head for neural operator backbones. DLL learns an operator encoder that represents target fields through input dependent basis functions and compact coefficients, and then trains a conditional diffusion model in this coefficient space. This design provides an efficient way to model conditional distributions over solution fields while retaining the structural advantages of operator learning.

Across stochastic operator learning benchmarks, DLL improves distributional fidelity over deterministic and probabilistic operator baselines and remains competitive with grid based diffusion baselines. On deterministic chaotic systems, DLL improves the rollout stability of the underlying neural operator backbone and provides informative uncertainty estimates under compounding autoregressive errors. These results suggest that diffusion modeling in learned coefficient spaces is a practical route to uncertainty aware neural operators. Future work includes principled calibration with coverage guarantees, stronger operator backbones, and extensions to inverse problems and irregular geometries.

\section*{Impact Statement}
This paper presents work whose goal is to advance the field of machine learning. There are many potential societal consequences of our work, none of which we feel must be specifically highlighted here.

\section*{Acknowledgment}
This work was supported by Institute of Information \& communications Technology Planning \& Evaluation (IITP) grant funded by the Korea government(MSIT) (RS-2022-00143911, AI Excellence Global Innovative Leader Education Program)

% In the unusual situation where you want a paper to appear in the
% references without citing it in the main text, use \nocite
% \nocite{*}

\bibliography{ref}
\bibliographystyle{icml2026}

%%%%%%%%%%%%%%%%%%%%%%%%%%%%%%%%%%%%%%%%%%%%%%%%%%%%%%%%%%%%%%%%%%%%%%%%%%%%%%%
%%%%%%%%%%%%%%%%%%%%%%%%%%%%%%%%%%%%%%%%%%%%%%%%%%%%%%%%%%%%%%%%%%%%%%%%%%%%%%%
% APPENDIX
%%%%%%%%%%%%%%%%%%%%%%%%%%%%%%%%%%%%%%%%%%%%%%%%%%%%%%%%%%%%%%%%%%%%%%%%%%%%%%%
%%%%%%%%%%%%%%%%%%%%%%%%%%%%%%%%%%%%%%%%%%%%%%%%%%%%%%%%%%%%%%%%%%%%%%%%%%%%%%%
\newpage
\appendix
\onecolumn
\section{Theoretical Analysis}

\subsection{UQ with Diffusion Models}
\label{app:theory_diff}

In this section, we provide a concise stability argument connecting velocity matching training to endpoint error in Wasserstein distance. Since our setting includes both genuinely stochastic targets and deterministic point-mass targets, $\mathcal{W}_2$ provides a common metric for probability measures with finite second moments.

\begin{assumption}[Regularity of probability paths]
\label{ass:vel_reg}
Fix a condition $c$. Let $\rho_t^\star(\cdot\mid c)$ denote the target probability path defined by the noising process \eqref{eq:noising_process}, with
\begin{align*}
    \rho_0^\star(\cdot\mid c)=p(\cdot\mid c),
    \qquad
    \rho_T^\star(\cdot\mid c)=p_{\mathrm{noise}} .
\end{align*}
Let $\rho_t(\cdot\mid c)$ denote the probability path generated by the learned velocity field $v_\phi(\cdot,t,c)$, with the same terminal condition $\rho_T(\cdot\mid c)=p_{\mathrm{noise}}$. We assume that both paths are narrowly continuous probability measures on $\mathbb{R}^d$ with finite second moments, solve their corresponding continuity equations, and have finite kinetic energy. We also assume that $v_\phi(\cdot,t,c)$ is $L(t)$-Lipschitz in the state variable for a.e. $t\in[0,T]$, where $L\in L^1(0,T)$.
\end{assumption}

\begin{proposition}[Endpoint stability]
\label{prop:W2_integrated}
Under Assumption~\ref{ass:vel_reg}, let $v^\star(\cdot,t,c)$ denote the marginal velocity field of the target path $\rho_t^\star(\cdot\mid c)$, and define
\begin{align*}
    \Lambda(t):=\int_0^t L(s)\,ds .
\end{align*}
Then
\begin{align*}
    \mathcal{W}_2\!\left(\rho_0^\star(\cdot\mid c),\rho_0(\cdot\mid c)\right)
    \le
    \int_0^T
    \exp\bigl(\Lambda(t)\bigr)
    \left(
    \mathbb{E}_{x_t\sim\rho_t^\star(\cdot\mid c)}
    \big[
    \|v_\phi(x_t,t,c)-v^\star(x_t,t,c)\|^2
    \big]
    \right)^{1/2}
    dt .
\end{align*}
Consequently,
\begin{align*}
    \mathcal{W}_2\!\left(\rho_0^\star(\cdot\mid c),\rho_0(\cdot\mid c)\right)
    \le
    C_L
    \left(
    \int_0^T
    \mathbb{E}_{x_t\sim\rho_t^\star(\cdot\mid c)}
    \big[
    \|v_\phi(x_t,t,c)-v^\star(x_t,t,c)\|^2
    \big]
    dt
    \right)^{1/2},
\end{align*}
where
\begin{align*}
    C_L
    :=
    \left(
    \int_0^T
    \exp\bigl(2\Lambda(t)\bigr)
    dt
    \right)^{1/2}.
\end{align*}
\end{proposition}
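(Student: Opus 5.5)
Both probability paths start from the same terminal law $\rho_T=\rho_T^\star=p_{\mathrm{noise}}$ and are integrated backward to $t=0$. The plan is a synchronous coupling along the learned flow combined with a Grönwall argument. The two paths are built on one probability space using the flows of the two velocity fields. Let $Y_t$ solve $\dot Y_t=v^\star(Y_t,t,c)$ with $Y_t\sim\rho_t^\star$ for all $t$. This is the superposition principle: a narrowly continuous solution of the continuity equation with finite kinetic energy $\int_0^T\!\int\|v^\star\|^2\,d\rho_t^\star\,dt<\infty$ is represented by a measure on absolutely continuous curves (Ambrosio--Gigli--Savaré). Since $v_\phi$ is $L(t)$-Lipschitz with $L\in L^1$, it has a well-defined flow, so we can define $X_t$ by $\dot X_t=v_\phi(X_t,t,c)$ with $X_T=Y_T$. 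By uniqueness for the continuity equation under Lipschitz fields, $X_t\sim\rho_t$. Then $(X_0,Y_0)$ is a coupling of $\rho_0$ and $\rho_0^\star$, so $\mathcal W_2(\rho_0^\star,\rho_0)\le \|X_0-Y_0\|_{L^2(\Omega)}$.

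Set $e_t=X_t-Y_t$ and $\delta(t):=\|e_t\|_{L^2}$, with $\delta(T)=0$. Split the derivative as
\begin{align*}
\dot e_t=\bigl[v_\phi(X_t,t,c)-v_\phi(Y_t,t,c)\bigr]+\bigl[v_\phi(Y_t,t,c)-v^\star(Y_t,t,c)\bigr].
\end{align*}
The first bracket is bounded pointwise by $L(t)\|e_t\|$ using the Lipschitz assumption. The second bracket has $L^2$ norm $\varepsilon(t):=\bigl(\mathbb E_{\rho_t^\star}\|v_\phi-v^\star\|^2\bigr)^{1/2}$, because $Y_t\sim\rho_t^\star$; this is exactly why the error is evaluated along the target path. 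Integrate from $t$ to $T$ and apply Minkowski in $L^2(\Omega)$. This gives $\delta(t)\le\int_t^T\bigl(L(s)\delta(s)+\varepsilon(s)\bigr)ds$, and one can work with this integral inequality directly to avoid differentiability issues of $\delta$ where it vanishes. The backward Grönwall lemma then yields $\delta(0)\le\int_0^T\exp\bigl(\int_0^s L\bigr)\varepsilon(s)\,ds=\int_0^T e^{\Lambda(s)}\varepsilon(s)\,ds$, which is the first claim.

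The second claim follows from Cauchy--Schwarz in $t$: $\int_0^T e^{\Lambda}\varepsilon\le(\int_0^T e^{2\Lambda})^{1/2}(\int_0^T\varepsilon^2)^{1/2}$.

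The main obstacle is the measure-theoretic step: justifying that $v^\star$ admits a Lagrangian representation $Y_t$ even though it need not be Lipschitz. This matters especially for Dirac targets, where $v^\star$ is singular as $t\to0$. I would rely on the superposition principle, which needs only finite kinetic energy, and not on a flow map. Using the probabilistic representation also lets me avoid assuming uniqueness for the $v^\star$ equation, since only $v_\phi$'s flow must be well defined. A secondary point is that $L^2$-integrability of $\varepsilon$ and of the paths must hold so the integrals are finite; this follows from the finite second moment and kinetic energy hypotheses.
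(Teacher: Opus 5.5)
Your proposal is correct and follows the same route as the paper, which invokes the standard $\mathcal{W}_2$ stability estimate for the continuity equation in reversed time, with $v_\phi$ as the Lipschitz field and $\rho_t^\star$ as the reference path, and then applies Cauchy--Schwarz in time. Your synchronous coupling (superposition for the target path, the Lipschitz flow of $v_\phi$ run backward from the shared terminal sample, and a backward Gr\"onwall step giving the factor $e^{\Lambda(s)}$) is an explicit proof of exactly that cited estimate.
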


\begin{proof}
We apply a standard stability estimate for the continuity equation in $\mathcal{W}_2$; see, e.g., \citet{Ambrosio05, Benton24}. If one velocity field is Lipschitz in the state variable, then the distance between two solutions is controlled by the accumulated $L^2$ discrepancy between the two velocity fields along the reference path, up to a Gr\"onwall factor.

Since the target and learned paths share the terminal condition at $t=T$, we apply this estimate in reversed time, with the learned velocity field $v_\phi$ as the Lipschitz field and the target path $\rho_t^\star$ as the reference path. This gives
\begin{align*}
    \mathcal{W}_2\!\left(\rho_0^\star(\cdot\mid c),\rho_0(\cdot\mid c)\right)
    \le
    \int_0^T
    \exp\bigl(\Lambda(t)\bigr)
    \|v_\phi(\cdot,t,c)-v^\star(\cdot,t,c)\|_{L^2(\rho_t^\star)}
    dt ,
\end{align*}
which proves the first inequality. The second inequality follows from Cauchy--Schwarz in time.
\end{proof}

\begin{proof}[Proof of Proposition~\ref{prop:wasserstein_stability}]
For the linear noising process $x_t=a_t x+b_t\epsilon$, the sample-level velocity target is
\begin{align*}
    \dot{x}_t
    =
    \dot a_t x+\dot b_t\epsilon .
\end{align*}
The corresponding marginal velocity field is
\begin{align*}
    v^\star(x_t,t,c)
    =
    \mathbb{E}\big[\dot a_t x+\dot b_t\epsilon \mid x_t,c\big].
\end{align*}
By the projection property of conditional expectation,
\begin{align*}
    \mathbb{E}_{x_t\sim\rho_t^\star(\cdot\mid c)}
    \big[
    \|v_\phi(x_t,t,c)-v^\star(x_t,t,c)\|^2
    \big]
    \le
    \mathbb{E}
    \big[
    \|v_\phi(x_t,t,c)-(\dot a_t x+\dot b_t\epsilon)\|^2
    \mid c
    \big].
\end{align*}
Combining this inequality with Proposition~\ref{prop:W2_integrated} and the definition of $\mathcal{L}_{\mathrm V}(c)$ in \eqref{eq:Lv} yields
\begin{align*}
    \mathcal{W}_2\!\left(p(\cdot\mid c),\rho_0(\cdot\mid c)\right)
    \le
    C\sqrt{\mathcal{L}_{\mathrm V}(c)}
\end{align*}
for a constant $C>0$ depending on the time horizon, the time-sampling normalization, and the Lipschitz regularity. This proves the claim.
\end{proof}

We also record a simple stability property with respect to the conditioning variable.

\begin{assumption}[Lipschitz continuity in the condition]
\label{ass:cond_lip}
There exists $L_c>0$ such that for all $c_1,c_2$, a.e. $t\in[0,T]$, and all $x\in\mathbb{R}^d$,
\begin{align*}
    \|v_\phi(x,t,c_1)-v_\phi(x,t,c_2)\|
    \le
    L_c\|c_1-c_2\|.
\end{align*}
\end{assumption}

\begin{proposition}[Conditional stability]
\label{prop:cond_stability}
Assume that the learned paths for $c_1$ and $c_2$ satisfy Assumption~\ref{ass:vel_reg} with the same state-Lipschitz function $L(t)$, and suppose Assumption~\ref{ass:cond_lip} holds. Then
\begin{align*}
    \mathcal{W}_2\!\left(\rho_0(\cdot\mid c_1),\rho_0(\cdot\mid c_2)\right)
    \le
    L_c
    \left(
    \int_0^T
    \exp\!\bigl(\Lambda(t)\bigr)
    dt
    \right)
    \|c_1-c_2\|.
\end{align*}
\end{proposition}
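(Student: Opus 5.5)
We compare the two learned paths directly through a synchronous coupling of their flow maps, then close the estimate with Grönwall.

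\textbf{Setup.} Fix $x_T\sim p_{\mathrm{noise}}$. Let $X^{(1)}_t$ and $X^{(2)}_t$ solve the probability flow ODE backward in time from the same terminal point $x_T$:
\begin{align*}
  \dot X^{(i)}_t = v_\phi\bigl(X^{(i)}_t,t,c_i\bigr), \qquad X^{(i)}_T=x_T .
\end{align*}
Under Assumption~\ref{ass:vel_reg}, with $L(t)$-Lipschitz velocities and $L\in L^1$, these Carathéodory flows exist and are unique. Their time-$0$ pushforwards are exactly $\rho_0(\cdot\mid c_1)$ and $\rho_0(\cdot\mid c_2)$. The pair $(X^{(1)}_0,X^{(2)}_0)$ is therefore a coupling, and
\begin{align*}
  \mathcal{W}_2^2 \le \mathbb{E}\|X^{(1)}_0-X^{(2)}_0\|^2 .
\end{align*}
It thus suffices to bound $\Delta_t:=\|X^{(1)}_t-X^{(2)}_t\|$ pathwise.

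\textbf{Differential inequality.} Write $v_\phi(X^{(1)},t,c_1)-v_\phi(X^{(2)},t,c_2)$ as the sum of
\begin{align*}
  \bigl[v_\phi(X^{(1)},t,c_1)-v_\phi(X^{(2)},t,c_1)\bigr] \quad\text{and}\quad \bigl[v_\phi(X^{(2)},t,c_1)-v_\phi(X^{(2)},t,c_2)\bigr].
\end{align*}
The state-Lipschitz bound controls the first bracket by $L(t)\Delta_t$. Assumption~\ref{ass:cond_lip} controls the second by $L_c\|c_1-c_2\|$, uniformly in the state. Since $\Delta_T=0$, integrating backward from $T$ gives
\begin{align*}
  \Delta_t\le \int_t^T \bigl(L(s)\Delta_s + L_c\|c_1-c_2\|\bigr)\,ds .
\end{align*}

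\textbf{Grönwall.} The integral form of Grönwall's inequality in reversed time yields
\begin{align*}
  \Delta_0 \le L_c\|c_1-c_2\|\int_0^T \exp\Bigl(\int_0^t L(s)\,ds\Bigr)dt = L_c\|c_1-c_2\|\int_0^T e^{\Lambda(t)}dt .
\end{align*}
Here the exponential weight at time $t$ is $\exp\bigl(\int_0^t L\bigr)=e^{\Lambda(t)}$. This is because a perturbation injected at time $t$ is propagated through the remaining interval $[0,t]$ on its way to time $0$. This matches the convention of Proposition~\ref{prop:W2_integrated}.

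\textbf{Conclusion.} The bound on $\Delta_0$ is deterministic and uniform in $x_T$. Taking the $L^2$ expectation over $x_T$ gives $\mathbb{E}\|X^{(1)}_0-X^{(2)}_0\|^2\le \bigl(L_c\|c_1-c_2\|\int_0^T e^{\Lambda(t)}dt\bigr)^2$. Combined with the coupling inequality of the setup step, this is the claim.

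\textbf{Main obstacle.} The delicate point is identifying the continuity-equation solutions $\rho_t(\cdot\mid c_i)$ with pushforwards of $p_{\mathrm{noise}}$ under the ODE flows. This needs uniqueness of solutions to the continuity equation for Lipschitz velocity fields, as in \citet{Ambrosio05}. I would cite this superposition/uniqueness result rather than reprove it. The only other detail is the bookkeeping of the reversed-time Grönwall argument, so that the weight comes out as $e^{\Lambda(t)}$ rather than $e^{\Lambda(T)-\Lambda(t)}$.
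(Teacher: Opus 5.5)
Your proof is correct and is essentially the paper's argument unpacked. The paper simply invokes the endpoint stability estimate of Proposition~\ref{prop:W2_integrated} for the two learned paths: $v_\phi(\cdot,t,c_1)$ is the Lipschitz field, the $c_2$ path is the reference, and the velocity discrepancy is bounded by $L_c\|c_1-c_2\|$ via Assumption~\ref{ass:cond_lip}. Your decomposition and reversed-time Gr\"onwall on a synchronous coupling do exactly this explicitly, and your weight $e^{\Lambda(t)}$ is the correct one. Since both fields are Lipschitz and the discrepancy is uniform in the state, your pathwise version needs only the identification of each learned path with the pushforward of $p_{\mathrm{noise}}$ under its flow, not the general $L^2$ stability result.
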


\begin{proof}
Apply the same endpoint stability argument as in Proposition~\ref{prop:W2_integrated} to the two learned paths generated under $c_1$ and $c_2$. These paths share the same terminal distribution $p_{\mathrm{noise}}$. By Assumption~\ref{ass:cond_lip},
\begin{align*}
    \|v_\phi(\cdot,t,c_1)-v_\phi(\cdot,t,c_2)\|_{L^2(\rho_t(\cdot\mid c_2))}
    \le
    L_c\|c_1-c_2\|.
\end{align*}
Substituting this bound into the endpoint stability estimate gives the result.
\end{proof}

Proposition~\ref{prop:cond_stability} shows that small perturbations of the condition lead to controlled perturbations of the generated law. This should be interpreted as a regularity property of the learned conditional generator, rather than as a calibrated epistemic uncertainty guarantee.

\subsection{Diffusion Last Layer}
\label{app:theory_dll}

We formalize the operator encoder reconstruction problem at a fixed condition $a$ in an abstract Hilbert space setting. This isolates the rank-$r$ approximation property independently of any particular parameterization.

\begin{assumption}[Hilbert setting and conditional second moments]
\label{as:hilbert}
Let $\mathcal{U}$ be a separable Hilbert space with inner product $\langle\cdot,\cdot\rangle$ and norm $\|\cdot\|$.
Fix $a\in\mathcal{A}$ and let $u$ be a $\mathcal{U}$-valued random element under the conditional law $\mathbb{P}(\cdot\mid a)$ such that
$\mathbb{E}[\|u\|^2\mid a]<\infty$.
Let $\Phi(a)=(\phi_1(a),\dots,\phi_r(a))\in\mathcal{U}^r$ define the subspace
$S(a):=\mathrm{span}\{\phi_1(a),\dots,\phi_r(a)\}\subset\mathcal{U}$, and let $P_{S(a)}$ denote the orthogonal projector onto $S(a)$.
\end{assumption}

\begin{lemma}[Projection decomposition]
\label{lem:pythagoras}
Under Assumption~\ref{as:hilbert}, for any measurable map $\hat u:\mathcal{U}\to S(a)$,
\begin{align*}
    \|u-\hat u(u)\|^2
    =
    \|u-P_{S(a)}u\|^2
    +
    \|P_{S(a)}u-\hat u(u)\|^2,
    \qquad \mathbb{P}(\cdot\mid a)\text{-a.s.}
\end{align*}
Consequently,
\begin{align*}
    \mathbb{E}\big[\|u-\hat u(u)\|^2\mid a\big]
    =
    \mathbb{E}\big[\|u-P_{S(a)}u\|^2\mid a\big]
    +
    \mathbb{E}\big[\|P_{S(a)}u-\hat u(u)\|^2\mid a\big].
\end{align*}
\end{lemma}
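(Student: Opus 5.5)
The plan is a pointwise Pythagorean argument followed by taking conditional expectations. Fix a realization of $u$ and write
\begin{align*}
    u-\hat u(u) = \bigl(u-P_{S(a)}u\bigr) + \bigl(P_{S(a)}u-\hat u(u)\bigr).
\end{align*}
The first term lies in $S(a)^\perp$ by the defining property of the orthogonal projector; this step needs $S(a)$ to be closed. The second term lies in $S(a)$, since both $P_{S(a)}u$ and $\hat u(u)$ take values in $S(a)$ and $S(a)$ is a linear subspace. Expanding the squared norm, the cross term $2\langle u-P_{S(a)}u,\,P_{S(a)}u-\hat u(u)\rangle$ therefore vanishes. This gives the almost sure identity for every realization, and in particular $\mathbb{P}(\cdot\mid a)$-a.s.

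For the expectation identity, the plan is to take conditional expectations of both sides. First, all three terms are nonnegative measurable random variables:
\begin{itemize}
    \item $P_{S(a)}$ is a bounded linear operator, hence continuous, so $u\mapsto P_{S(a)}u$ is measurable;
    \item $\hat u$ is measurable by assumption;
    \item the norm is continuous.
\end{itemize}
Linearity of conditional expectation for nonnegative variables then gives the second display, with values in $[0,\infty]$.

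To ensure the terms are finite, I would note that $\mathbb{E}[\|u-P_{S(a)}u\|^2\mid a]\le\mathbb{E}[\|u\|^2\mid a]<\infty$, because $I-P_{S(a)}$ is itself an orthogonal projector and so has norm at most one. The remaining term $\mathbb{E}[\|P_{S(a)}u-\hat u(u)\|^2\mid a]$ may be infinite, but then both sides are infinite and the identity still holds.

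The only point needing care is that $S(a)$ is closed, so that $P_{S(a)}$ is well defined. This is immediate because $S(a)$ is finite-dimensional, with dimension at most $r$, and finite-dimensional subspaces of a Hilbert space are closed. Beyond that the argument is routine; I expect no real obstacle, and the main work is the bookkeeping of measurability and integrability above.
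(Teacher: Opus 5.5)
Your proposal is correct and follows the same route as the paper's proof. The paper also writes the pointwise orthogonal decomposition, drops the cross term because $u-P_{S(a)}u\perp S(a)$ and $P_{S(a)}u-\hat u(u)\in S(a)$, and then takes conditional expectations. Your additional remarks are sound details that the paper leaves implicit: $S(a)$ is closed because it is finite-dimensional, measurability holds, and the second term may be infinite with both sides then equal to $\infty$.
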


\begin{proof}
Since $u-P_{S(a)}u\perp S(a)$ and $P_{S(a)}u-\hat u(u)\in S(a)$, the cross term vanishes. Expanding the squared norm and taking conditional expectation gives the result.
\end{proof}

\begin{proposition}[Fixed basis: optimal encoder equals orthogonal projection]
\label{prop:fixed_basis_projection}
Fix $a$ and $\Phi(a)$ as in Assumption~\ref{as:hilbert}. Consider reconstructions of the form
\begin{align*}
    \hat u_\xi(u)
    =
    \sum_{k=1}^r \xi_k(u)\phi_k(a)
    \in S(a),
\end{align*}
where $\xi:\mathcal{U}\to\mathbb{R}^r$ is measurable. Define
\begin{align*}
    \mathcal{L}(\xi;a,\Phi)
    :=
    \mathbb{E}\big[\|u-\hat u_\xi(u)\|^2\mid a\big].
\end{align*}
Then
\begin{align*}
    \inf_{\xi}\mathcal{L}(\xi;a,\Phi)
    =
    \mathbb{E}\big[\|u-P_{S(a)}u\|^2\mid a\big],
\end{align*}
and any minimizer satisfies $\hat u_{\xi^\star}(u)=P_{S(a)}u$ $\mathbb{P}(\cdot\mid a)$-a.s.
If the Gram matrix $G\in\mathbb{R}^{r\times r}$, $G_{ij}=\langle\phi_i(a),\phi_j(a)\rangle$, is invertible, then the unique coefficient vector representing the minimizer is
\begin{align*}
    \xi^\star(u)=G^{-1}b(u),
    \qquad
    b_i(u)=\langle u,\phi_i(a)\rangle .
\end{align*}
\end{proposition}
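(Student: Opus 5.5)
The plan is to reduce everything to Lemma~\ref{lem:pythagoras}. For any measurable coefficient map $\xi$, the reconstruction $\hat u_\xi(u)$ lies in $S(a)$ by construction. The lemma then gives
\begin{align*}
    \mathcal{L}(\xi;a,\Phi)
    =
    \mathbb{E}\big[\|u-P_{S(a)}u\|^2\mid a\big]
    +
    \mathbb{E}\big[\|P_{S(a)}u-\hat u_\xi(u)\|^2\mid a\big].
\end{align*}
The first term does not depend on $\xi$, and the second is nonnegative. Hence $\mathcal{L}(\xi;a,\Phi)\ge \mathbb{E}[\|u-P_{S(a)}u\|^2\mid a]$ for every admissible $\xi$. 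The right-hand side is finite because $\|u-P_{S(a)}u\|\le\|u\|$ and $\mathbb{E}[\|u\|^2\mid a]<\infty$.

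Next I show the bound is attained, so it equals the infimum. Since $S(a)$ is finite-dimensional, $P_{S(a)}u=\sum_k \eta_k(u)\phi_k(a)$ for some coefficients. I need to choose these coefficients measurably in $u$, even when the $\phi_k(a)$ are linearly dependent.

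To do this, take the Gram matrix $G$ and its Moore--Penrose pseudoinverse $G^{+}$, and set $\eta(u)=G^{+}b(u)$ with $b_i(u)=\langle u,\phi_i(a)\rangle$. The map $u\mapsto b(u)$ is continuous and linear, so $\eta$ is measurable. To check that it represents the projection, write $w=\sum_k \eta_k\phi_k(a)$. Then $\langle w,\phi_i(a)\rangle=(G\eta)_i=(GG^{+}b)_i$. Since $b(u)$ lies in the range of $G$, this equals $b_i(u)$. The range claim holds because $b(u)=b(P_{S(a)}u)$, and $b$ of any element $\sum_k c_k\phi_k(a)$ of $S(a)$ is $Gc$. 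It follows that $u-w\perp S(a)$ with $w\in S(a)$, so $w=P_{S(a)}u$. With this choice the second term vanishes, and the infimum is attained.

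For the characterization of minimizers, suppose $\xi^\star$ attains the infimum. The decomposition forces $\mathbb{E}[\|P_{S(a)}u-\hat u_{\xi^\star}(u)\|^2\mid a]=0$, so $\hat u_{\xi^\star}(u)=P_{S(a)}u$ almost surely. When $G$ is invertible, the $\phi_k(a)$ are linearly independent, so the coefficient vector of $P_{S(a)}u$ is unique. Taking inner products of $\sum_k\xi_k\phi_k(a)=P_{S(a)}u$ with each $\phi_i(a)$ gives the normal equations $G\xi=b(u)$, because $\langle P_{S(a)}u,\phi_i(a)\rangle=\langle u,\phi_i(a)\rangle$. Therefore $\xi^\star(u)=G^{-1}b(u)$ almost surely.

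The only delicate step is the measurable selection of coefficients in the degenerate-Gram case, which the pseudoinverse handles. The rest is a direct consequence of Lemma~\ref{lem:pythagoras}.
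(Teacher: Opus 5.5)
Your proof is correct and follows the paper's argument: the Pythagorean split from Lemma~\ref{lem:pythagoras} gives the lower bound, with equality if and only if $\hat u_\xi(u)=P_{S(a)}u$ almost surely, and for invertible $G$ the normal equations $G\xi=b(u)$ determine the coefficients. What you add is the explicit measurable choice $\eta(u)=G^{+}b(u)$, which shows the bound is actually attained even when $G$ is singular; the paper's two-line proof takes this step for granted.
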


\begin{proof}
The result follows directly from Lemma~\ref{lem:pythagoras}. Equality is attained if and only if $\hat u_\xi(u)=P_{S(a)}u$ almost surely. If $G$ is invertible, the coefficients of $P_{S(a)}u$ in the spanning set $\{\phi_k(a)\}_{k=1}^r$ solve the normal equations $G\xi=b(u)$.
\end{proof}

To identify the optimal subspace, we use the conditional second-moment operator of the uncentered output field.

\begin{lemma}
\label{lem:second_moment_op}
Under Assumption~\ref{as:hilbert}, define $M_a:\mathcal{U}\to\mathcal{U}$ by
\begin{align*}
    M_a f
    :=
    \mathbb{E}\big[\langle u,f\rangle u\mid a\big].
\end{align*}
Then $M_a$ is self-adjoint, positive semidefinite, and trace-class, with
\begin{align*}
    \mathrm{tr}(M_a)
    =
    \mathbb{E}\big[\|u\|^2\mid a\big].
\end{align*}
Therefore $M_a$ admits an orthonormal eigenbasis $(e_k(a))_{k\ge1}$ with eigenvalues
$\lambda_1(a)\ge\lambda_2(a)\ge\cdots\ge0$.
\end{lemma}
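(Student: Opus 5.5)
The plan is to establish each property of $M_a$ directly from the definition, with the conditional second moment $\mathbb{E}[\|u\|^2\mid a]<\infty$ from Assumption~\ref{as:hilbert} as the only quantitative input.

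\textbf{Well-definedness and boundedness.} For $f\in\mathcal{U}$, Cauchy--Schwarz gives $\|\langle u,f\rangle u\|\le \|u\|^2\|f\|$. Hence the $\mathcal{U}$-valued random element $\langle u,f\rangle u$ is Bochner integrable under $\mathbb{P}(\cdot\mid a)$. Separability of $\mathcal{U}$ ensures that measurability is not an issue. It follows that $M_a f$ is well defined, linear in $f$, and satisfies $\|M_a\|\le \mathbb{E}[\|u\|^2\mid a]$.

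\textbf{Self-adjointness and positivity.} Bochner integrals commute with bounded linear functionals, so
\begin{align*}
\langle M_a f,g\rangle=\mathbb{E}\big[\langle u,f\rangle\langle u,g\rangle\mid a\big].
\end{align*}
The right-hand side is symmetric in $(f,g)$, which gives self-adjointness. Taking $g=f$ yields $\mathbb{E}[\langle u,f\rangle^2\mid a]\ge0$, which gives positive semidefiniteness.

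\textbf{Trace-class and the trace identity.} This is the step needing most care. Fix any orthonormal basis $(h_j)_{j\ge1}$ of $\mathcal{U}$; one exists because $\mathcal{U}$ is separable. For a positive operator, trace-class is equivalent to $\sum_j\langle M_a h_j,h_j\rangle<\infty$, and this sum is then the trace, independent of the basis. Compute
\begin{align*}
\sum_j\langle M_a h_j,h_j\rangle=\sum_j\mathbb{E}\big[\langle u,h_j\rangle^2\mid a\big]=\mathbb{E}\Big[\sum_j\langle u,h_j\rangle^2\,\Big|\,a\Big]=\mathbb{E}\big[\|u\|^2\mid a\big]<\infty.
\end{align*}
Exchanging the sum and the expectation is justified by monotone convergence, since all terms are nonnegative. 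The final equality is Parseval's identity. This shows that $M_a$ is trace-class with $\mathrm{tr}(M_a)=\mathbb{E}[\|u\|^2\mid a]$.

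\textbf{Eigenbasis.} A trace-class operator is compact. The spectral theorem for compact self-adjoint operators then supplies an orthonormal system of eigenvectors for the nonzero eigenvalues, which are nonnegative by positivity and can be ordered decreasingly. Completing this system with an orthonormal basis of $\ker M_a$ (eigenvalue $0$) gives the orthonormal eigenbasis $(e_k(a))_{k\ge1}$ with $\lambda_1(a)\ge\lambda_2(a)\ge\cdots\ge0$. The only subtlety here is bookkeeping: if $\ker M_a$ is infinite-dimensional, the zero eigenvalues are listed after all positive ones. This is consistent with the ordering because $\lambda_k(a)\to0$.
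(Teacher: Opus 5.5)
Your argument is the paper's proof step for step: self-adjointness and positivity from $\langle M_a f,g\rangle=\mathbb{E}[\langle u,f\rangle\langle u,g\rangle\mid a]$, the trace identity from monotone convergence plus Parseval in an arbitrary orthonormal basis, then compactness and the spectral theorem. Your Bochner-integrability paragraph also supplies the well-definedness and the bound $\|M_a\|\le\mathbb{E}[\|u\|^2\mid a]$, which the paper leaves implicit.

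The one thing to fix is your closing bookkeeping remark. The obstruction is not an infinite-dimensional kernel. It is infinitely many positive eigenvalues together with $\ker M_a\neq\{0\}$.

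\begin{itemize}
\item If the kernel is nontrivial, an orthonormal eigenbasis of all of $\mathcal{U}$ must contain a kernel vector, because eigenvectors for positive eigenvalues are orthogonal to $\ker M_a$.
\item A nonincreasing sequence indexed by $k\ge1$ that reaches $0$ at some index $m$ has at most $m-1$ positive terms.
\item So you cannot list infinitely many positive eigenvalues ``first'' and the zero ones afterwards, and $\lambda_k(a)\to0$ does not change this.
\end{itemize}

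For example, take $(h_j)$ an orthonormal basis and $u=\sum_{j\ge1}j^{-1}\zeta_j h_{2j-1}$ with i.i.d.\ standard Gaussian $\zeta_j$. This gives $M_a=\sum_{j\ge1}j^{-2}\,h_{2j-1}\otimes h_{2j-1}$. No orthonormal eigenbasis of all of $\mathcal{U}$ for this operator can be indexed by $k\ge1$ with nonincreasing eigenvalues.

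This is really a defect of the statement itself. The paper's proof (``the spectral theorem yields the stated eigendecomposition'') passes over it as well. What survives, and what Proposition~\ref{prop:uncentered_kl_subspace} actually uses, is the following:
\begin{itemize}
\item an orthonormal family $(e_k(a))$ of eigenvectors spanning $(\ker M_a)^\perp$, with eigenvalues $\lambda_1(a)\ge\lambda_2(a)\ge\cdots>0$;
\item padded by kernel vectors with eigenvalue $0$ only when there are finitely many positive eigenvalues;
\item so that $M_a f=\sum_k\lambda_k(a)\langle f,e_k(a)\rangle e_k(a)$ and $\mathrm{tr}(M_a)=\sum_k\lambda_k(a)$.
\end{itemize}
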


\begin{proof}
For $f,g\in\mathcal{U}$,
\begin{align*}
    \langle M_a f,g\rangle
    =
    \mathbb{E}\big[\langle u,f\rangle\langle u,g\rangle\mid a\big]
    =
    \langle f,M_a g\rangle,
\end{align*}
so $M_a$ is self-adjoint. Moreover,
\begin{align*}
    \langle M_a f,f\rangle
    =
    \mathbb{E}\big[\langle u,f\rangle^2\mid a\big]\ge0,
\end{align*}
so $M_a$ is positive semidefinite. For any orthonormal basis $(q_k)_{k\ge1}$ of $\mathcal{U}$, monotone convergence and Parseval's identity give
\begin{align*}
    \sum_{k\ge1}\langle M_a q_k,q_k\rangle
    =
    \mathbb{E}\bigg[\sum_{k\ge1}\langle u,q_k\rangle^2\,\bigg|\,a\bigg]
    =
    \mathbb{E}\big[\|u\|^2\mid a\big]
    <\infty .
\end{align*}
Thus $M_a$ is trace-class, and its trace equals $\mathbb{E}[\|u\|^2\mid a]$. Since $M_a$ is compact and self-adjoint, the spectral theorem yields the stated eigendecomposition.
\end{proof}

\begin{proposition}[Optimal basis: conditional uncentered KL subspace]
\label{prop:uncentered_kl_subspace}
Fix $a$. Among all $r$-dimensional subspaces $S\subset\mathcal{U}$,
\begin{align*}
    \inf_{\dim(S)=r}
    \mathbb{E}\big[\|u-P_Su\|^2\mid a\big]
    =
    \sum_{k>r}\lambda_k(a),
\end{align*}
where $(\lambda_k(a),e_k(a))_{k\ge1}$ are the eigenpairs of $M_a$ from Lemma~\ref{lem:second_moment_op}. The infimum is attained by
\begin{align*}
    S_r^\star(a)
    =
    \mathrm{span}\{e_1(a),\dots,e_r(a)\}.
\end{align*}
We refer to this as the conditional uncentered Karhunen--Lo\`eve subspace.
\end{proposition}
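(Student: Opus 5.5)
The plan is to reduce the reconstruction error to a trace expression and then apply a Ky Fan type variational principle for the compact operator $M_a$.

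\textbf{Step 1: rewrite the error as a trace.} Fix an $r$-dimensional subspace $S$ with orthonormal basis $q_1,\dots,q_r$. Then $P_S u=\sum_{k\le r}\langle u,q_k\rangle q_k$, and Pythagoras gives $\|u-P_Su\|^2=\|u\|^2-\sum_{k\le r}\langle u,q_k\rangle^2$. Taking conditional expectations and using Lemma~\ref{lem:second_moment_op},
\begin{align*}
    \mathbb{E}\big[\|u-P_Su\|^2\mid a\big]
    =
    \mathrm{tr}(M_a)-\sum_{k=1}^r\langle M_a q_k,q_k\rangle
    =
    \sum_{k\ge1}\lambda_k(a)-\mathrm{tr}(P_S M_a P_S).
\end{align*}
All quantities are finite because $M_a$ is trace-class. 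The problem is therefore equivalent to maximizing $\sum_{k\le r}\langle M_a q_k,q_k\rangle$ over orthonormal $r$-tuples.

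\textbf{Step 2: the upper bound (the main step).} We must show $\sum_{k\le r}\langle M_a q_k,q_k\rangle\le\sum_{k\le r}\lambda_k(a)$. Expand in the eigenbasis:
\begin{align*}
    \sum_{k\le r}\langle M_a q_k,q_k\rangle=\sum_{j\ge1}\lambda_j(a)\,w_j,
    \qquad
    w_j:=\sum_{k\le r}\langle q_k,e_j(a)\rangle^2 .
\end{align*}
Here $w_j=\|P_S e_j(a)\|^2\in[0,1]$. Also $\sum_j w_j\le\sum_{k\le r}\|q_k\|^2=r$, with inequality possible only if the eigenbasis is incomplete on $\ker M_a$. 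That case is harmless, since we can complete $(e_j(a))$ to an orthonormal basis with zero eigenvalues.

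Maximizing $\sum_j\lambda_j w_j$ under $0\le w_j\le1$ and $\sum_j w_j\le r$ with nonincreasing $\lambda_j\ge0$ is a fractional knapsack problem. Its maximum is $\sum_{j\le r}\lambda_j$. To see this, write
\[
\sum_j\lambda_j w_j-\sum_{j\le r}\lambda_j=\sum_{j\le r}\lambda_j(w_j-1)+\sum_{j>r}\lambda_j w_j .
\]
The right-hand side is at most $\lambda_r\big(\sum_{j\le r}(w_j-1)+\sum_{j>r}w_j\big)\le0$. I expect this rearrangement to be the only real obstacle: one must handle infinite sums carefully and treat possible non-separability of $\ker M_a$. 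Absolute convergence from the trace-class property should settle the first point, and separability of $\mathcal{U}$ the second.

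\textbf{Step 3: attainment.} Take $S=S_r^\star(a)$, so $q_k=e_k(a)$. Then $w_j=1$ for $j\le r$ and $w_j=0$ otherwise. Step 1 gives error exactly $\sum_{k>r}\lambda_k(a)$, so the infimum is attained and equals the stated value.

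If $M_a$ has rank below $r$, choose the remaining $e_k$ in the kernel. The argument is unchanged, although the minimizer is then non-unique. Combined with Proposition~\ref{prop:fixed_basis_projection}, this identifies $\inf\mathcal{L}_{\mathrm{OE}}(a)$ as claimed in Proposition~\ref{prop:low_rank}.
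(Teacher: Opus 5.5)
Your proof is correct and follows the paper's route: reduce the residual to $\mathrm{tr}(M_a)-\mathrm{tr}(P_SM_a)$, maximize the trace over $r$-dimensional subspaces, and note that the maximum is attained at $\mathrm{span}\{e_1(a),\dots,e_r(a)\}$. The only difference is that you prove the Ky Fan bound $\mathrm{tr}(P_SM_a)\le\sum_{k\le r}\lambda_k(a)$ directly, using the weights $w_j=\|P_Se_j(a)\|^2\in[0,1]$ with $\sum_j w_j\le r$ and comparing against $\lambda_r(a)$, whereas the paper simply cites the variational characterization of eigenvalues; this makes your version more self-contained, and it needs no completion of the eigenbasis on $\ker M_a$, since Bessel's inequality already gives $\sum_j w_j\le r$.
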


\begin{proof}
For any $r$-dimensional subspace $S$,
\begin{align*}
    \mathbb{E}\big[\|u-P_Su\|^2\mid a\big]
    =
    \mathbb{E}\big[\|u\|^2\mid a\big]
    -
    \mathbb{E}\big[\|P_Su\|^2\mid a\big].
\end{align*}
Let $(s_i)_{i=1}^r$ be an orthonormal basis of $S$. Then
\begin{align*}
    \mathbb{E}\big[\|P_Su\|^2\mid a\big]
    &=
    \sum_{i=1}^r
    \mathbb{E}\big[\langle u,s_i\rangle^2\mid a\big] \\
    &=
    \sum_{i=1}^r
    \langle M_a s_i,s_i\rangle
    =
    \mathrm{tr}(P_SM_a).
\end{align*}
By the variational characterization of eigenvalues for positive trace-class self-adjoint operators,
\begin{align*}
    \sup_{\dim(S)=r}\mathrm{tr}(P_SM_a)
    =
    \sum_{k=1}^r\lambda_k(a),
\end{align*}
with equality attained, for example, by
$S=S_r^\star(a)=\mathrm{span}\{e_1(a),\dots,e_r(a)\}$.
Since
\begin{align*}
    \mathrm{tr}(M_a)
    =
    \sum_{k\ge1}\lambda_k(a)
    =
    \mathbb{E}\big[\|u\|^2\mid a\big],
\end{align*}
the minimum residual energy is
\begin{align*}
    \sum_{k>r}\lambda_k(a).
\end{align*}
\end{proof}

\begin{remark}[Uncentered versus centered KL]
The subspace above is defined by the second-moment operator $M_a=\mathbb{E}[u\otimes u\mid a]$, not by the centered covariance operator. Hence it is an uncentered KL-type subspace. When the conditional mean is nonzero, the leading directions may capture both mean structure and variability. If the conditional mean vanishes, the construction reduces to the usual centered conditional KL subspace.
\end{remark}

We now connect this characterization to the learnable operator encoder by assuming sufficient expressivity of the neural operator basis map and the coefficient encoder at the fixed input $a$.

\begin{assumption}[Universal approximation for the operator encoder]
\label{as:ua_oe_app}
Let $\mathcal{A}$ and $\mathcal{U}$ be separable Hilbert spaces and fix $r\in\mathbb{N}$. Fix $a\in\mathcal{A}$ and assume $\mathbb{E}[\|u\|^2\mid a]<\infty$. Consider parametrized maps
$\mathtt{NO}_{\psi}:\mathcal{A}\to\mathcal{U}^r$ and
$\mathtt{NF}_{\varphi}:\mathcal{U}\to\mathbb{R}^r$. Assume:
\begin{enumerate}
    \item For every $r$-dimensional subspace $S\subset\mathcal{U}$, there exists $\psi$ such that
    \begin{align*}
        \mathrm{span}(\mathtt{NO}_{\psi}(a))=S .
    \end{align*}
    \item For every $\Psi\in\mathcal{U}^r$ with $S=\mathrm{span}(\Psi)$, there exists $\varphi$ such that
    \begin{align*}
        \mathtt{NF}_{\varphi}(u)^\top\Psi
        =
        P_Su,
        \qquad
        \mathbb{P}(\cdot\mid a)\text{-a.s.}
    \end{align*}
\end{enumerate}
\end{assumption}

\begin{proof}[Proof of Proposition~\ref{prop:low_rank}]
Fix $\psi$ and define
\begin{align*}
    \Psi:=\mathtt{NO}_{\psi}(a)\in\mathcal{U}^r,
    \qquad
    S:=\mathrm{span}(\Psi).
\end{align*}
For any $\varphi$, the reconstruction
\begin{align*}
    \hat u_{\psi,\varphi}(u,a)
    :=
    \mathtt{NF}_{\varphi}(u)^\top\Psi
\end{align*}
lies in $S$. Therefore Proposition~\ref{prop:fixed_basis_projection} gives
\begin{align*}
    \mathcal{L}_{\mathrm{OE}}(\psi,\varphi;a)
    :=
    \mathbb{E}\big[\|u-\hat u_{\psi,\varphi}(u,a)\|^2\mid a\big]
    \ge
    \mathbb{E}\big[\|u-P_Su\|^2\mid a\big].
\end{align*}
By Assumption~\ref{as:ua_oe_app}, this lower bound is attainable for the fixed subspace $S$, so
\begin{align*}
    \inf_{\varphi}\mathcal{L}_{\mathrm{OE}}(\psi,\varphi;a)
    =
    \mathbb{E}\big[\|u-P_{\mathrm{span}(\mathtt{NO}_{\psi}(a))}u\|^2\mid a\big].
\end{align*}
Minimizing over $\psi$ and using the universal basis assumption yields
\begin{align*}
    \inf_{\psi,\varphi}\mathcal{L}_{\mathrm{OE}}(\psi,\varphi;a)
    =
    \inf_{\dim(S)=r}
    \mathbb{E}\big[\|u-P_Su\|^2\mid a\big].
\end{align*}
By Proposition~\ref{prop:uncentered_kl_subspace}, the optimal subspace is the rank-$r$ conditional uncentered KL subspace spanned by the leading eigenfunctions of $M_a$. This proves the claim.
\end{proof}

\section{Experimental Details}\label{app:experimental_details}

\subsection{Architectural Details}\label{app:archi}

\paragraph{FNO backbone.}
Across all benchmarks, we use FNO as the backbone neural operator. Unless stated otherwise, the backbone maps one input channel to one output channel. In both 1D and 2D, we use hidden width $64$, four Fourier layers, and retain $32$ Fourier modes per spatial dimension, namely $[32]$ in 1D and $[32,32]$ in 2D. The same FNO specification is used to construct the conditioning embedders in DLL.

\paragraph{FNO dropout.}
For the Monte Carlo dropout baseline, we use the same FNO architecture and apply dropout only in the channel MLP blocks, with dropout probability $p=0.2$. At test time, predictive distributions are approximated by repeated stochastic forward passes. In our experiments, we use $K=32$ samples.

\paragraph{PNO.}
PNO uses the same FNO backbone, with the final layer outputting two channels for a location scale parameterization. The scale is obtained through a softplus transform. Training uses a reparameterized sampling scheme with an energy score objective. The base configuration uses $8$ samples per training example, while evaluation uses $K=32$ predictive samples.

\paragraph{DM and LDM.}
For grid based generative baselines, we consider DM and LDM. Both are trained with a flow matching objective and EMA of the model weights. DM uses conditional U-Net backbones in 1D and 2D, with conditioning performed by channel wise concatenation of the conditioning field and the noisy state. LDM uses a two stage pipeline. We first train a VAE style autoencoder with latent width $z_{\mathrm{ch}}=4$, double latent channel parameterization enabled, and KL weight $10^{-6}$. We then train a conditional U-Net in latent space with the same flow matching objective. In the current configuration, conditioning for LDM is obtained from the frozen autoencoder encoder and concatenated channel wise with the noisy latent.

\paragraph{DLL.}
For DLL, we first train an operator encoder. Its backbone is an FNO that produces an input dependent feature field with $r=64$ channels, and a learned FNO based output embedder maps target fields to last layer coefficients in $\mathbb{R}^{64}$. We then freeze the operator encoder and train a diffusion model in coefficient space. The velocity model is a conditional MLP with three hidden layers of width $512$, time embedding dimension $32$, and dropout $0.2$. It is conditioned on an FNO embedding of the input. Unless stated otherwise, we evaluate with $K=32$ samples.

\paragraph{Conditional U-Net backbone.}
For the diffusion based baselines, we use conditional U-Net backbones in 1D and 2D. The 1D model uses base width $32$, and the 2D model uses base width $64$. Both use channel multipliers $(1,2,4,8)$ and two residual blocks at each resolution level. Time is encoded by a sinusoidal embedding of dimension $32$ and injected into every residual block through FiLM style modulation. Attention is applied only at intermediate resolutions, namely resolution $32$ in 1D and resolution $16$ in 2D. Conditioning is incorporated by concatenating the conditioning features with the noisy input along the channel dimension. When needed, conditioning features are broadcast or resized to match the current spatial resolution. We use SiLU activations and no dropout in the U-Net blocks.

\begin{table}[t]
\centering
\caption{Backbone parameter counts, in millions, for 1D and 2D experiments.}
\label{tab:param_counts_backbones}
\begin{tabular}{lcccccc}
\toprule
 & FNO & FNO dropout & PNO & DM & LDM & DLL \\
\midrule
1D & 0.329M & 0.329M & 0.329M & 3.732M & 3.733M & 2.117M \\
2D & 8.964M & 8.964M & 8.964M & 34.646M & 34.665M & 10.751M \\
\bottomrule
\end{tabular}
\end{table}

\begin{table}[t]
\centering
\caption{Encoder parameter counts, in millions, for latent representations. AE denotes the autoencoder used by LDM, and OE denotes the operator encoder used by DLL.}
\label{tab:param_counts_encoders}
\begin{tabular}{lcc}
\toprule
 & LDM (AE) & DLL (OE) \\
\midrule
1D & 3.421M & 0.675M \\
2D & 9.743M & 17.944M \\
\bottomrule
\end{tabular}
\end{table}

\paragraph{Parameter counts.}
Tables~\ref{tab:param_counts_backbones} and~\ref{tab:param_counts_encoders} summarize trainable parameter counts for the backbone models and the additional latent encoders. FNO, FNO dropout, and PNO are matched in parameter budget within each spatial dimension. The diffusion baselines are larger, especially in 2D, whereas DLL is more parameter efficient at the backbone level. Table~\ref{tab:param_counts_encoders} reports the additional encoder parameters required by LDM and DLL.

\subsection{Training Configurations}\label{app:training_configs}

\paragraph{Optimization and normalization.}
All models are trained with AdamW using learning rate $10^{-3}$, zero weight decay, cosine learning rate annealing, and gradient clipping with threshold $1.0$. We use Gaussian normalization for both inputs and outputs. For stochastic operator learning tasks, input and output normalizers are fitted separately. For DM, LDM, and DLL, we use EMA with decay $0.999$.

\paragraph{Training schedules.}
For stochastic operator learning tasks, all methods are trained for $100$ epochs. For autoregressive rollout benchmarks, all methods are trained for $500$ epochs.

\subsection{Dataset Generation Details}\label{app:data}

\paragraph{Stochastic Burgers' equation.}
We generate data from the 1D viscous stochastic Burgers' equation on the periodic domain $x\in[0,2\pi]$,
\begin{align*}
\mathrm{d}u
&=
\left(-\tfrac12\,\partial_x(u^2)+\nu\,\partial_{xx}u\right)\mathrm{d}t
+\sum_{j\in\{1,3,5\}} w_j \cos(jx)\,\mathrm{d}W_t^{j},
\end{align*}
with viscosity $\nu=0.1$, and weights $(w_1,w_3,w_5)=(1.0,0.5,0.1)$. Space is discretized by a pseudospectral Fourier method on $N=256$ grid points with two thirds dealiasing. Time integration uses ETDRK4 for the drift and Euler Maruyama for the additive noise. We form one step input output pairs with macro step $\Delta t=1.0$ and internal step size $\Delta t_{\mathrm{sim}}=10^{-4}$. Initial conditions are sampled as smooth random Fourier series with coefficients decaying as $k^{-2}$ and are then scaled to unit amplitude. We use $10{,}000$ training inputs and $32$ validation and test inputs, with one training output per input and $64$ outputs per input for validation and test.

\paragraph{Stochastic Darcy flow.}
We consider Darcy flow on $\Omega=(0,1)^2$ with homogeneous Dirichlet boundary conditions,
\begin{align*}
-\nabla\cdot\bigl(a(x)\nabla u(x)\bigr)
&= f(x), \qquad x\in\Omega, \\
u(x)
&= 0, \qquad x\in\partial\Omega .
\end{align*}
The domain is discretized on a $128\times128$ uniform grid, and the resulting symmetric positive definite linear systems are solved in batches by conjugate gradients with tolerance $10^{-6}$ and maximum iteration count $5000$.

The permeability field is sampled as a thresholded Gaussian random field represented in a DCT II basis, which yields a binary field $a(x)\in\{12,3\}$. Aleatoric uncertainty arises through the random source
\begin{align*}
f(x)
&=
\sigma_{\ln}\exp\bigl(G_{\ln}(x)\bigr)
+
\sigma_{\mathrm{gp}}G_{\mathrm{gp}}(x),
\end{align*}
where $G_{\ln}$ and $G_{\mathrm{gp}}$ are independent mean zero Gaussian random fields with separable RBF covariances. In our experiments,
\begin{align*}
(\sigma_{\ln},\ell_{\ln})
&=
(1.0,0.2),
\qquad
(\sigma_{\mathrm{gp}},\ell_{\mathrm{gp}})
=
(9.0,0.5),
\end{align*}
with jitter $10^{-5}$ for numerical stability. Dataset sizes match those of stochastic Burgers: $10{,}000$ training inputs and $32$ validation and test inputs, with one training output per input and $64$ outputs per input for validation and test.

\paragraph{KS equation.}
For the Kuramoto Sivashinsky benchmark, we follow the physical scenario in APEBench \cite{Koehler24}. Trajectories are generated on a grid of size $256$ with output spacing $\Delta t=1.0$ and $100$ internal substeps per saved step. We use $1024$ training trajectories of horizon $50$ and $128$ held out trajectories of horizon $100$, each preceded by $100$ warmup steps. The held out trajectories are deterministically shuffled and split evenly into validation and test sets.

\paragraph{Kolmogorov flow.}
For the Kolmogorov flow benchmark, we follow the physical scenario in APEBench \cite{Koehler24}. Trajectories are generated on a $128\times128$ grid with output spacing $\Delta t=0.25$ and $25$ internal substeps per saved step. We use $256$ training trajectories of horizon $50$ and $32$ held out trajectories of horizon $100$, each preceded by $400$ warmup steps. The held out trajectories are deterministically shuffled and split evenly into validation and test sets.

\subsection{Metrics}\label{app:metrics}

We report complementary pointwise and distributional metrics for stochastic operator learning, and long horizon forecast metrics for autoregressive rollouts. All fields are flattened when computing sample set distances. When a deterministic prediction is required from a probabilistic model, we use the ensemble mean of $K$ generated samples.

\paragraph{Stochastic operator learning.}
For a fixed conditioning input, let $\{x_k\}_{k=1}^K$ denote predictive samples and $\{y_s\}_{s=1}^S$ denote reference samples. We report the following metrics.

\begin{itemize}
    \item \textbf{Energy distance (ED):}
    \begin{align*}
        \mathrm{ED}(X,Y)
        &=
        2\,\mathbb{E}\|X-Y\|_2
        -
        \mathbb{E}\|X-X'\|_2
        -
        \mathbb{E}\|Y-Y'\|_2,
    \end{align*}
    where $X,X'$ are independent draws from the predictive distribution and $Y,Y'$ are independent draws from the empirical target distribution.

    \item \textbf{Sliced Wasserstein distance (SWD):}
    \begin{align*}
        \mathrm{SWD}(X,Y)
        &=
        \frac{1}{P}\sum_{p=1}^{P}
        \mathcal{W}_1\bigl(\langle X,v_p\rangle,\langle Y,v_p\rangle\bigr),
    \end{align*}
    where the average is taken over random projection directions $\{v_p\}_{p=1}^P$.

    \item \textbf{$\mathrm{NRMSE}_m$ (mean error):}
    Let $\mu_{\mathrm{pred}}=\frac{1}{K}\sum_{k=1}^K x_k$ and $\mu_{\mathrm{true}}=\frac{1}{S}\sum_{s=1}^S y_s$. We define
    \begin{align*}
        \mathrm{NRMSE}_m
        &=
        \frac{\mathrm{RMSE}(\mu_{\mathrm{pred}},\mu_{\mathrm{true}})}
        {\sqrt{\mathbb{E}[\mu_{\mathrm{true}}^2]}}.
    \end{align*}

    \item \textbf{$\mathrm{NRMSE}_s$ (spread error):}
    Let $\sigma_{\mathrm{pred}}$ and $\sigma_{\mathrm{true}}$ denote the pointwise ensemble standard deviations of the predictive and reference samples. We define
    \begin{align*}
        \mathrm{NRMSE}_s
        &=
        \frac{\mathrm{RMSE}(\sigma_{\mathrm{pred}},\sigma_{\mathrm{true}})}
        {\sqrt{\mathbb{E}[\sigma_{\mathrm{true}}^2]}}.
    \end{align*}
\end{itemize}

\paragraph{Autoregressive rollouts.}
For rollout benchmarks, all metrics are first computed at each forecast step and then averaged over the rollout horizon, excluding the initial condition. For probabilistic models, the deterministic forecast used in pointwise error metrics is the ensemble mean.

\begin{itemize}
    \item \textbf{NRMSE.} At each rollout step, we compute the RMSE between the point prediction and the ground truth field, normalized by the target RMS $L^2$ norm at that step. We then average the normalized error over time.

    \item \textbf{Continuous ranked probability score (CRPS).} For probabilistic forecasts, we report the empirical CRPS,
    \begin{align*}
        \mathrm{CRPS}
        &=
        \mathbb{E}|X-y|
        -
        \tfrac12\,\mathbb{E}|X-X'|,
    \end{align*}
    where $y$ is the ground truth realization and the expectations are approximated using the predicted ensemble.

    \item \textbf{Spread skill ratio (SSR).} To assess calibration during rollout, we report
    \begin{align*}
        \mathrm{SSR}
        &=
        \frac{\mathrm{Spread}}{\mathrm{RMSE}+\varepsilon},
        \qquad
        \mathrm{Spread}
        =
        \sqrt{\mathbb{E}[\mathrm{Var}(X)]},
        \qquad
        \mathrm{RMSE}
        =
        \sqrt{\mathbb{E}\big[(\mathbb{E}[X]-y)^2\big]},
    \end{align*}
    where $\varepsilon>0$ is a small constant for numerical stability. Values near $1$ indicate that predictive spread is commensurate with forecast error.
\end{itemize}

\section{Ablation Studies}\label{app:ablation}

In this appendix, we provide additional ablation studies for DLL. Unless otherwise specified, we use the same experimental protocol and evaluation metrics as in Appendix~\ref{app:experimental_details}. We study the effect of the coefficient rank $r$, the number of function evaluations (NFE) used during flow matching sampling, and the training dataset size. We also report an additional compatibility experiment using a DeepONet backbone.

\subsection{Ablation on the Coefficient Rank}
\label{app:rank_ablation}

We first study the effect of the coefficient rank $r$, which determines the dimension of the DLL coefficient space. Tables~\ref{tab:rank_ablation_stochastic} and~\ref{tab:rank_ablation_deterministic} report results for $r\in\{16,32,64,128\}$.

On the stochastic benchmarks, increasing $r$ consistently improves the reconstruction error of the operator encoder, indicating that a larger coefficient space provides a more expressive low-rank representation. However, this improvement does not necessarily translate into better distributional fidelity. In particular, ED, SWD, and moment errors are not monotone in $r$. This suggests that after a moderate rank, the additional coefficient dimensions may make the coefficient-space flow-matching model harder to learn without providing a clear gain in the final predictive distribution.

On the deterministic rollout benchmarks, the reconstruction error is already very small and remains largely insensitive to $r$. The rollout metrics also show mixed behavior rather than monotone improvement. These results support the use of a moderate rank, and we use $r=64$ in the main experiments.

\begin{table}[t]
\caption{\textbf{Ablation on coefficient rank for stochastic benchmarks.} Recon denotes the reconstruction NRMSE of the operator encoder. Lower is better for all metrics.}
\label{tab:rank_ablation_stochastic}
\centering
\small
\begin{tabular}{llcccc}
\toprule
Dataset & Metric & $r=16$ & $r=32$ & $r=64$ & $r=128$ \\
\midrule
Burgers
& ED $\downarrow$                     & 1.161 & 1.309 & 1.285 & 1.314 \\
& SWD $\downarrow$                    & 0.219 & 0.245 & 0.213 & 0.228 \\
& $\mathrm{NRMSE}_\mathrm{m}\downarrow$ & 0.260 & 0.279 & 0.252 & 0.238 \\
& $\mathrm{NRMSE}_\mathrm{s}\downarrow$ & 0.230 & 0.265 & 0.289 & 0.315 \\
& Recon $\downarrow$                  & $6.04{\times}10^{-2}$ & $5.37{\times}10^{-2}$ & $4.13{\times}10^{-2}$ & $3.50{\times}10^{-2}$ \\
\midrule
Darcy
& ED $\downarrow$                     & 0.194 & 0.198 & 0.227 & 0.282 \\
& SWD $\downarrow$                    & 0.006 & 0.006 & 0.007 & 0.008 \\
& $\mathrm{NRMSE}_\mathrm{m}\downarrow$ & 0.351 & 0.363 & 0.355 & 0.406 \\
& $\mathrm{NRMSE}_\mathrm{s}\downarrow$ & 0.314 & 0.318 & 0.357 & 0.503 \\
& Recon $\downarrow$                  & $5.48{\times}10^{-2}$ & $4.19{\times}10^{-2}$ & $4.11{\times}10^{-2}$ & $3.72{\times}10^{-2}$ \\
\bottomrule
\end{tabular}
\end{table}

\begin{table}[t]
\caption{\textbf{Ablation on coefficient rank for deterministic rollout benchmarks.} Lower is better for NRMSE, CRPS, and Recon. SSR is optimal when close to one.}
\label{tab:rank_ablation_deterministic}
\centering
\small
\begin{tabular}{llcccc}
\toprule
Dataset & Metric & $r=16$ & $r=32$ & $r=64$ & $r=128$ \\
\midrule
KS
& NRMSE $\downarrow$ & 0.333 & 0.336 & 0.343 & 0.326 \\
& CRPS $\downarrow$  & 0.460 & 0.458 & 0.470 & 0.448 \\
& SSR $\to 1$        & 1.186 & 1.078 & 0.949 & 1.153 \\
& Recon $\downarrow$ & $2.43{\times}10^{-4}$ & $2.51{\times}10^{-4}$ & $2.45{\times}10^{-4}$ & $2.52{\times}10^{-4}$ \\
\midrule
Kolmogorov
& NRMSE $\downarrow$ & 0.413 & 0.467 & 0.426 & 0.426 \\
& CRPS $\downarrow$  & 0.779 & 0.847 & 0.822 & 0.803 \\
& SSR $\to 1$        & 0.846 & 0.732 & 0.620 & 0.765 \\
& Recon $\downarrow$ & $3.41{\times}10^{-3}$ & $3.41{\times}10^{-3}$ & $3.25{\times}10^{-3}$ & $3.51{\times}10^{-3}$ \\
\bottomrule
\end{tabular}
\end{table}

\subsection{Ablation on the Number of Function Evaluations}
\label{app:nfe_ablation}

We next vary the number of function evaluations (NFE) used when solving the learned probability-flow ODE in coefficient space. Tables~\ref{tab:nfe_ablation_stochastic} and~\ref{tab:nfe_ablation_deterministic} report results for $\mathrm{NFE}\in\{3,5,10,20,30,50\}$.

On stochastic benchmarks, increasing NFE generally improves distributional fidelity, especially when moving from very small values such as $\mathrm{NFE}=3$ to moderate values. The improvement saturates after a moderate number of function evaluations, suggesting that long sampling trajectories are not necessary for these PDE benchmarks. On deterministic rollout benchmarks, pointwise accuracy is relatively stable across different values of NFE, while spread-related metrics such as SSR are more sensitive. Overall, DLL achieves acceptable performance with a small or moderate NFE, which helps reduce sampling cost.

\begin{table}[t]
\caption{\textbf{Ablation on NFE for stochastic benchmarks.} Lower is better for all metrics.}
\label{tab:nfe_ablation_stochastic}
\centering
\small
\begin{tabular}{llcccccc}
\toprule
Dataset & Metric & NFE $=3$ & NFE $=5$ & NFE $=10$ & NFE $=20$ & NFE $=30$ & NFE $=50$ \\
\midrule
Burgers
& ED $\downarrow$                     & 2.118 & 1.706 & 1.336 & 1.432 & 1.332 & 1.226 \\
& SWD $\downarrow$                    & 0.304 & 0.264 & 0.233 & 0.209 & 0.191 & 0.229 \\
& $\mathrm{NRMSE}_\mathrm{m}\downarrow$ & 0.238 & 0.246 & 0.251 & 0.292 & 0.274 & 0.257 \\
& $\mathrm{NRMSE}_\mathrm{s}\downarrow$ & 0.481 & 0.394 & 0.300 & 0.281 & 0.270 & 0.243 \\
\midrule
Darcy
& ED $\downarrow$                     & 0.434 & 0.292 & 0.245 & 0.246 & 0.236 & 0.236 \\
& SWD $\downarrow$                    & 0.008 & 0.007 & 0.007 & 0.006 & 0.006 & 0.006 \\
& $\mathrm{NRMSE}_\mathrm{m}\downarrow$ & 0.316 & 0.355 & 0.389 & 0.430 & 0.391 & 0.406 \\
& $\mathrm{NRMSE}_\mathrm{s}\downarrow$ & 0.506 & 0.402 & 0.374 & 0.379 & 0.421 & 0.419 \\
\bottomrule
\end{tabular}
\end{table}

\begin{table}[t]
\caption{\textbf{Ablation on NFE for deterministic rollout benchmarks.} Lower is better for NRMSE and CRPS. SSR is optimal when close to one.}
\label{tab:nfe_ablation_deterministic}
\centering
\small
\begin{tabular}{llcccccc}
\toprule
Dataset & Metric & NFE $=3$ & NFE $=5$ & NFE $=10$ & NFE $=20$ & NFE $=30$ & NFE $=50$ \\
\midrule
Kolmogorov
& NRMSE $\downarrow$ & 0.426 & 0.415 & 0.423 & 0.440 & 0.443 & 0.427 \\
& CRPS $\downarrow$  & 0.796 & 0.777 & 0.815 & 0.848 & 0.853 & 0.814 \\
& SSR $\to 1$        & 0.942 & 0.819 & 0.630 & 0.650 & 0.667 & 0.720 \\
\midrule
KS
& NRMSE $\downarrow$ & 0.371 & 0.341 & 0.369 & 0.351 & 0.345 & 0.353 \\
& CRPS $\downarrow$  & 0.517 & 0.465 & 0.521 & 0.488 & 0.484 & 0.494 \\
& SSR $\to 1$        & 1.751 & 1.328 & 0.865 & 0.798 & 0.777 & 0.681 \\
\bottomrule
\end{tabular}
\end{table}

\subsection{Ablation on Dataset Size}
\label{app:data_ablation}

We also study the effect of training dataset size on the stochastic benchmarks. Table~\ref{tab:data_size_ablation} reports results for $N\in\{2500,5000,10000\}$.

The dataset size plays an important role in training flow-matching-based generative surrogates. On both stochastic Burgers and stochastic Darcy, increasing the number of training samples generally improves reconstruction and distributional fidelity. In particular, ED, SWD, and reconstruction error decrease as the dataset size grows. These results indicate that DLL benefits from sufficient data when learning conditional predictive distributions. In regimes where data are scarce, incorporating additional physical structure or physics-informed guidance is a promising direction for improving sample efficiency.

\begin{table}[t]
\caption{\textbf{Ablation on dataset size for stochastic benchmarks.} Recon denotes reconstruction NRMSE. Lower is better for all metrics.}
\label{tab:data_size_ablation}
\centering
\small
\begin{tabular}{llccc}
\toprule
Dataset & Metric & $N=2500$ & $N=5000$ & $N=10000$ \\
\midrule
Burgers
& ED $\downarrow$                     & 1.548 & 1.504 & 1.285 \\
& SWD $\downarrow$                    & 0.250 & 0.228 & 0.213 \\
& $\mathrm{NRMSE}_\mathrm{m}\downarrow$ & 0.349 & 0.300 & 0.252 \\
& $\mathrm{NRMSE}_\mathrm{s}\downarrow$ & 0.273 & 0.294 & 0.289 \\
& Recon $\downarrow$                  & $1.137{\times}10^{-1}$ & $5.713{\times}10^{-2}$ & $4.132{\times}10^{-2}$ \\
\midrule
Darcy
& ED $\downarrow$                     & 0.367 & 0.241 & 0.227 \\
& SWD $\downarrow$                    & 0.009 & 0.007 & 0.007 \\
& $\mathrm{NRMSE}_\mathrm{m}\downarrow$ & 0.555 & 0.357 & 0.355 \\
& $\mathrm{NRMSE}_\mathrm{s}\downarrow$ & 0.535 & 0.413 & 0.357 \\
& Recon $\downarrow$                  & $9.180{\times}10^{-2}$ & $6.084{\times}10^{-2}$ & $4.114{\times}10^{-2}$ \\
\bottomrule
\end{tabular}
\end{table}

\subsection{Compatibility with DeepONet Backbones}
\label{app:deeponet}

Although the main experiments use FNO backbones, DLL is not restricted to FNO. To demonstrate compatibility with another operator architecture, we integrate DLL with a DeepONet backbone on a Darcy inverse problem.

We consider the linear elliptic problem
\begin{align*}
    -\nabla\cdot\big(a(x)\nabla u(x)\big) &= 1,
    \qquad x\in\Omega, \\
    u(x) &= 0,
    \qquad x\in\partial\Omega .
\end{align*}
In this inverse problem, the conditioning variable is the vector of noisy pressure observations, while the target field is the permeability $a$. We generate $10{,}000$ samples by drawing the permeability field $a$ from a smooth log-Gaussian random field prior on a $64\times 64$ grid. For each sample, we solve the forward problem and observe noisy point evaluations of $u$ on a fixed $5\times 5$ sensor grid, using additive Gaussian noise with standard deviation $0.01$. The task is to reconstruct the full permeability field $a$ on the $64\times64$ grid from these noisy sensor observations.

When adapting DLL to DeepONet, the output embedder in the operator encoder and the conditioning encoder for the flow-matching model are implemented using simple MLPs. Table~\ref{tab:deeponet_dll} shows that DeepONet-DLL achieves reconstruction accuracy comparable to deterministic and dropout baselines, while improving probabilistic metrics such as CRPS and SSR. This result suggests that DLL can be combined with operator backbones beyond FNO.

\begin{table}[t]
\caption{\textbf{DLL with a DeepONet backbone.} Results on a Darcy inverse problem. Lower is better for NRMSE and CRPS. SSR is optimal when close to one.}
\label{tab:deeponet_dll}
\centering
\small
\begin{tabular}{lccc}
\toprule
Method & NRMSE $\downarrow$ & CRPS $\downarrow$ & SSR $\to 1$ \\
\midrule
\texttt{DeepONet}         & 0.200 & --    & --    \\
\texttt{DeepONet-Dropout} & \textbf{0.194} & 0.142 & 0.232 \\
\texttt{DeepONet-DLL}     & 0.197 & \textbf{0.121} & \textbf{0.783} \\
\bottomrule
\end{tabular}
\end{table}

\section{Qualitative Results}

In this section, we visualize conditional samples of the target field generated by our model across our experimental benchmarks.

\subsection{Stochastic Problems} \label{app:fig_stochastic}

\begin{figure*}[t]
    \centering

    % ================= Row 1: 1--4 =================
    \begin{subfigure}[t]{0.24\textwidth}
        \centering
        \includegraphics[width=\linewidth]{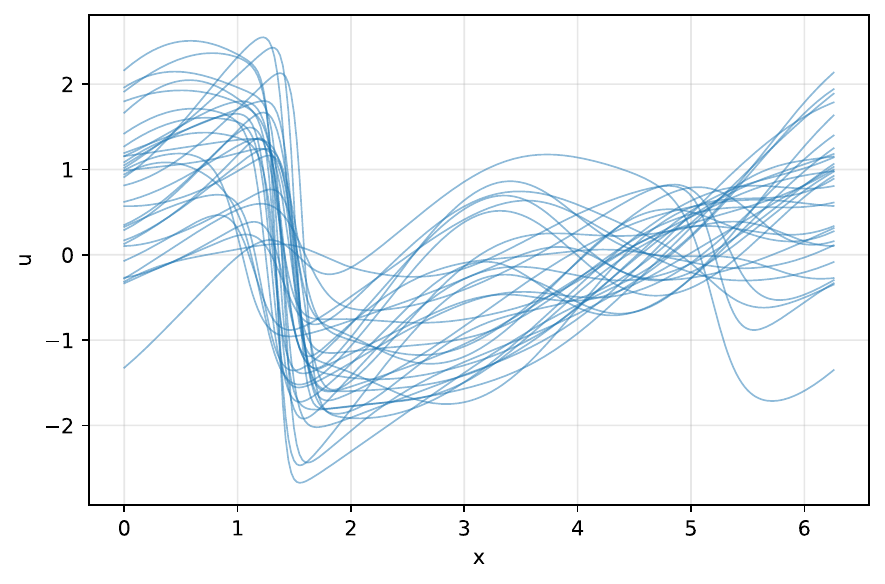}\\[0.35em]
        \includegraphics[width=\linewidth]{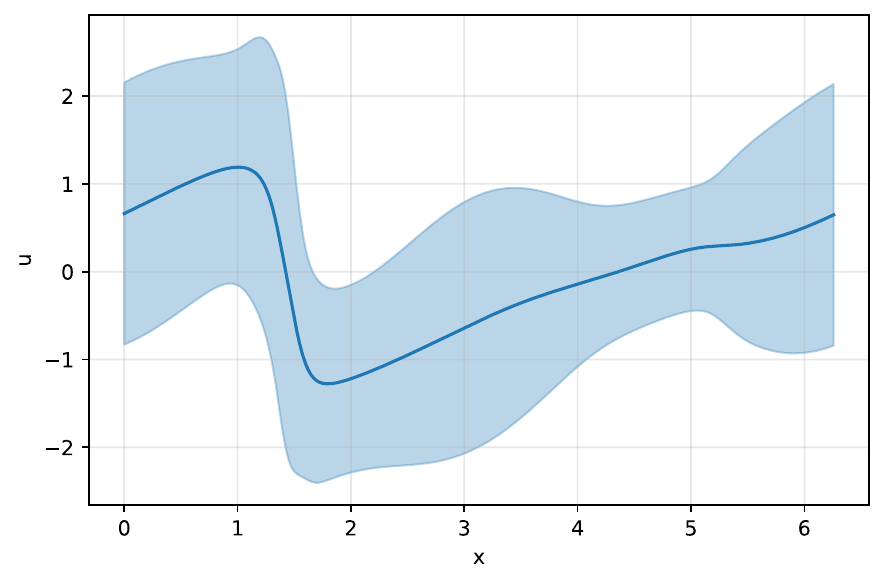}
        \caption{Ground truth}
        \label{fig:burgers:1}
    \end{subfigure}
    \begin{subfigure}[t]{0.24\textwidth}
        \centering
        \includegraphics[width=\linewidth]{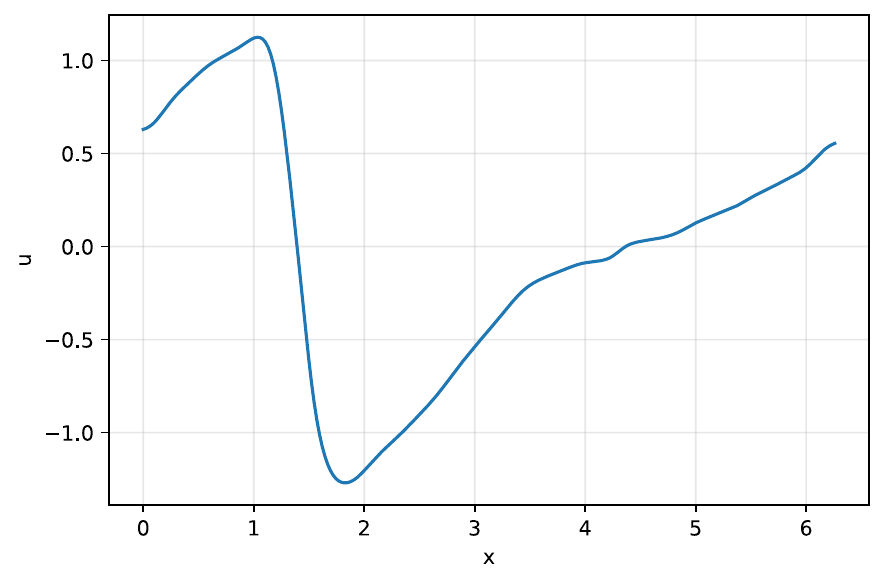}
        \phantom{\includegraphics[width=\linewidth]{figs/burgers/01b_true_data_mean_std.pdf}}
        \caption{FNO}
        \label{fig:burgers:2}
    \end{subfigure}
    \begin{subfigure}[t]{0.24\textwidth}
        \centering
        \includegraphics[width=\linewidth]{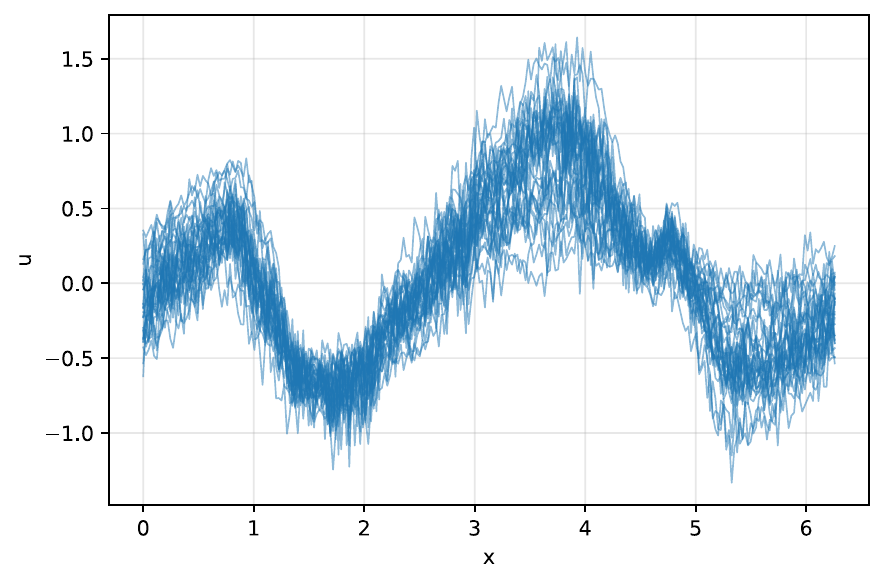}\\[0.35em]
        \includegraphics[width=\linewidth]{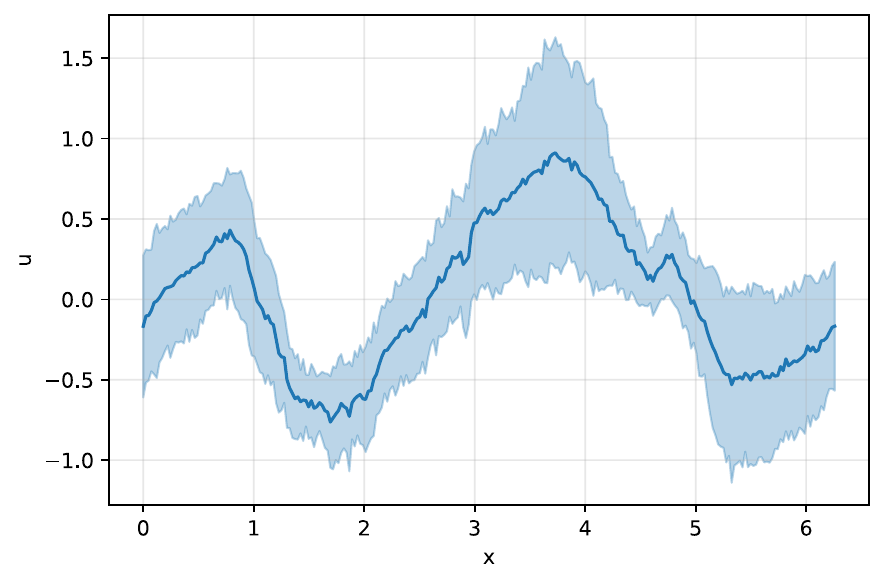}
        \caption{FNO-d}
        \label{fig:burgers:3}
    \end{subfigure}
    \begin{subfigure}[t]{0.24\textwidth}
        \centering
        \includegraphics[width=\linewidth]{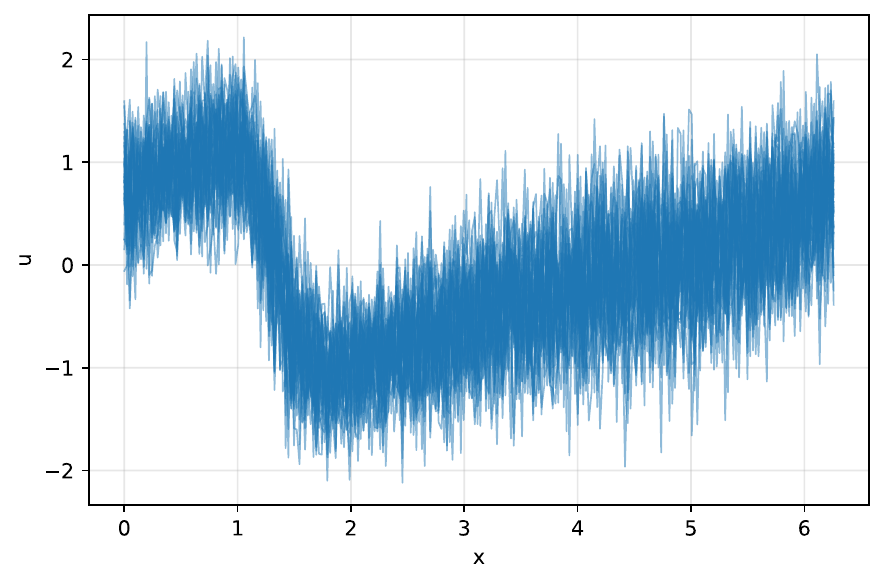}\\[0.35em]
        \includegraphics[width=\linewidth]{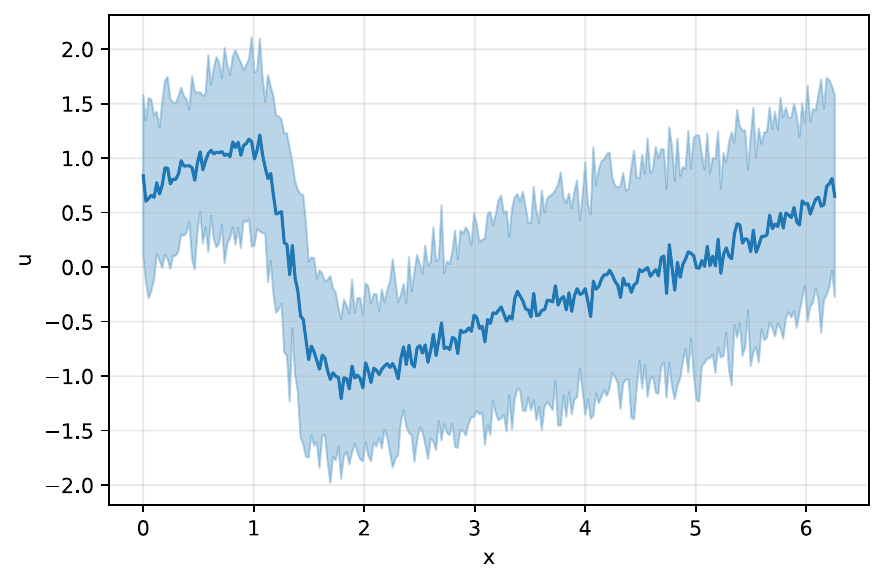}
        \caption{PNO}
        \label{fig:burgers:4}
    \end{subfigure}

    \vspace{0.8em}

    % ================= Row 2: 5--7 (same width as row 1) =================
    % left spacer to center 3 panels while keeping same 0.24\textwidth size
    \begin{subfigure}[t]{0.12\textwidth}\end{subfigure}\hfill

    \begin{subfigure}[t]{0.24\textwidth}
        \centering
        \includegraphics[width=\linewidth]{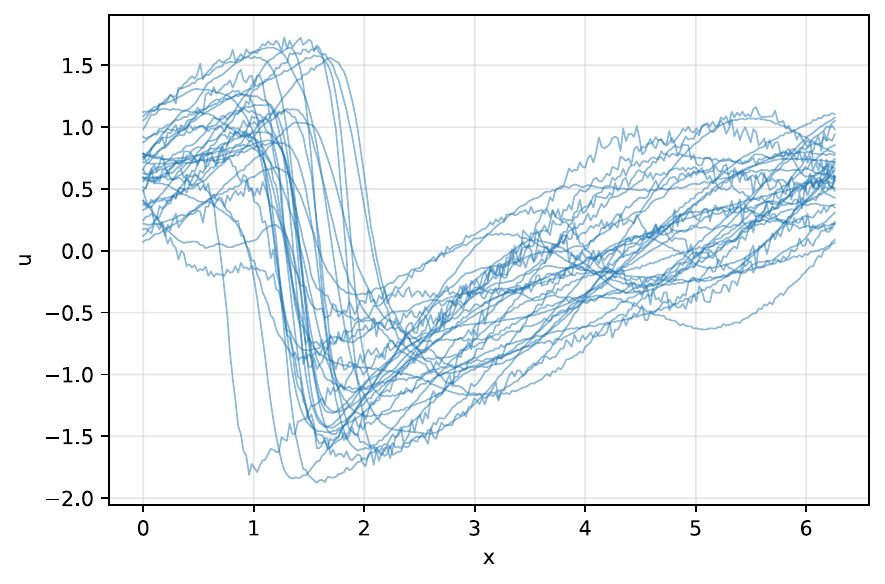}\\[0.35em]
        \includegraphics[width=\linewidth]{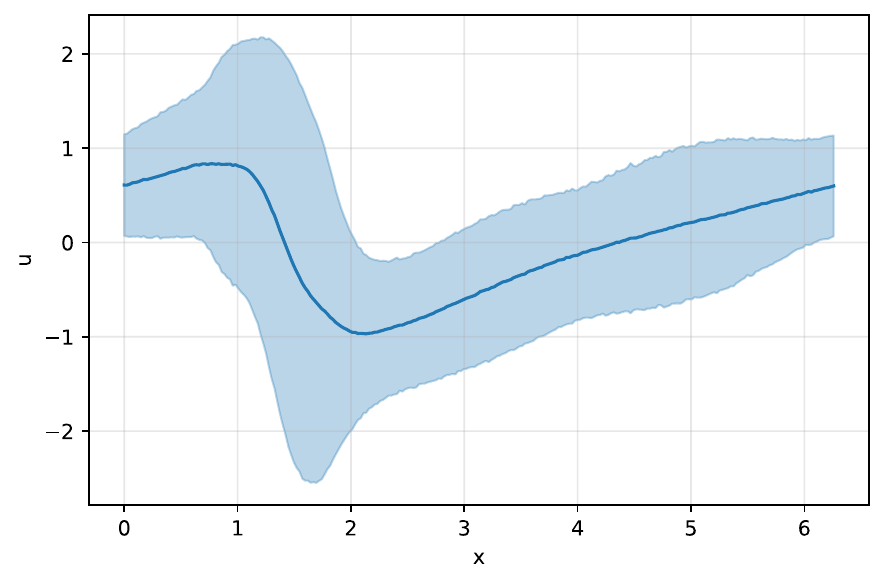}
        \caption{DM}
        \label{fig:burgers:5}
    \end{subfigure}
    \begin{subfigure}[t]{0.24\textwidth}
        \centering
        \includegraphics[width=\linewidth]{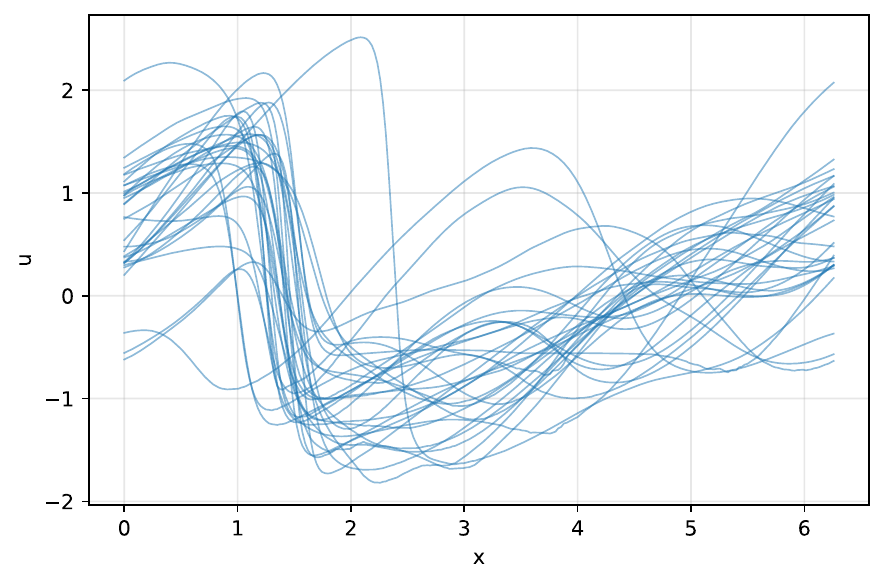}\\[0.35em]
        \includegraphics[width=\linewidth]{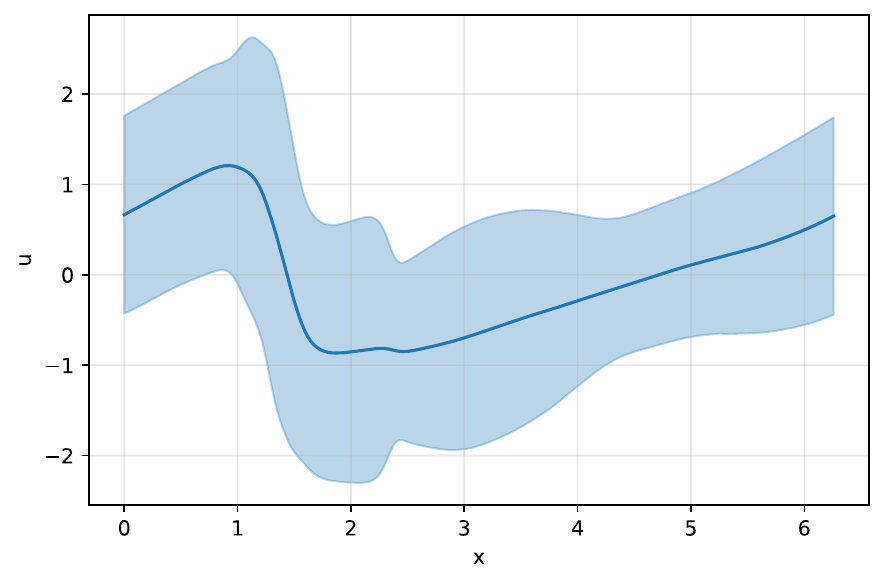}
        \caption{LDM}
        \label{fig:burgers:6}
    \end{subfigure}
    \begin{subfigure}[t]{0.24\textwidth}
        \centering
        \includegraphics[width=\linewidth]{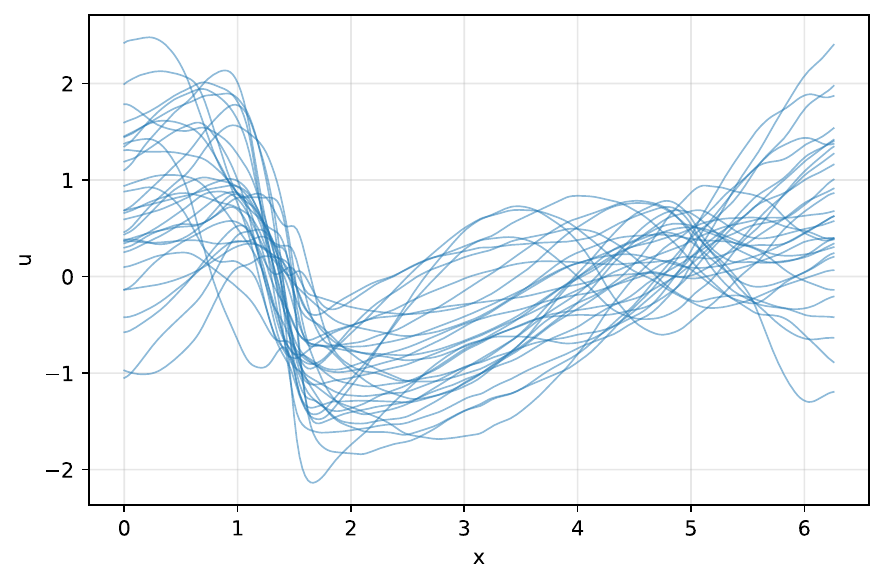}\\[0.35em]
        \includegraphics[width=\linewidth]{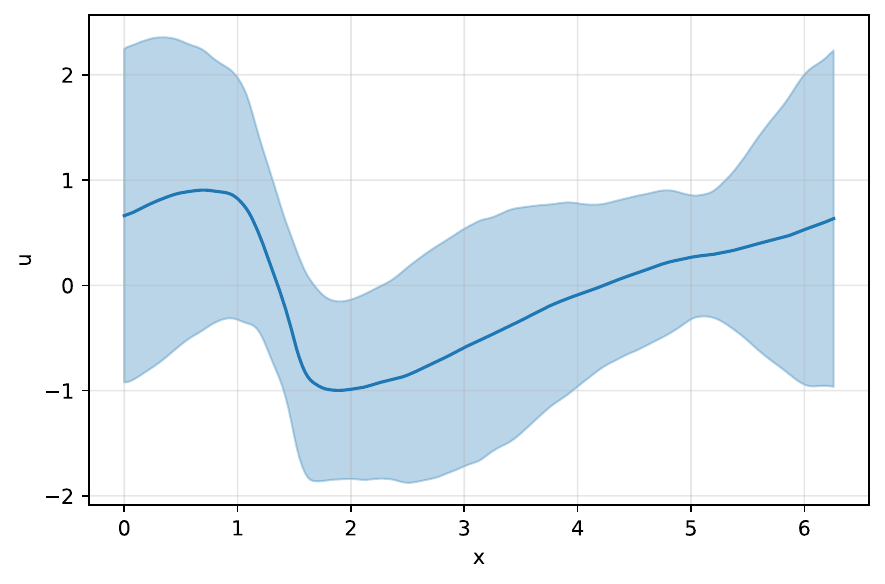}
        \caption{DLL}
        \label{fig:burgers:7}
    \end{subfigure}\hfill

    % right spacer
    \begin{subfigure}[t]{0.12\textwidth}\end{subfigure}

    \caption{Stochastic Burgers' equation. Columns compare the ground truth and different surrogate models. For each method, the top panel shows multiple realizations of the solution field $u(x)$ for a fixed input, illustrating sample diversity. The bottom panel shows the predictive mean (solid line) and an uncertainty band given by  standard deviation (shaded region) estimated from the samples.}

    \label{fig:burgers:main}
\end{figure*}

Figure~\ref{fig:burgers:main} qualitatively compares conditional predictions on the stochastic Burgers' equation. The ground truth column exhibits substantial sample-to-sample variability, which is also reflected in the nontrivial standard-deviation band. In contrast, the FNO family produces predictions that are either nearly deterministic or exhibit weakly structured dispersion: although the mean trend can be reasonable, the sampled trajectories concentrate around it and the resulting uncertainty bands are largely uninformative, indicating a failure to capture the conditional spread of the target distribution. Probabilistic baselines such as diffusion in pixel space and its latent variants generate diverse samples, but their variability is less consistently aligned with the ground truth heteroscedastic structure. Our DLL, by operating in an operator-encoder coefficient space, produces samples whose fluctuations follow the correct spatial dependence and yields uncertainty bands that closely match the ground truth, demonstrating substantially improved modeling of meaningful predictive spread.

\begin{figure*}[t]
    \centering

    % ================= Row 1: 1--4 =================
    \begin{subfigure}[t]{0.24\textwidth}
        \centering
        \includegraphics[width=\linewidth]{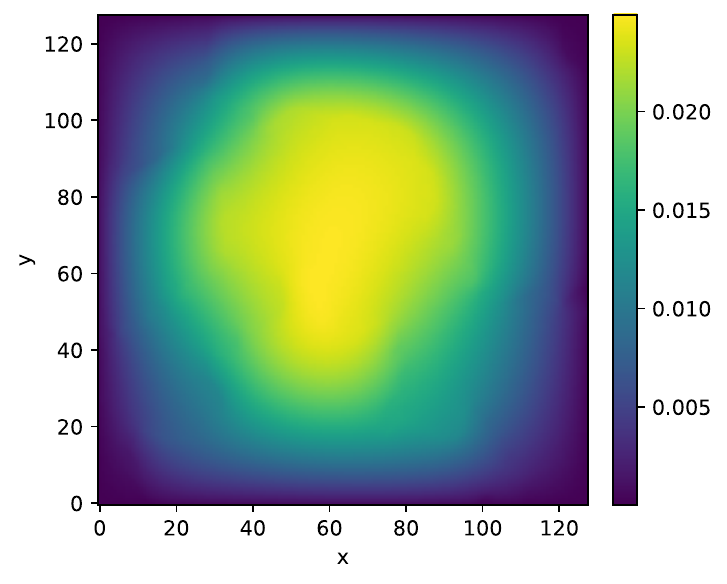}\\[0.35em]
        \includegraphics[width=\linewidth]{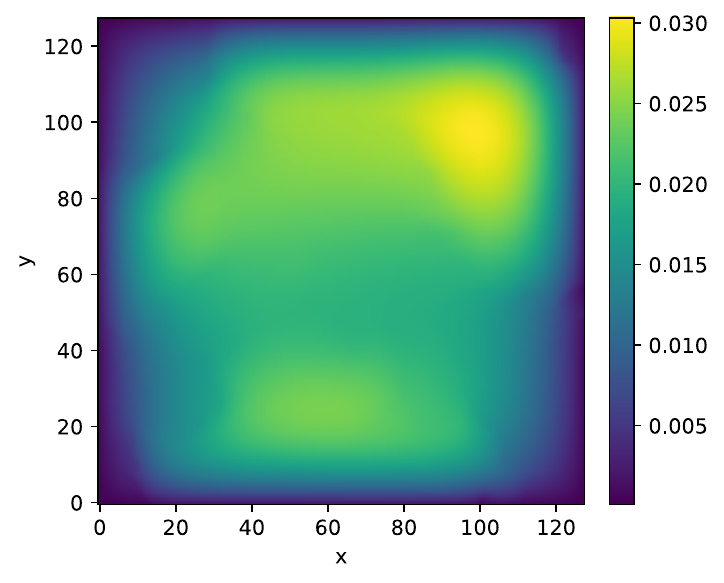}
        \caption{Ground truth}
        \label{fig:darcy:1}
    \end{subfigure}
    \begin{subfigure}[t]{0.24\textwidth}
        \centering
        \includegraphics[width=\linewidth]{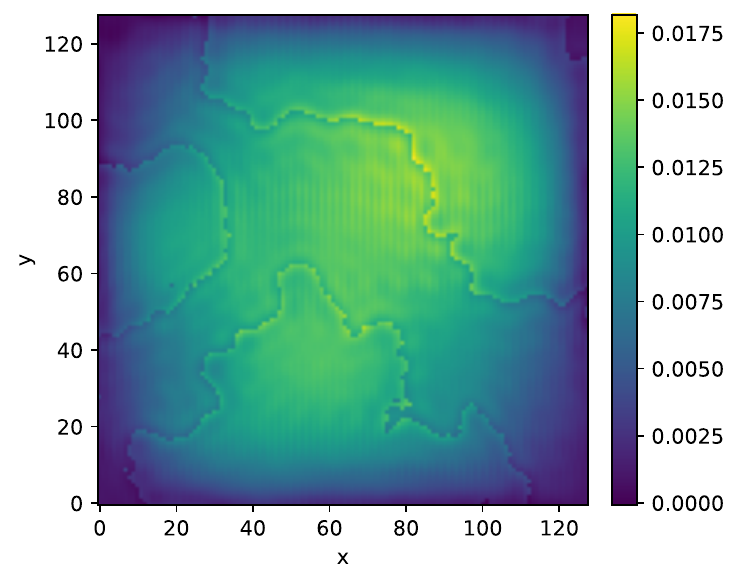}\\[0.35em]
        % reserve the missing "std" slot so the caption aligns with others
        \phantom{\includegraphics[width=\linewidth]{figs/darcy/01b_true_data_std.pdf}}
        \caption{FNO}
        \label{fig:darcy:2}
    \end{subfigure}
    \begin{subfigure}[t]{0.24\textwidth}
        \centering
        \includegraphics[width=\linewidth]{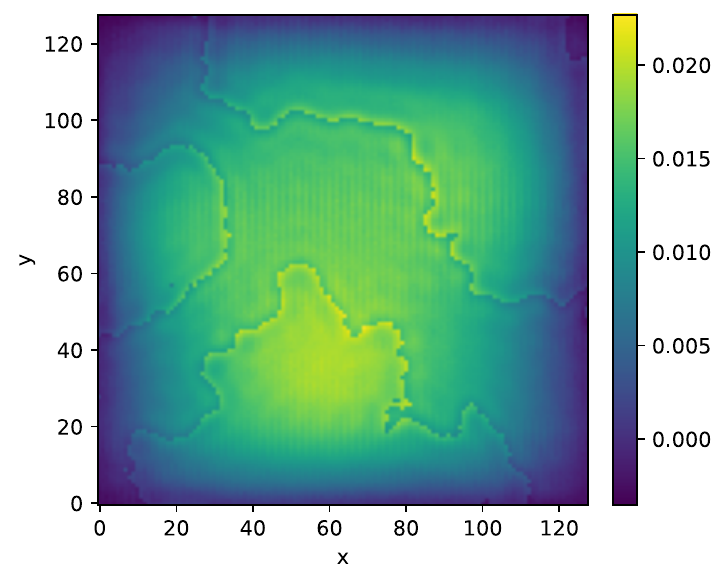}\\[0.35em]
        \includegraphics[width=\linewidth]{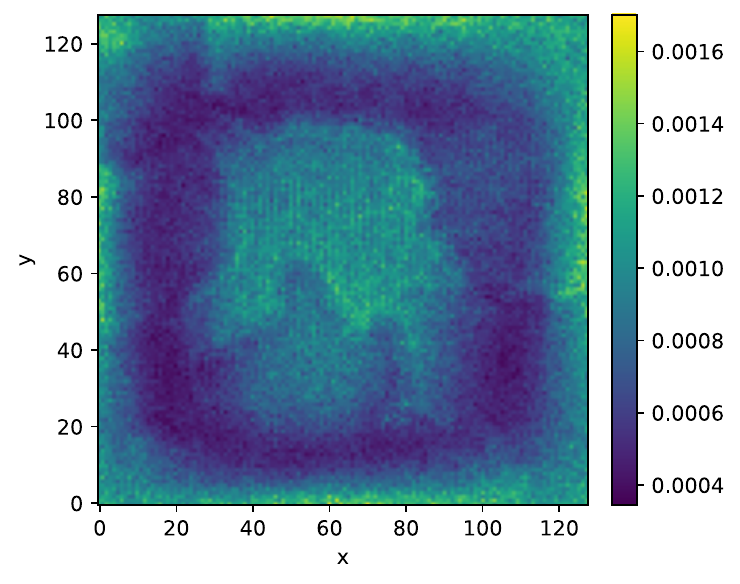}
        \caption{FNO-d}
        \label{fig:darcy:3}
    \end{subfigure}
    \begin{subfigure}[t]{0.24\textwidth}
        \centering
        \includegraphics[width=\linewidth]{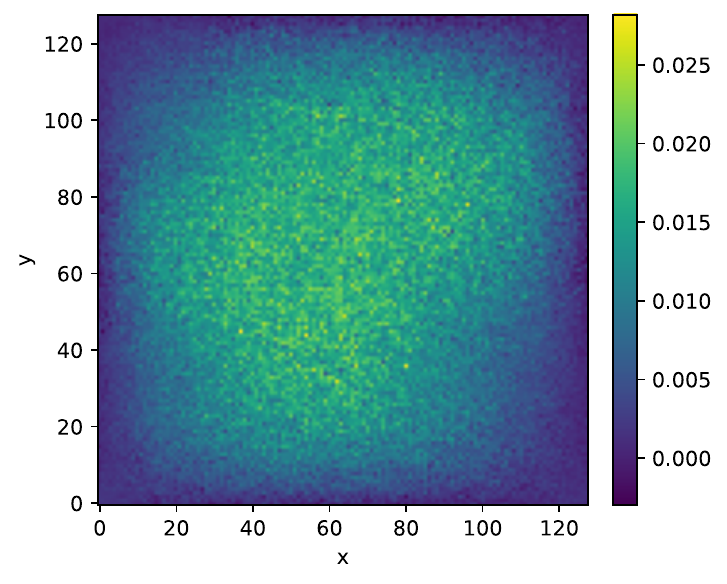}\\[0.35em]
        \includegraphics[width=\linewidth]{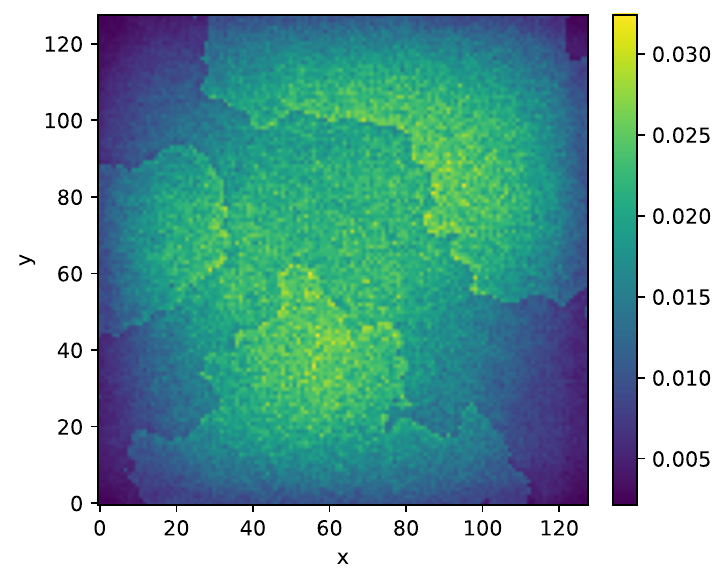}
        \caption{PNO}
        \label{fig:darcy:4}
    \end{subfigure}

    \vspace{0.8em}

    % ================= Row 2: 5--7 (same width as row 1) =================
    \begin{subfigure}[t]{0.12\textwidth}\end{subfigure}\hfill

    \begin{subfigure}[t]{0.24\textwidth}
        \centering
        \includegraphics[width=\linewidth]{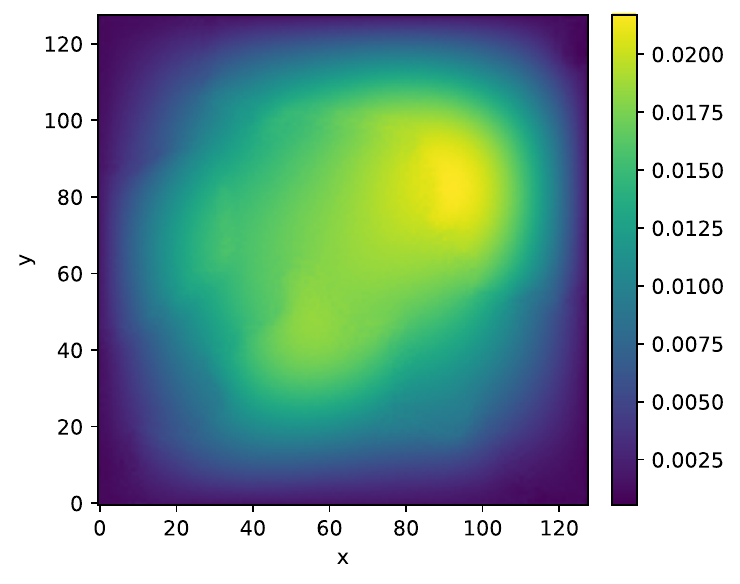}\\[0.35em]
        \includegraphics[width=\linewidth]{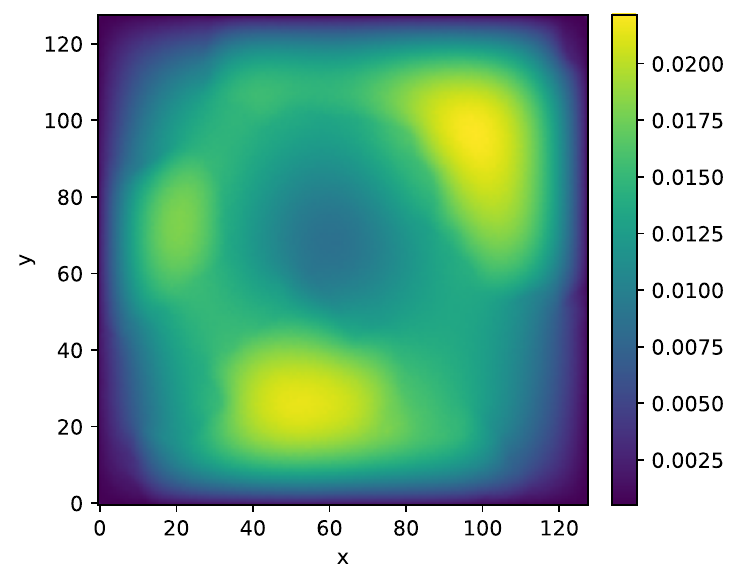}
        \caption{DM}
        \label{fig:darcy:5}
    \end{subfigure}
    \begin{subfigure}[t]{0.24\textwidth}
        \centering
        \includegraphics[width=\linewidth]{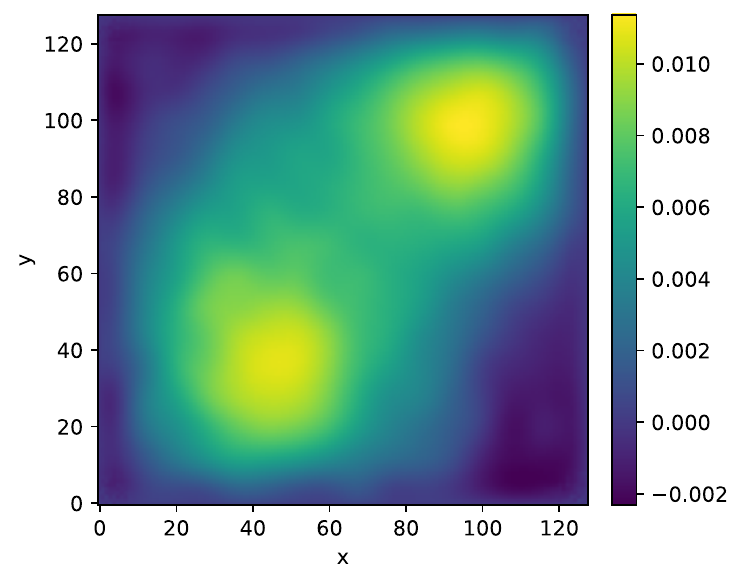}\\[0.35em]
        \includegraphics[width=\linewidth]{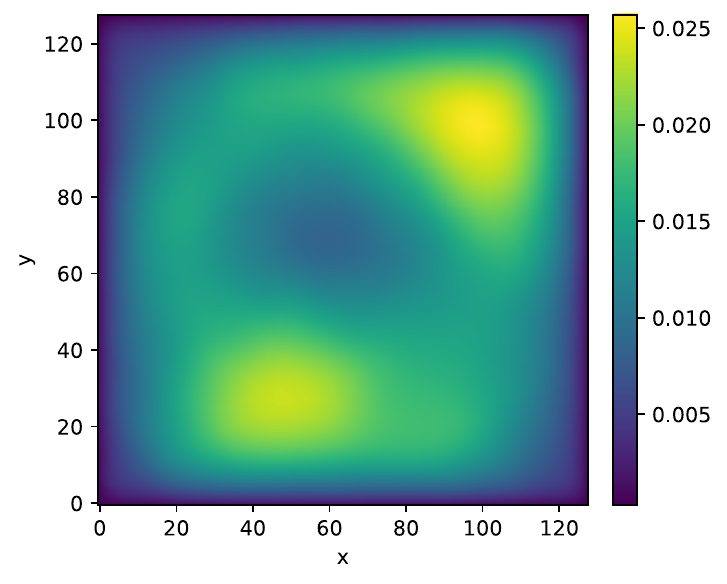}
        \caption{LDM}
        \label{fig:darcy:6}
    \end{subfigure}
    \begin{subfigure}[t]{0.24\textwidth}
        \centering
        \includegraphics[width=\linewidth]{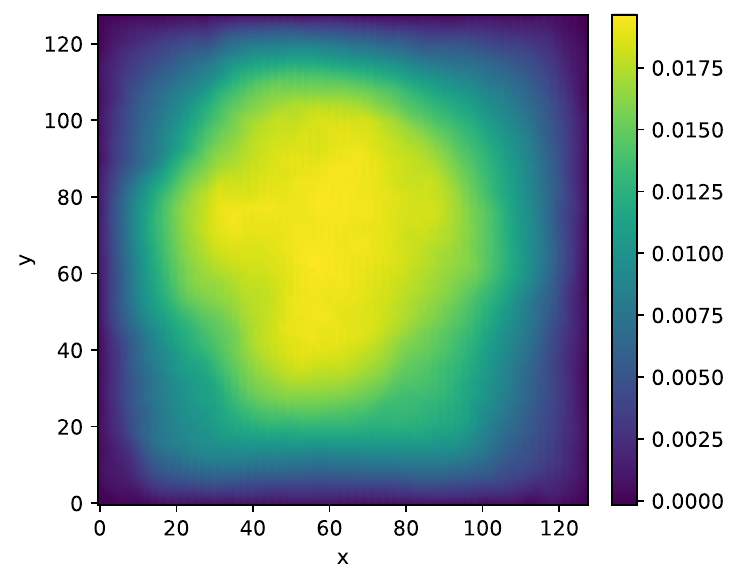}\\[0.35em]
        \includegraphics[width=\linewidth]{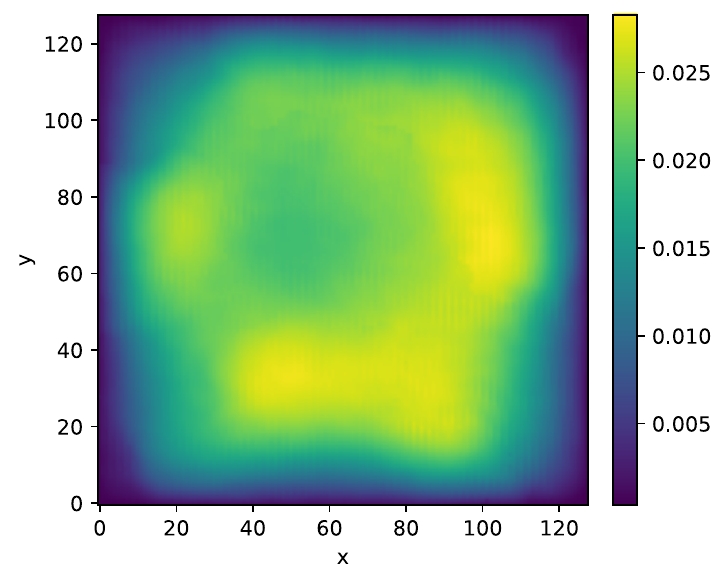}
        \caption{DLL}
        \label{fig:darcy:7}
    \end{subfigure}\hfill

    \begin{subfigure}[t]{0.12\textwidth}\end{subfigure}

    \caption{Stochastic Darcy flow. For each method, we generate conditional samples of the solution field and summarize them by the sample mean (top) and per-pixel sample standard deviation (bottom). This visualization highlights both accuracy of the central prediction and the spatial structure of predictive uncertainty.}
    \label{fig:darcy:main}
\end{figure*}

Figure~\ref{fig:darcy:main} presents a qualitative comparison on the stochastic Darcy flow benchmark, reporting the predictive mean (top row) and the predictive standard deviation (bottom row) for a representative test case. The ground truth exhibits spatially structured variability, indicating that uncertainty is strongly heterogeneous across the domain. In contrast, the FNO-based approaches, including deterministic FNO, dropout-augmented FNO, and PNO, do not recover a meaningful uncertainty structure. In our experiments these baselines also tend to overfit, and model selection based on the best validation checkpoint often returns early, non-converged states whose mean and variance maps are qualitatively inconsistent with the target statistics. By comparison, DLL produces both a mean field that matches the large-scale solution structure and a standard-deviation map that aligns with the ground truth spatial pattern. This indicates that DLL successfully learns distributional information through explicit conditional generative modeling in the operator-encoder coefficient space, rather than relying on implicit or weak stochasticity in the backbone.

\subsection{Autoregressive Rollouts} \label{app:fig_rollouts}

\begin{figure*}[t]
    \centering

    % ================= Row 1: 1--3 =================
    \begin{subfigure}[t]{0.32\textwidth}
        \centering
        \includegraphics[width=\linewidth]{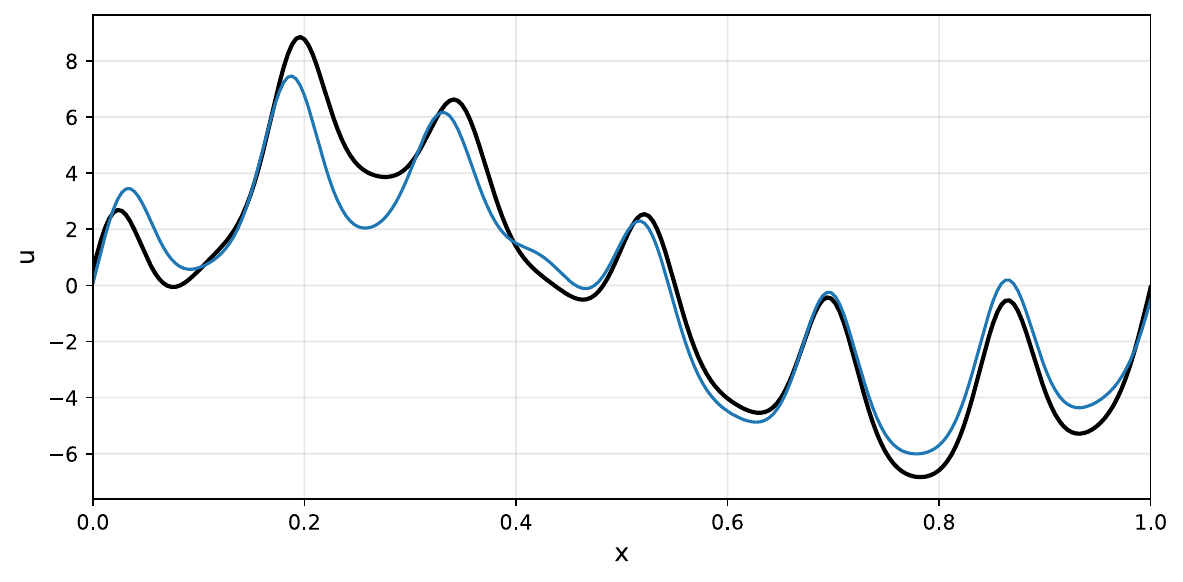}
        \caption{FNO}
        \label{fig:ks:1}
    \end{subfigure}\hfill
    \begin{subfigure}[t]{0.32\textwidth}
        \centering
        \includegraphics[width=\linewidth]{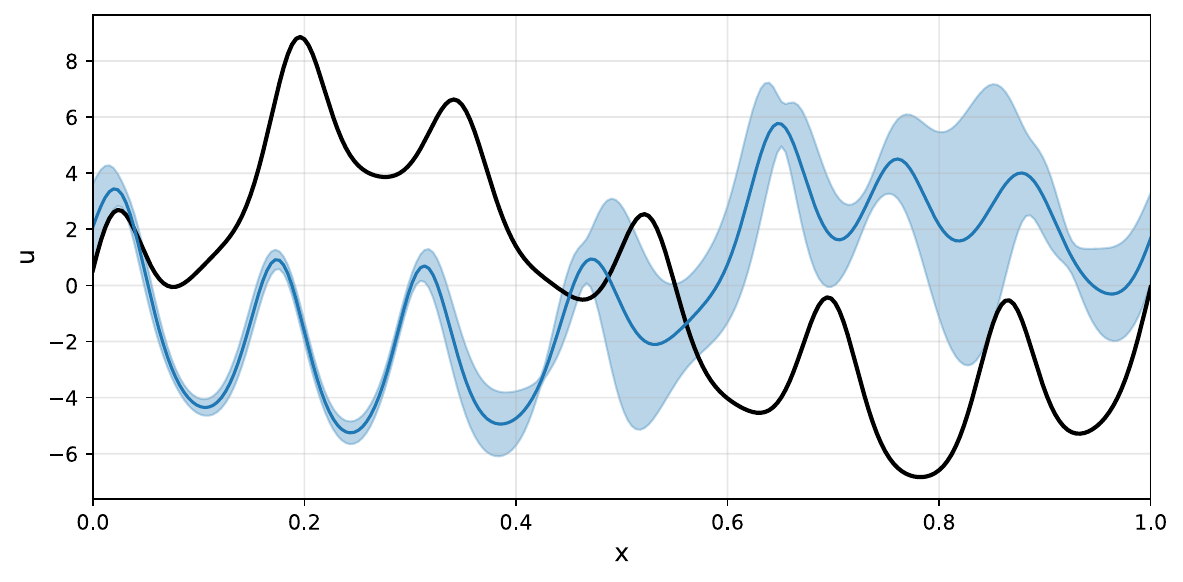}
        \caption{FNO-d}
        \label{fig:ks:2}
    \end{subfigure}\hfill
    \begin{subfigure}[t]{0.32\textwidth}
        \centering
        \includegraphics[width=\linewidth]{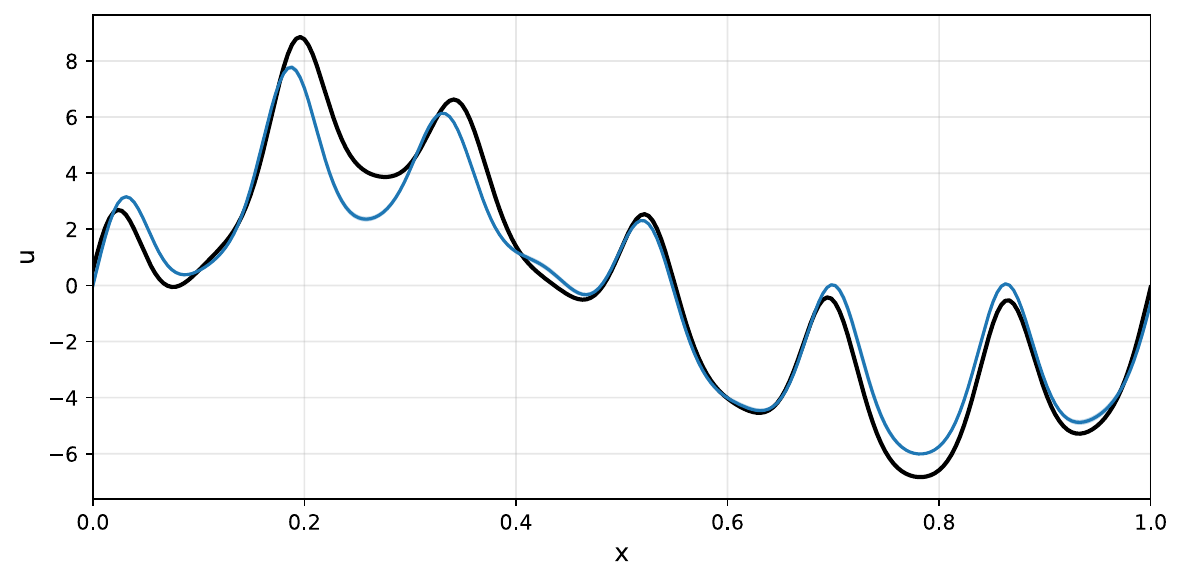}
        \caption{PNO}
        \label{fig:ks:3}
    \end{subfigure}

    \vspace{0.8em}

    % ================= Row 2: 4--6 =================
    \begin{subfigure}[t]{0.32\textwidth}
        \centering
        \includegraphics[width=\linewidth]{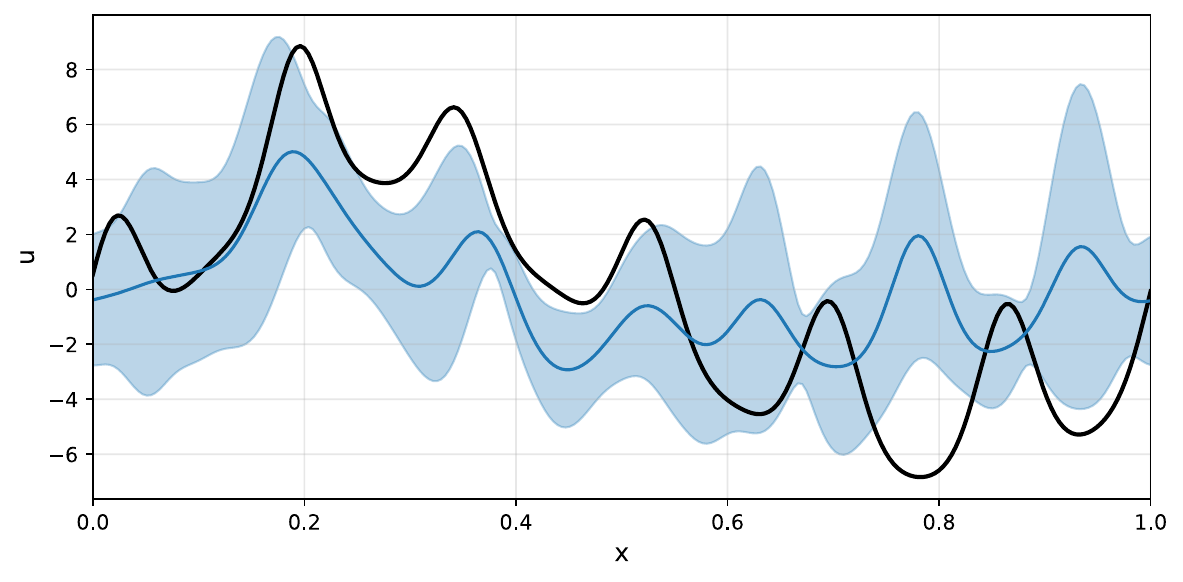}
        \caption{DM}
        \label{fig:ks:4}
    \end{subfigure}\hfill
    \begin{subfigure}[t]{0.32\textwidth}
        \centering
        \includegraphics[width=\linewidth]{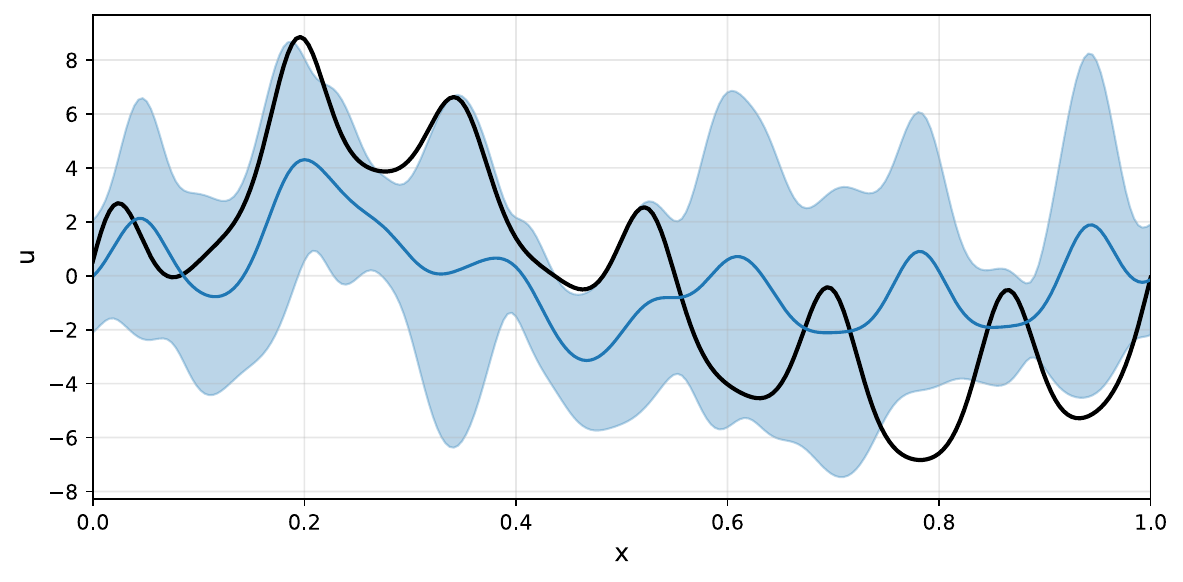}
        \caption{LDM}
        \label{fig:ks:5}
    \end{subfigure}\hfill
    \begin{subfigure}[t]{0.32\textwidth}
        \centering
        \includegraphics[width=\linewidth]{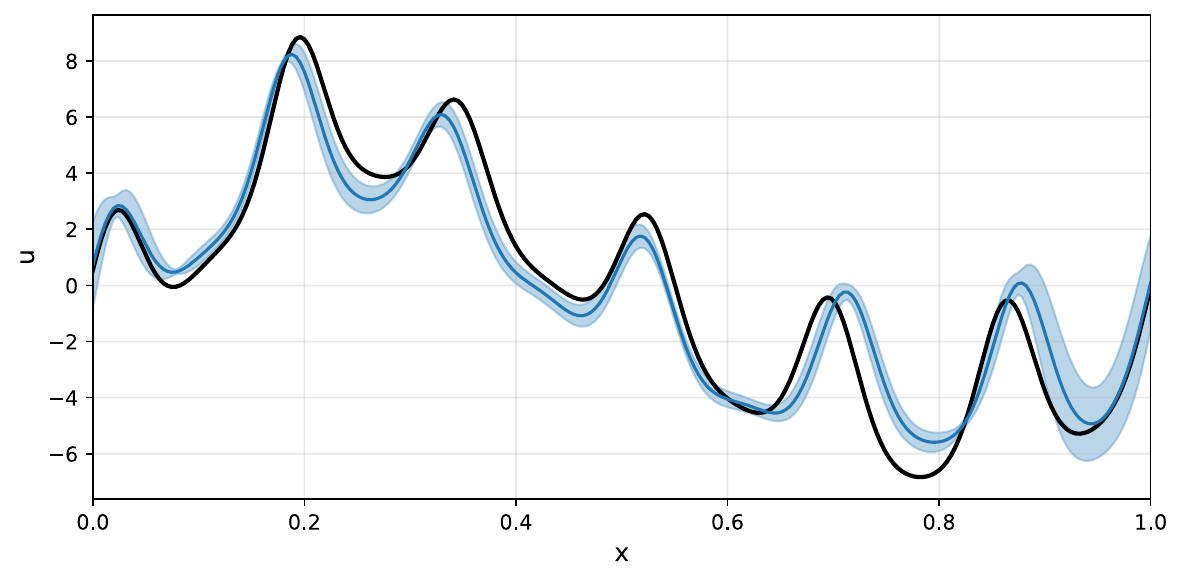}
        \caption{DLL}
        \label{fig:ks:6}
    \end{subfigure}

    \caption{KS equation. Long horizon rollout comparison at rollout step 50. The black curve denotes the ground truth solution, while the blue curve shows the predictive mean of each method. Shaded regions indicate predictive standard deviation estimated from generated samples.}

    \label{fig:ks:main}
\end{figure*}

Figure~\ref{fig:ks:main} compares long horizon rollout predictions for the KS system at the $50$th step, overlaying the ground truth (black) with each method’s predictive mean (blue) and a standard deviation band computed from generated samples (shaded). While FNO and PNO remain reasonably accurate in terms of the mean trajectory, their uncertainty bands are systematically too narrow relative to the observed mismatch to the ground truth, indicating overconfident uncertainty estimates under chaotic error amplification. Diffusion-based baselines yield wider spreads but may sacrifice mean fidelity. In contrast, DLL preserves competitive accuracy while producing a visibly more informative spread around the mean, better reflecting rollout uncertainty and providing more credible uncertainty quantification in this long horizon regime.

\begin{figure*}[t]
    \centering

    % ================= Row 1: 0--3 =================
    \begin{subfigure}[t]{0.23\textwidth}
        \centering
        \includegraphics[width=\linewidth]{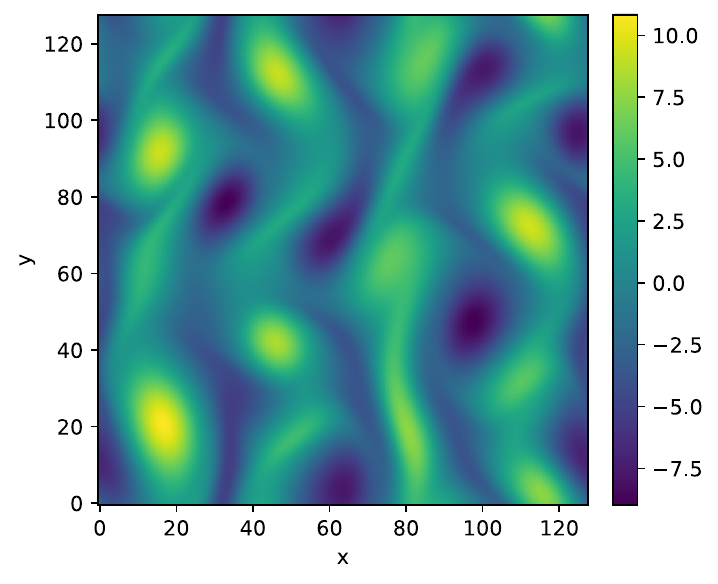}\\[0.35em]
        \phantom{\includegraphics[width=\linewidth]{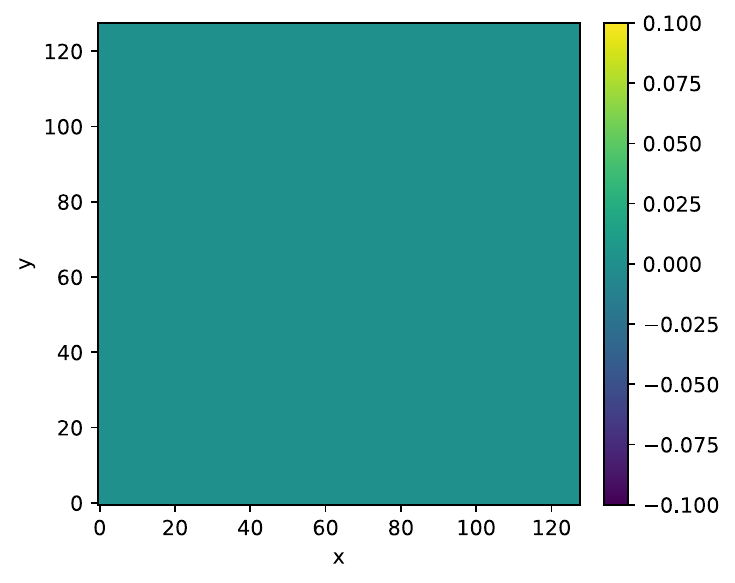}}\\[0.35em]
        \phantom{\includegraphics[width=\linewidth]{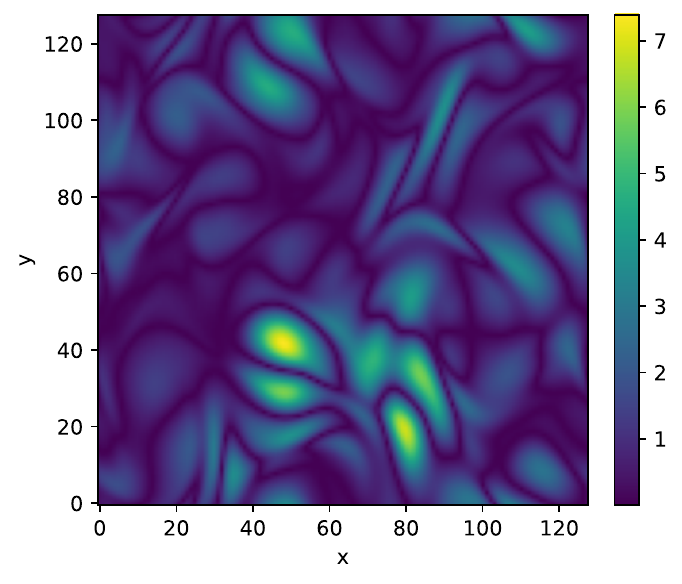}}
        \caption{Ground truth}
        \label{fig:kmflow:0}
    \end{subfigure}
    \begin{subfigure}[t]{0.23\textwidth}
        \centering
        \includegraphics[width=\linewidth]{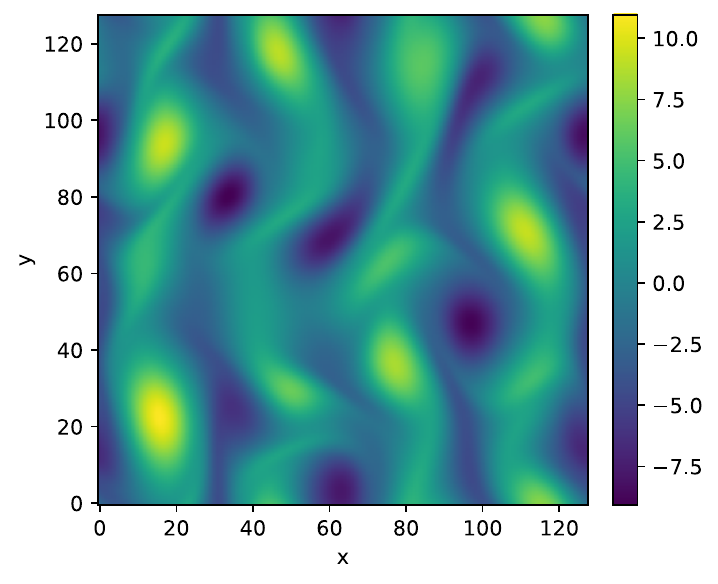}\\[0.35em]
        \phantom{\includegraphics[width=\linewidth]{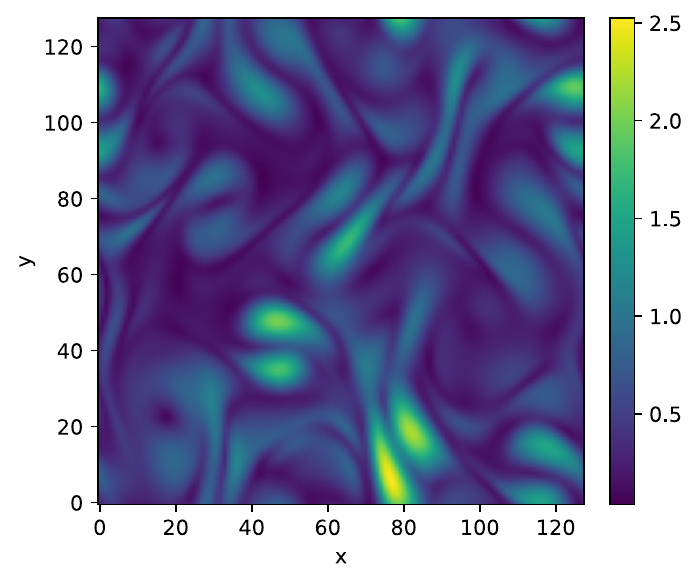}}\\[0.35em]
        \includegraphics[width=\linewidth]{figs/kmflow/01c_fno_rmse.pdf}
        \caption{FNO}
        \label{fig:kmflow:1}
    \end{subfigure}
    \begin{subfigure}[t]{0.23\textwidth}
        \centering
        \includegraphics[width=\linewidth]{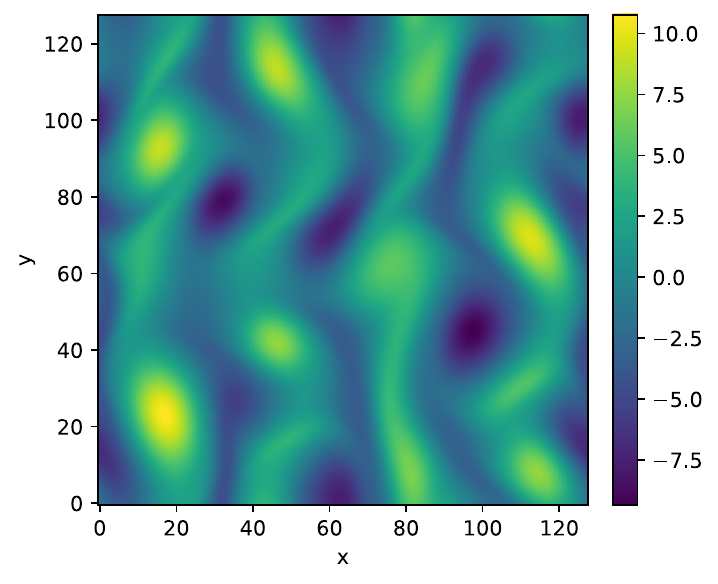}\\[0.35em]
        \includegraphics[width=\linewidth]{figs/kmflow/02b_fno_dropout_std.pdf}\\[0.35em]
        \includegraphics[width=\linewidth]{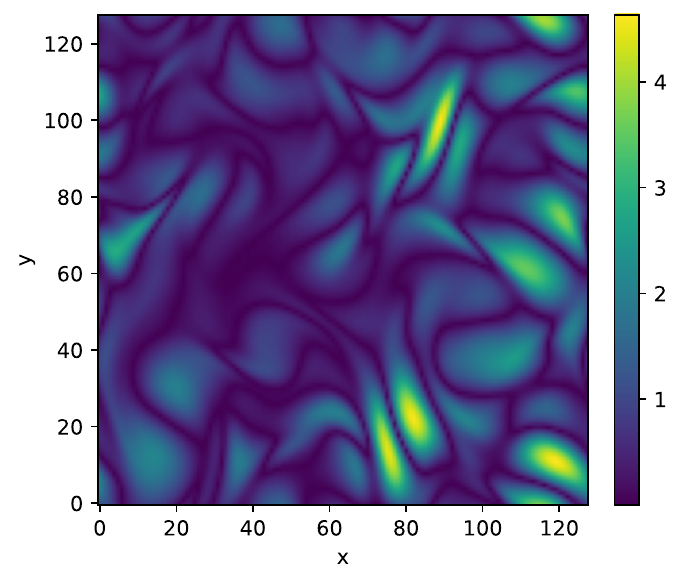}
        \caption{FNO-d}
        \label{fig:kmflow:2}
    \end{subfigure}
    \begin{subfigure}[t]{0.23\textwidth}
        \centering
        \includegraphics[width=\linewidth]{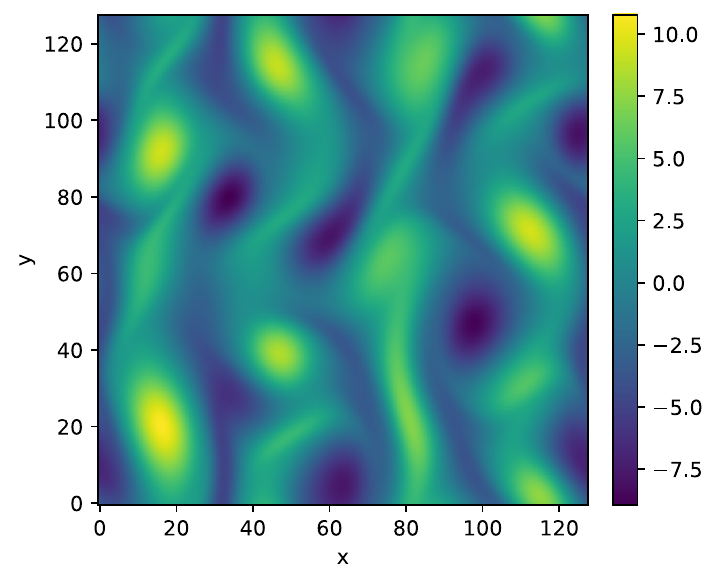}\\[0.35em]
        \includegraphics[width=\linewidth]{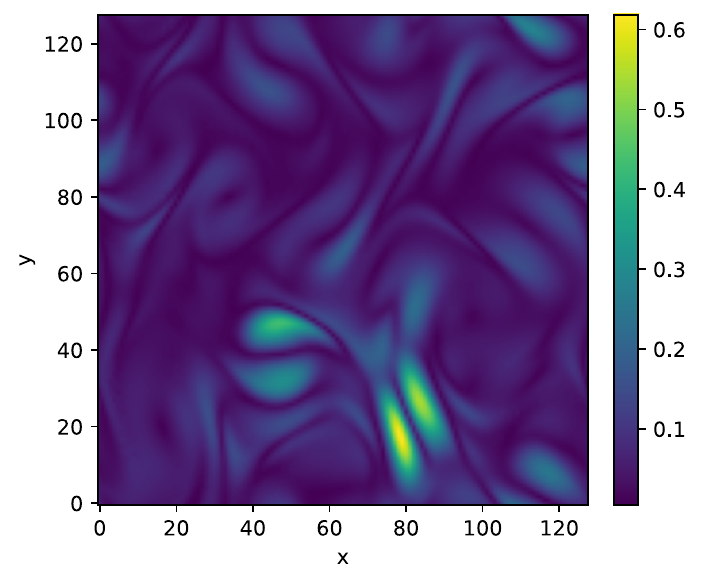}\\[0.35em]
        \includegraphics[width=\linewidth]{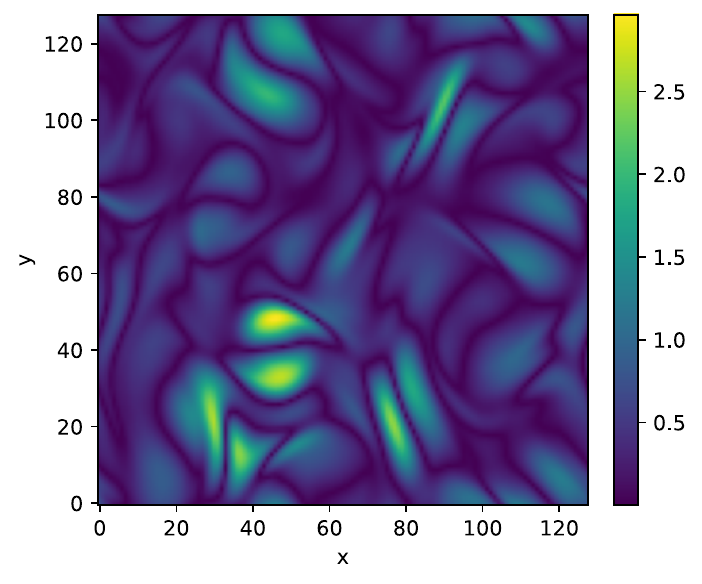}
        \caption{PNO}
        \label{fig:kmflow:3}
    \end{subfigure}

    \vspace{0.8em}

    % ================= Row 2: 4--6 (same width as row 1) =================
    \begin{subfigure}[t]{0.12\textwidth}\end{subfigure}\hfill

    \begin{subfigure}[t]{0.23\textwidth}
        \centering
        \includegraphics[width=\linewidth]{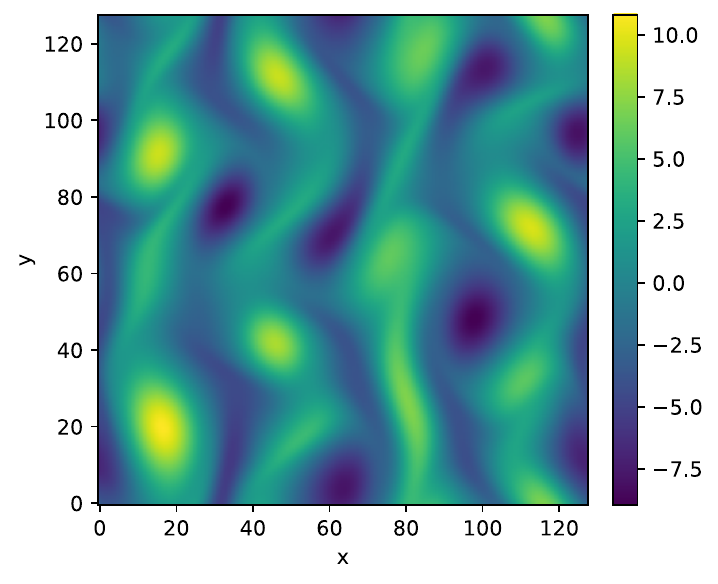}\\[0.35em]
        \includegraphics[width=\linewidth]{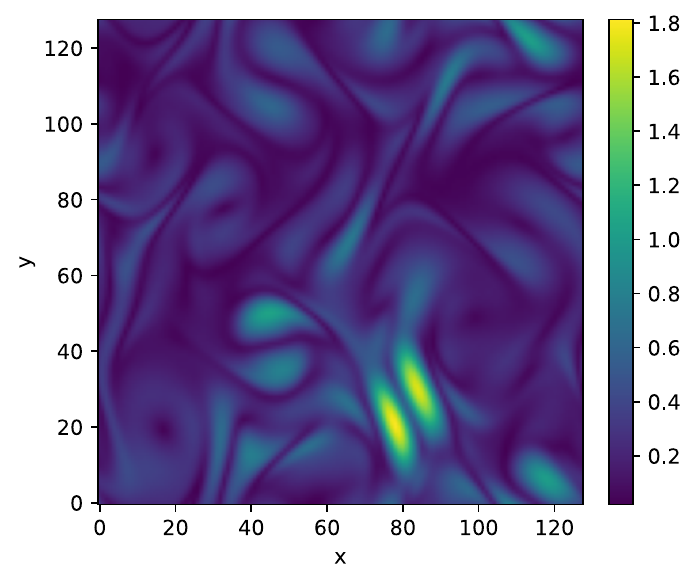}\\[0.35em]
        \includegraphics[width=\linewidth]{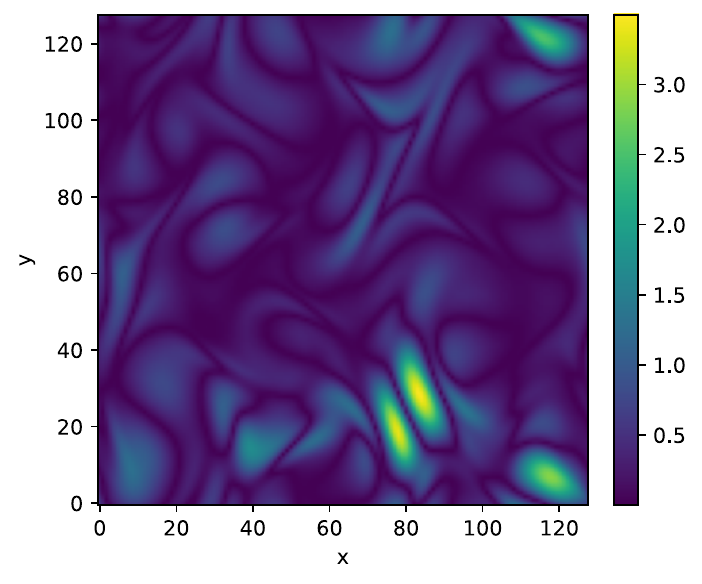}
        \caption{DM}
        \label{fig:kmflow:4}
    \end{subfigure}
    \begin{subfigure}[t]{0.23\textwidth}
        \centering
        \includegraphics[width=\linewidth]{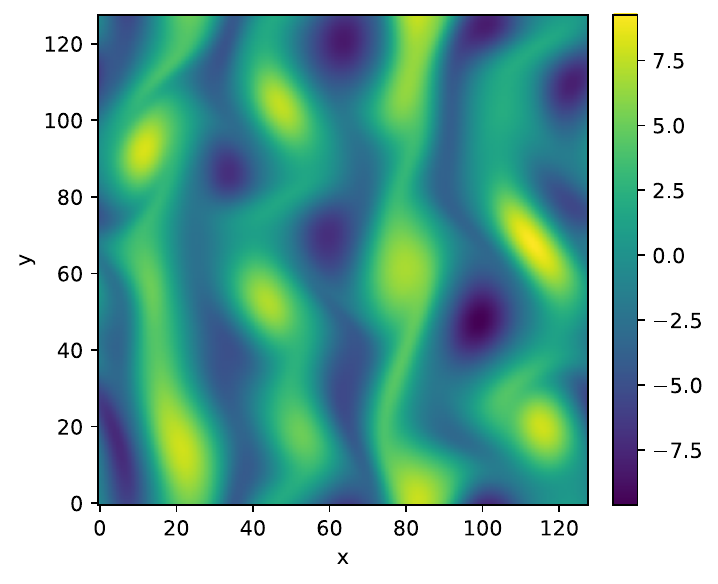}\\[0.35em]
        \includegraphics[width=\linewidth]{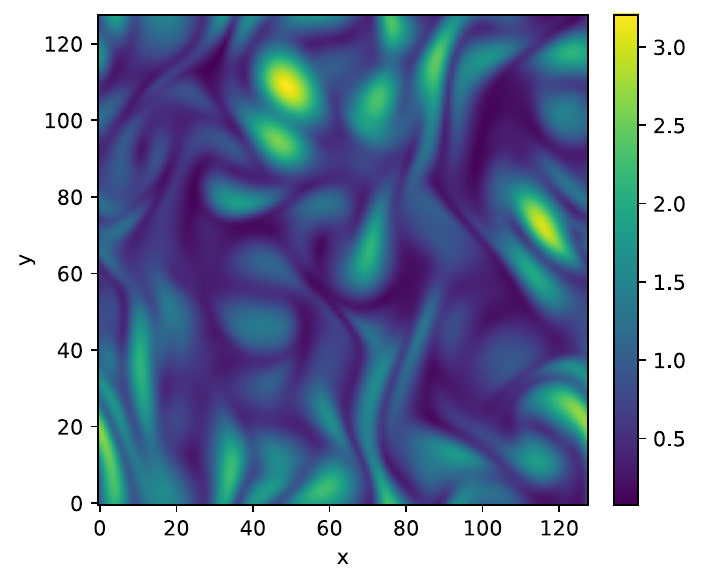}\\[0.35em]
        \includegraphics[width=\linewidth]{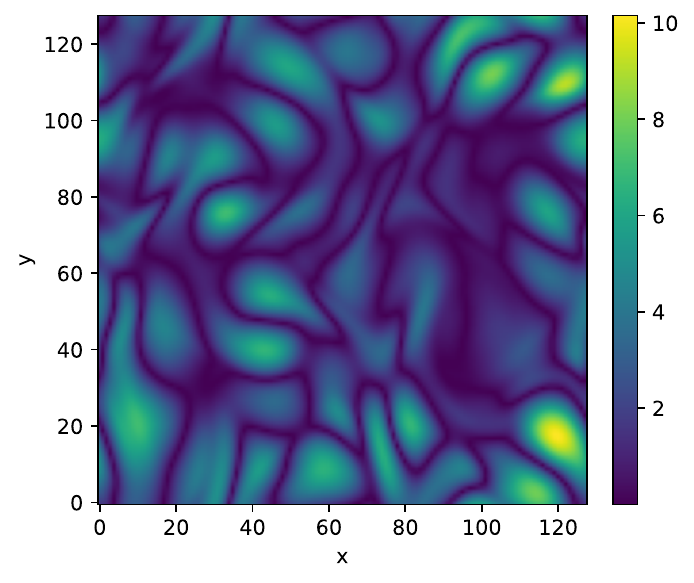}
        \caption{LDM}
        \label{fig:kmflow:5}
    \end{subfigure}
    \begin{subfigure}[t]{0.23\textwidth}
        \centering
        \includegraphics[width=\linewidth]{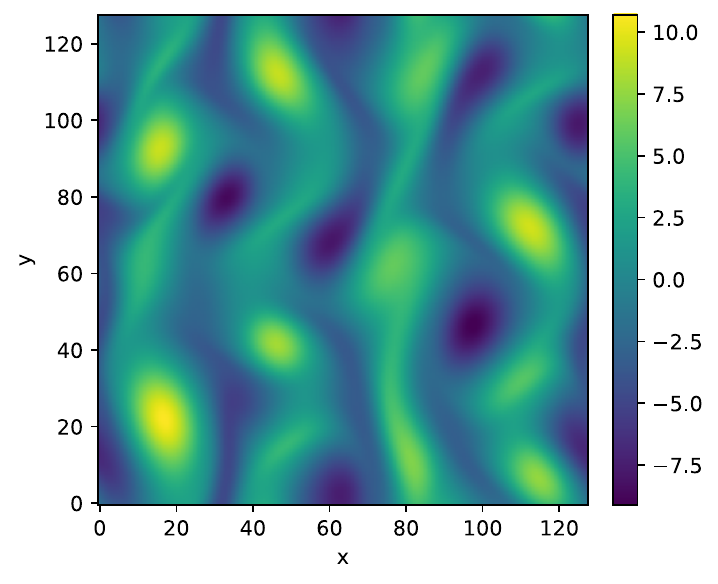}\\[0.35em]
        \includegraphics[width=\linewidth]{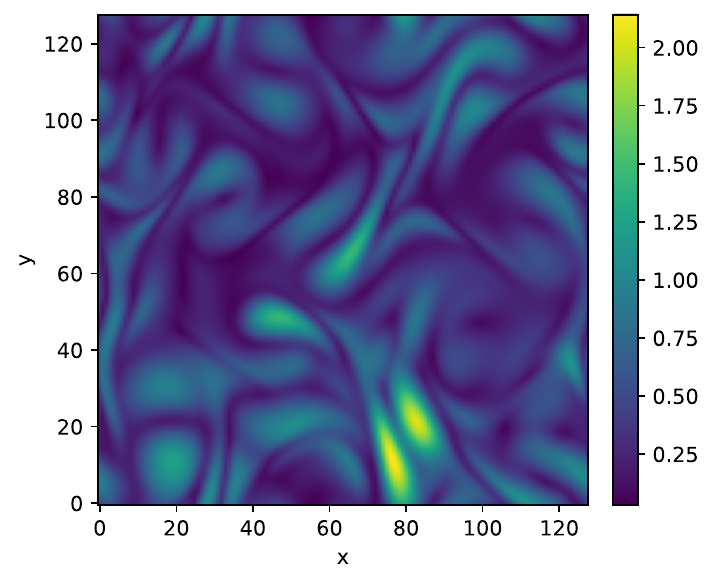}\\[0.35em]
        \includegraphics[width=\linewidth]{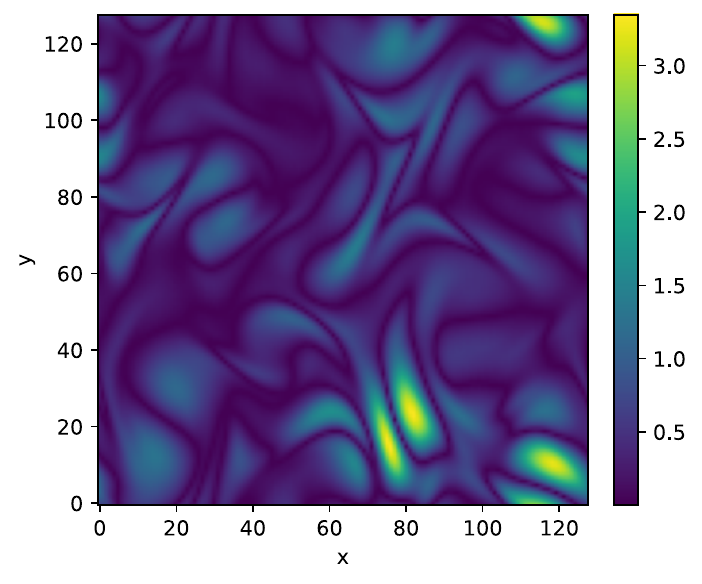}
        \caption{DLL}
        \label{fig:kmflow:6}
    \end{subfigure}\hfill

    \begin{subfigure}[t]{0.12\textwidth}\end{subfigure}

    \caption{Kolmogorov flow. Rollout evaluation at the $50$th step. Rows show the predictive mean (top), predictive standard deviation (middle), and the pointwise absolute error (bottom) for a representative test case across baselines and DLL.}

    \label{fig:kmflow:main}
\end{figure*}

Figure~\ref{fig:kmflow:main} reports qualitative rollout results for Kolmogorov flow at the $50$th prediction step, showing the predictive mean (top), predictive standard deviation (middle), and the pointwise error for the same test case (bottom). A key observation is that diffusion-based generative surrogates produce uncertainty maps that meaningfully track where the rollout is difficult: regions with larger pointwise error tend to coincide with elevated predictive standard deviation. This correlation is visible for both pixel space and latent diffusion baselines and is especially clear for DLL, whose uncertainty highlights the same coherent structures that dominate the error field while maintaining a competitive mean prediction. In contrast, non-generative baselines often yield weakly structured or poorly aligned uncertainty patterns, suggesting that explicit generative modeling is important for producing uncertainty estimates that reflect rollout error under long horizon dynamics.

\begin{figure*}[t]
    \centering

    % ================= Row 1: KS =================
    \includegraphics[width=0.32\textwidth]{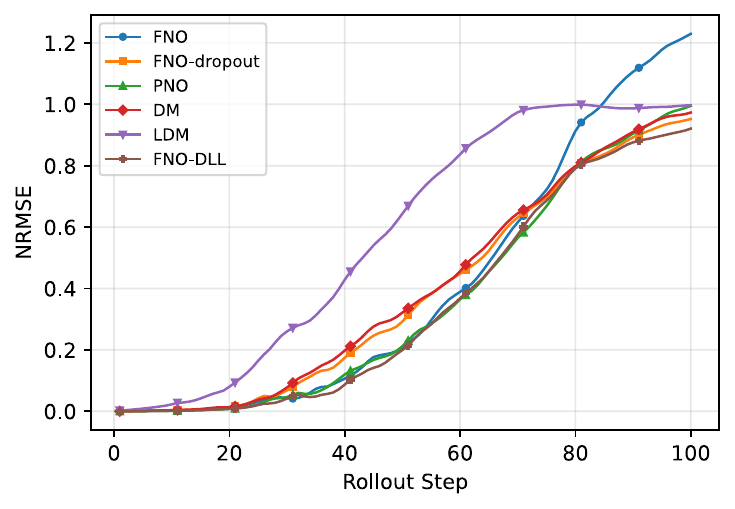}\hfill
    \includegraphics[width=0.32\textwidth]{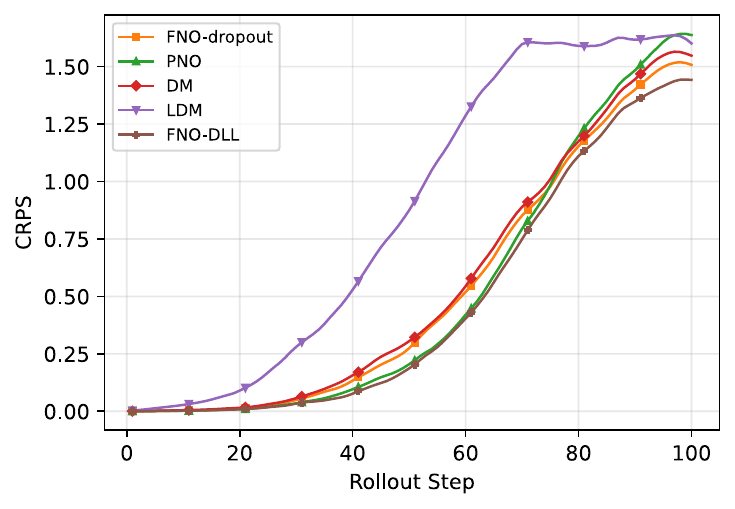}\hfill
    \includegraphics[width=0.32\textwidth]{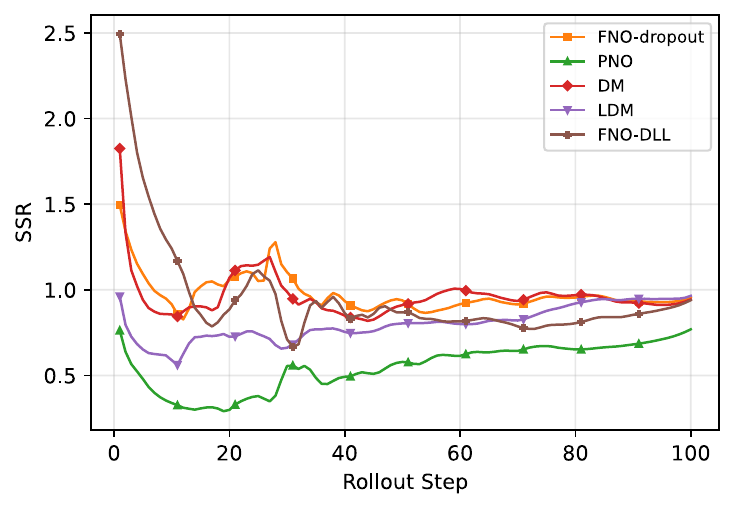}

    \vspace{0.6em}

    % ================= Row 2: KMFlow =================
    \includegraphics[width=0.32\textwidth]{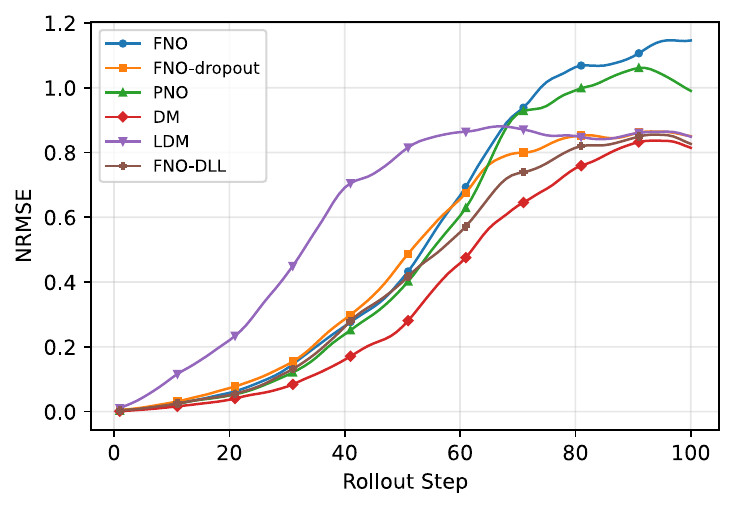}\hfill
    \includegraphics[width=0.32\textwidth]{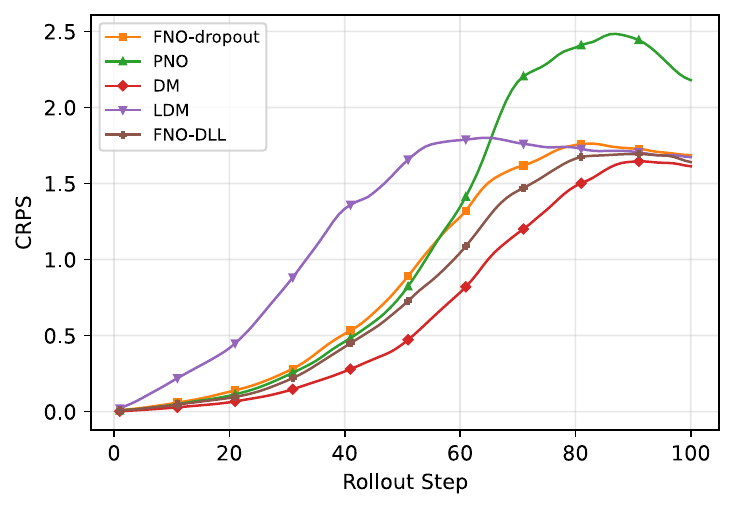}\hfill
    \includegraphics[width=0.32\textwidth]{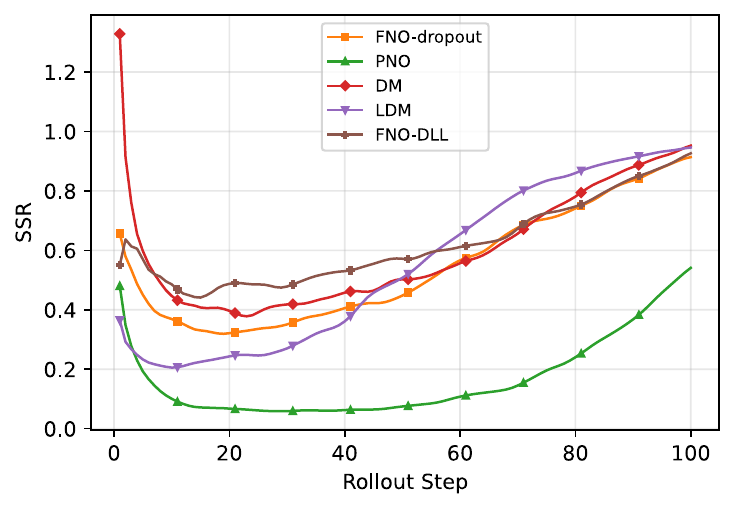}

    \caption{Rollout evaluation on KS (top row) and Kolmogorov flow (bottom row). We report NRMSE (left), CRPS (middle), and SSR (right) as a function of rollout step for baselines and DLL, illustrating long horizon accuracy and predictive spread.}

    \label{fig:rollout:metrics}
\end{figure*}

Figure~\ref{fig:rollout:metrics} summarizes long horizon rollout performance on KS (top) and Kolmogorov flow (bottom) using complementary accuracy and uncertainty metrics. NRMSE (left) and CRPS (middle) generally increase with rollout step, reflecting error accumulation and growing distributional mismatch in chaotic dynamics.

%%%%%%%%%%%%%%%%%%%%%%%%%%%%%%%%%%%%%%%%%%%%%%%%%%%%%%%%%%%%%%%%%%%%%%%%%%%%%%%
%%%%%%%%%%%%%%%%%%%%%%%%%%%%%%%%%%%%%%%%%%%%%%%%%%%%%%%%%%%%%%%%%%%%%%%%%%%%%%%

\end{document}